\documentclass[a4paper,11pt]{article}

\usepackage{amsmath,amsfonts,amssymb,caption,graphicx}
\usepackage{ushort}
\usepackage{makeidx}
\usepackage{float}
\usepackage{accents}
\usepackage{color}
\usepackage{framed}
\usepackage{versions}
\usepackage{xcolor}
\usepackage{emptypage}
\usepackage{array}
\usepackage{multirow}
 \usepackage{amsthm}
\usepackage{subcaption}

\usepackage{amsthm}

\usepackage{wrapfig}
\usepackage{booktabs}       
\def\hmath$#1${\texorpdfstring{{\rmfamily\textit{#1}}}{#1}}

\usepackage{tikz}
\usetikzlibrary{trees}
\newtheorem{theorem}{Theorem}
\newtheorem{lemma}{Lemma}
\newtheorem{corollary}{Corollary}
\newtheorem{proposition}{Proposition}
\usepackage[bookmarks]{hyperref}
\hypersetup{
    colorlinks=true,   	
    linkcolor=black,      
    citecolor = [rgb]{0 0.7 0},   	
    filecolor=magenta, 	
    urlcolor=blue
}
 
\input{epsf}

\DeclareMathOperator{\sign}{sign}

\newcommand{\Prob}{\mathbb{P}}
\newcommand{\R}{\mathbb{R}}

\newcommand{\Nopt}{N^*_{K,d}}

\newcommand{\normal}{\mathcal{N}}

\newtheorem{ques}[]{Question}

\usepackage{pgfplots}

\usepackage{xurl}

\title{Fundamental limits of distributed multiclass classification from simple binary decisions}

 \author
 {
 	 Ioannis Papageorgiou
     \thanks{ School of Mathematics, University of Bristol.
                 Email: \texttt{\href{tf24342@bristol.ac.uk}%
			 {tf24342@bristol.ac.uk}}.
 	 }
  \and
     Srinivas Nomula
     \thanks{ Department of Electrical Communication
Engineering, Indian Institute of Science.
                 Email: \texttt{\href{sivasrinivas@iisc.ac.in}%
			 { sivasrinivas@iisc.ac.in}}.
         }
  \and
     Ayalvadi Ganesh 
     \thanks{School of Mathematics, University of Bristol.
                 Email: \texttt{\href{A.Ganesh@bristol.ac.uk}%
			 { A.Ganesh@bristol.ac.uk}}.
         } 
  \and
\mbox{}\\[-2ex]       
  \and 
      Sidharth Jaggi
     \thanks{School of Mathematics, University of Bristol.
                 Email: \texttt{\href{sid.jaggi@bristol.ac.uk}%
			 { sid.jaggi@bristol.ac.uk}}.
         }    
  \and
     Parimal Parag
     \thanks{Department of Electrical Communication
Engineering, Indian Institute of Science.
                 Email: \texttt{\href{parimal@iisc.ac.in}%
			 { parimal@iisc.ac.in}}.
         }    
 }

\begin{document}
\maketitle

\begin{center}
    \large \textbf{Abstract}
\end{center}
We consider the problem of constructing a $K$-class classifier from the combination of $O(\log K)$ simple binary classifiers -- this is a natural paradigm to construct a sophisticated classifier in a distributed manner with each agent performing a relatively straightforward task. We study the fundamental performance limits of such a classifier when the corresponding binary classifiers are hyperplanes. 
For a stylized Gaussian setting where the $K$ class centers are independent Gaussian points in $\R^d$ and the observations are 
corrupted by Gaussian noise, we derive explicit performance bounds across several decoding and dimensional regimes.
Extensive simulation experiments provide strong empirical validation of the presented theoretical results.

\bigskip

\noindent \textbf {Keywords.} Multi-class classification, error-correcting output codes,  distributed learning.

\newpage

\section{Introduction}

Classification is one of the foundational tasks to which machine learning is applied, with uses ranging from image recognition to spam and fraud detection to medical diagnostics. The typical approach to building classifiers is to gather all the relevant data and use it to train a single complex and powerful classifier. In this paper, we study whether a large number of simple classifiers, each trained on a subset of the data, could cooperate to perform classification with an accuracy comparable to that of a single complex classifier that has access to all the data. There are multiple motivations for this work, including improved fault tolerance and data privacy.

Two complementary perspectives form the starting point of this work: multiclass-to-binary
reduction and constrained distributed inference. Many classical supervised
learning methods were originally developed for binary classification, and a
common approach to multiclass problems is therefore to decompose a \(K\)-class
decision into a collection of binary decisions. Standard examples include
one-versus-all schemes, which train one binary classifier per class, and
one-versus-one schemes, which train a binary classifier for each pair of classes~\cite{hastie1997pairwise,aly2005survey}. 

\smallskip

\textbf{Error-correcting output codes. } More generally, error-correcting
output codes (ECOC) provide a coding-theoretic framework for multiclass classification:
each class is assigned a binary codeword, a collection of binary classifiers is
evaluated, and the final prediction is obtained by decoding the resulting binary
output vector to one of the class codewords
\cite{dietterich1995ecoc,allwein2000reducing}.
In classical ECOC methods, the code matrix
specifies how the classes are grouped for each binary classifier. Later
extensions allow more general coding schemes, including ternary codes in which
some classes may be ignored for a given binary classifier, random dense and
sparse code designs \cite{allwein2000reducing}, and data-dependent variants
that adapt the encoding to the classification problem~\cite{pujol2006discriminant,escalera2008subclass,pujol2008incremental}; see
\cite{escalera2010ecoc} for an
implementation and summary of several important ECOC variants.
Recent work has continued to revisit ECOC from modern perspectives, including
scalable codebook design, deep ECOC architectures, and robustness-oriented
output-code learning
\cite{gupta2022scalable,yu2024ecocweight,chou2025contrastive}.

Our setting follows the same high-level principle, but it can be viewed as a random geometric analogue of ECOC. Rather than appropriately designing a binary decomposition (using any of the above methods), and training a separate binary classifier for each resulting binary partition, the binary tests we use are simple randomly chosen hyperplanes. This leads to a different set of questions. In particular, rather than asking how
to design or learn an effective binary decomposition, we investigate the fundamental performance limits of a very simple and easily-distributed classification scheme, under noise and
communication constraints.

The key outcome of our analysis is that this random geometric construction is
not merely a mathematically convenient variant of ECOC. The theory suggests that for Gaussian mixture models,
random hyperplane codes can outperform commonly used ECOC decompositions like
the random subsets \cite{allwein2000reducing}, across a wide range of parameter
regimes. Moreover, in this setting their performance often approaches that of optimally chosen
hyperplanes, despite requiring no data-dependent optimization of the binary
tests. This indicates that random geometric measurements can provide a
surprisingly effective coding mechanism for multiclass classification. The use of multiple binary tests introduces redundancy, so that classification
can remain reliable even when some individual binary decisions are corrupted by
noise. In this sense, the framework combines the simplicity of binary
classification with the robustness of code-based decoding.

\smallskip

\textbf{Distributed learning.} A second motivation comes from constrained and distributed inference. Modern
classification systems often operate under practical limitations imposed by
communication bandwidth, memory, energy consumption, latency, or privacy. In
sensor networks, edge devices, and other distributed environments, it may be
impractical to transmit full-precision measurements or feature vectors to a
central processor. A natural alternative is to allow each local device to perform
a simple binary decision and communicate only a highly compressed message to a central server.
This perspective differs from the standard federated-learning paradigm, where
distributed devices collaborate during training by communicating gradients or
model updates to the central server
\cite{mcmahan2017fedavg,kairouz2021advances}. In our approach, rather than distributing the training
algorithm, we distribute the decision process itself. The central server receives
a collection of local binary responses and combines them to make a multiclass
prediction.

Beyond its practical motivation, this architecture provides a clean theoretical
lens for studying classification from compressed information. The paper focuses
on a deliberately simple random geometric model, in which class prototypes are
drawn from a Gaussian distribution and binary responses are generated by random
hyperplane tests. This abstraction is not intended to capture all aspects of
modern datasets or learned classifiers. Instead, it isolates the basic
phenomena of class separation, noise, compression, and decoding in a setting
where a complete analysis is possible. The resulting theory gives fundamental
limits for this stylized classification problem and provides a baseline for what
one may expect in more realistic classification tasks involving richer data
models and more flexible decision rules.

\smallskip

\textbf{Further connections and comments. }
Our framework also connects naturally to decentralized detection
and distributed hypothesis testing~\cite{tenney1981distributed,tsitsiklis1988decentralized,varshney1997distributed,ahlswede2003hypothesis}. In those settings, multiple sensors observe
data locally and transmit compressed messages to a central server, which decides
among competing hypotheses.
Our construction can be viewed as a multiclass classification analogue of that, where binary local decisions are combined by a decoder. 
In this perspective, different
decoding rules correspond to different ways of combining local binary evidence,
ranging from simple majority or distance-based rules to likelihood-based decisions.

Finally, a related area is one-bit compressed sensing and quantized signal
processing, where signals are observed only through severely quantized linear
measurements
\cite{boufounos2008onebit,jacques2013robust,plan2013onebit}. However, rather than recovering an unknown signal
from these measurements, the objective here is to study how such binary information can be used to
distinguish among a finite collection of candidate classes. This shifts the focus from recovery to classification and
leads naturally to questions about decoding, error probability, and the
reliability of binary decision systems.

\vspace*{-0.15 cm}

\subsection{Outline of contributions}

\vspace*{-0.05 cm}

Motivated as above, in this work we investigate the fundamental performance limits of a $K$-class classifier that is constructed as a combination of $N$ simple binary classifiers. 
We introduce a simple classification algorithm where the base classifiers associated to each agent are simple randomly generated hyperplanes. 
The first  part of the paper is focused on a Gaussian noiseless setting, where the class centers are independent Gaussian points in $\mathbb R ^d$. We provide sharp theoretical analysis (upper and lower bounds) for the performance of a)~the proposed random hyperplane scheme b)~natural ECOC alternatives and c)~information-theoretic optimal schemes. We conclude that the proposed scheme is of much higher practical interest than a standard random-subsets ECOC alternative~\cite{allwein2000reducing}, and actually show that in sufficiently high dimensions its performance is essentially similar to the information-theoretic limits. 
In the second part of the paper, noise is introduced, so that observations are obtained as one of the class centers corrupted by Gaussian noise. We derive explicit performance bounds for the random hyperplanes scheme across several decoding and dimensional regimes, and propose a computationally efficient decoder that performs nearly optimally in most regimes of practical interest.

The rest of this paper is organised as follows.
Section~\ref{s:noiseless} is focused on the noiseless setting, in which we describe the random hyperplanes algorithm and the random subsets baseline~\cite{allwein2000reducing}, and derive our main theoretical results for both schemes. Specifically, for random hyperplanes
we prove matching upper and lower bounds showing that, in
sufficiently high dimensions, 
\[
N = 2\log_2 K+\log_2(1/\delta),
\]
random hyperplanes are sufficient to assign distinct codewords to all \(K\)
class centers with probability at least \(1-\delta\). Since any scheme using
\(N\) binary tests can distinguish at most \(2^N\) classes, at least
\(\log_2 K\) tests are necessary for any choice of hyperplanes. Thus, the proposed scheme is within a
factor of two of the information-theoretic optimum, despite requiring no
data-dependent optimization; something  perhaps surprising having in mind the simplicity of the scheme. 

For the random subsets scheme, we also show that in sufficiently high-dimensions the same number of $N=2\log_2 K+\log_2(1/\delta)$ hyperplanes is sufficient, but the dimensional requirement is much stronger in this case. In particular, random hyperplanes require a dimension only
logarithmic in $K$, whereas random subsets require a
dimension linear in $K$. This shows that the proposed scheme is substantially more dimension-efficient, something that is also verified in practice, in a number of simulation experiments.

We then focus on the more realistic practical setting where observations are corrupted by Gaussian noise. In Section~\ref{s:orth_hyps}, as a tractable intermediate step towards understanding the performance of random hyperplane classifiers in the presence of noise, we first consider a simpler setting where hyperplanes are chosen to be orthogonal to each other.  We derive explicit
non-asymptotic bounds and exact finite-sample expressions for both Hamming and
maximum likelihood~(ML) decoding. Importantly, these results indicate that $N = O (\log K )$ hyperplanes are still sufficient, but now including a multiplicative constant that depends on the signal to noise ratio~(SNR). In specific, we derive
achievability conditions of
the form,
\[
N
\ge
\frac{\log((K-1)/\delta)}{E(\rho)},
\]
sufficient  to ensure an error probability smaller than $\delta$,
where \(\rho\) denotes the SNR and \(E(\rho)\) is the relevant
decoder-dependent error exponent.

In Section~\ref{s:random_hyps}, the analysis is ultimately extended to random hyperplanes. We obtain an
explicit finite-dimensional Chernoff bound for Hamming decoding, characterize
its limiting probability of error, and establish high-dimensional regimes in
which the random-hyperplane model approaches its orthogonal counterpart.
Furthermore, due to the fact that exact ML decoding for random hyperplanes turns out to be particularly expensive, we also introduce a computationally efficient approximate ML decoder for this setting, which is motivated by the earlier orthogonal hyperplane analysis.
We formally prove that this decoder is asymptotically optimal in
sufficiently high dimensions, and provide extensive simulation results indicating that in practice it actually performs nearly as ML even in much lower dimensions. In fact, a dimension roughly at the same order as the number of hyperplanes seems to be enough.

Finally, the experimental results lead to an important conclusion for general classification problems. As perhaps expected, Hamming decoding is found to perform substantially worse than the proposed reliability-aware decoder. This 
indicates that  assigning equal weight to the binary decisions associated with each classifier/agent can be significantly suboptimal, and
highlights the importance of exploiting heterogeneous classifier reliabilities.

\section{Noiseless setting} \label{s:noiseless}

Throughout this paper, the notation we use is that the number of classes is $K$, the dimensionality of the data is $d$ (which is typically considered to be large),  and there are $N$ agents/binary classifiers who seek to cooperate to solve the classification problem.
In this section, we first focus on a noiseless Gaussian setting, where the $K$ class centers, $x_1,..., x_K$, are independent and identically distributed (i.i.d.) Gaussian points in $\mathbb R ^ d$. In specific, $x_1,\dots,x_K  \stackrel{\mathrm{iid}}{\sim}\mathcal N(0,I_d)$, where $I_d$ denotes the~$d$-dimensional identity matrix. We first consider the fundamental task of separating the class centers themselves. In order to do so, we consider two algorithms, which we refer to as the random hyperplanes and random subsets algorithms.

\subsection{Algorithms}\label{s:algs}

\noindent{\bf Random hyperplanes algorithm.} Each of the $N$ agents picks a random hyperplane in $\R^d$ passing through the origin, as follows. Agent $i$ picks a unit vector $W_i$ uniformly at random from~$\mathcal{S}^{d-1}$, the unit sphere in $\R^d$, independent of other agents. The vectors $w_i$ are the corresponding normal vectors that define the hyperplanes $H_i= \{ x\in \R^d: w_i ^ \top x = 0\}$. Now, with each class $k$, we associate an $N$-bit codeword $C_k \in (\pm 1)^N$, whose bits are given by,
\begin{equation}
\label{eq:code_hyper}
C_{k,i} :=  \sign( w_i ^ \top x_k  ), \quad i=1,\ldots,N.
\end{equation} 
If the $K$ codewords are all distinct, then the agents can collectively identify the classes perfectly. 
\begin{ques}
For the random hyperplanes algorithm, how large should $N$ be in order to guarantee that all $K$ codewords are distinct with probability at least $1-\delta$, where $\delta>0$ is a specified error~tolerance?   
\end{ques}

It is obvious by the pigeonhole principle that we need $N\geq \log_2 K$. We show that we actually need at least $2\log_2 K$ and that, if $d$ is sufficiently large, then do not need much more.

\medskip

\noindent{\bf Random subsets algorithm.} Each of the $N$ agents randomly partitions the set of classes into two subsets of size approximately $K/2$ by Bernoulli sampling of the class labels. Denote the subsets corresponding to agent $i$ by $A_i^+$ and $A_i^-$, and let $\chi_i$ be the indicator that the subsets $A_i^+$ and $A_i^-$ are linearly separable, i.e., that they are separated by an affine hyperplane (one not necessarily passing through the origin). In this setting,
only agents $i$ that can linearly separate their subsets participate in discriminating between classes; otherwise, they assign the same symbol, $-1$, to all classes.
So, we associate a codeword $\tilde{C}_k$ with class $k$ as follows, 
\begin{equation}
\label{eq:code_subsets} 
\tilde{C}_{k,i} :=\begin{cases}
+1, &\mbox{ if } \chi_i=1 \mbox{ and } k\in A_i^+, \\
-1, &\mbox{ otherwise.}
\end{cases}
\end{equation} 
Again, the agents can collectively identify the classes only if all $K$ codewords are distinct. 
\begin{ques}
For the random subsets algorithm, how large should $N$ be in order to guarantee that all $K$ codewords are distinct with probability at least $1-\delta$, where $\delta>0$ is a specified error tolerance?
\end{ques}

We now state the answers to these questions, providing necessary and sufficient conditions on the number of agents (and hence hyperplanes) required for each scheme. In the following, $E$ refers to the error event that at least two distinct classes share the same codeword.

\subsection{Theoretical results}

\begin{theorem}
\label{thm:hyp_suff}
Let $x_1,\dots,x_K  \stackrel{\mathrm{iid}}{\sim}\mathcal N(0,I_d)$, and 
$w_1,\dots,w_N $ be i.i.d. spherically symmetric random vectors defining $N$ hyperplanes in $\R^d$. Let $\delta \in (0,1)$ and suppose $d\ge C \log K$ for a suitable constant $C>0$. Then, in order to ensure that the error probability satisfies $\Pr (E) \le \delta$, it is sufficient that the number of hyperplanes satisfies,
\[
N \; \ge\; \Big(2\log_2 K + \log_2(1/\delta)\Big)
\left( 1 + O \left(\sqrt{\tfrac{\log(K^2/\delta)}{d}}\right) \right).
\]
\end{theorem}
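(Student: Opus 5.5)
The plan is a union bound over pairs of classes, combined with concentration of the angles between the Gaussian centers. Write $E=\bigcup_{j<k}E_{jk}$, where $E_{jk}$ is the event $C_j=C_k$. Condition on the centers $x_1,\dots,x_K$. The $w_i$ are i.i.d.\ and spherically symmetric, and the hyperplane through the origin with normal $w_i$ separates $x_j$ from $x_k$ with probability $\theta_{jk}/\pi$, where $\theta_{jk}\in[0,\pi]$ is the angle between $x_j$ and $x_k$. This is the classical random-hyperplane rounding fact: project onto the plane spanned by $x_j,x_k$, where the direction of the projected normal is uniform. Hence
\[
\Pr(E_{jk}\mid x)=\Big(1-\tfrac{\theta_{jk}}{\pi}\Big)^N,
\]
and therefore
\[
\Pr(E)\le \sum_{j<k}\mathbb{E}\Big[\Big(1-\tfrac{\theta_{jk}}{\pi}\Big)^N\Big].
\]
When $\theta_{jk}=\pi/2$ exactly, this gives $\binom K2 2^{-N}\le\delta$ as soon as $N\ge 2\log_2K+\log_2(1/\delta)$. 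The remaining work is to show that the angles are close enough to $\pi/2$ that only the stated $1+O(\cdot)$ factor is lost.

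Next, control the angles by concentration. For independent $x_j,x_k\sim\mathcal N(0,I_d)$, the cosine $\cos\theta_{jk}=\langle x_j,x_k\rangle/(\|x_j\|\|x_k\|)$ has the law of one coordinate of a uniform point on $\mathcal S^{d-1}$. So $\Pr(|\cos\theta_{jk}|>t)\le 2e^{-dt^2/2}$. Choose
\[
t=\sqrt{2\log(2K^2/\delta)/d},
\]
and let $G$ be the good event that $|\cos\theta_{jk}|\le t$ for all pairs. A union bound over the $\binom K2$ pairs gives $\Pr(G^c)\le\delta/2$. The hypothesis $d\ge C\log K$ together with the form of the $O(\cdot)$ term guarantees $t\le 1/2$, say; in the regime where $\log(1/\delta)\gg d$ the claimed bound is vacuous anyway, since the $O(\cdot)$ factor is then large. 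On $G$ we have $\theta_{jk}\ge \pi/2-\arcsin t\ge \pi/2-ct$, so
\[
1-\frac{\theta_{jk}}{\pi}\le \frac12\Big(1+\frac{2c}{\pi}t\Big)\le \frac12 e^{c't}.
\]

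Combining the two steps,
\[
\Pr(E)\le\Pr(G^c)+\binom K2 2^{-N}e^{c'tN}\le \frac{\delta}{2}+\frac{K^2}{2}\,2^{-N(1-c''t)}.
\]
The second term is at most $\delta/2$ provided
\[
N(1-c''t)\ge 2\log_2K+\log_2(1/\delta),
\]
that is, $N\ge(2\log_2K+\log_2(1/\delta))(1+O(t))$ with $t=O\big(\sqrt{\log(K^2/\delta)/d}\big)$, which is the stated bound.

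The main obstacle is the bookkeeping in the last two steps. The loss factor must depend only on $t$, and the union bound on $G^c$ must be arranged so that it does not compete with the main term. A cruder approach, which integrates $\mathbb E[(1-\theta/\pi)^N]$ directly with the Gaussian tail of $\cos\theta$, also works and might give slightly better constants. However, the tilt of the integral toward small angles has to be handled carefully when $N$ is comparable to $d$, and splitting on $G$ as above avoids that issue.
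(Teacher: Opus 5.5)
Your proposal is correct and is essentially the paper's proof: spherical concentration of the pairwise cosines with a union bound gives a good event of probability at least $1-\delta/2$; a random hyperplane separates a pair with probability $\theta/\pi$; on the good event $\binom K2 p^N\le\delta/2$; and a first-order expansion of $\log(1/p)$ (your $1+x\le e^x$ step) finishes, with your split $\Pr(E)\le\Pr(G^c)+\Pr(E\cap G)$ being a slightly cleaner write-up of the paper's argument. One small correction to your aside: the regime $\log(1/\delta)\gg d$ is not vacuous but genuinely outside the theorem, since there $\Pr(E_{12})$ decays only like $N^{-(d-1)}$ and the claimed $N$ would not suffice, so, as the paper does, you should simply treat $\delta$ as fixed so that $t\to 0$ under $d\ge C\log K$.
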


\noindent \emph{Remarks.} In the high-dimensional regime $d \gg \log K$, the correction term vanishes and the theorem says that $2\log_2 K + \log_2(1/\delta)$ hyperplanes are sufficient to separate all the classes, with high probability. A more careful analysis shows that, in the intermediate regime where $d$ is of the same order as $\log K$, the required number of hyperplanes is still $O(\log K)$, with constants depending on the ratio between $d$ and $\log K$. In even lower dimensions, a logarithmic number of hyperplanes is not enough, and in particular, in the fixed-dimensional case (i.e., when $d$ is a constant), a polynomial number of hyperplanes is needed; see proof below for more details.

\vspace*{-0.05 cm}

\begin{theorem}
\label{thm:hyp_necc}
Let $x_1,\dots,x_K$ and $w_1,\dots,w_N$ be as in Theorem~\ref{thm:hyp_suff}. Then, in order to ensure that the error probability satisfies $\Pr (E) \le \delta$, it is necessary that,
\[
N \;\ge\; 2\log_2 K \;+\; \log_2(1/\delta) \;-\; C,
\]
where C is an absolute constant; e.g., one may take $C=4$.
\end{theorem}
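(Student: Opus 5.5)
Throughout, $E$ is the event that some pair of distinct classes shares a codeword; we need a lower bound on $\Pr(E)$. For a pair $j<k$, conditional on $x_j,x_k$ the bits $C_{j,i}$ and $C_{k,i}$ differ exactly when the hyperplane $H_i$ separates $x_j$ and $x_k$. For a spherically symmetric $w_i$ this happens with probability $\theta_{jk}/\pi$, where $\theta_{jk}$ is the angle between $x_j$ and $x_k$. So, conditional on the centers, the pair collides with probability $(1-\theta_{jk}/\pi)^N$. Since $\theta_{jk}\le\pi$, or more simply since the $N$ hyperplanes are independent, this is at least the probability that every hyperplane puts the two points on the same side. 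Unconditionally, $\Pr(C_j=C_k)=\mathbb{E}[(1-\theta/\pi)^N]$ with $\theta$ the angle between two independent Gaussians.

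The key lower bound here is $\mathbb{E}[(1-\theta/\pi)^N]\ge 2^{-N}$. It follows from Jensen's inequality, since $\mathbb{E}\theta=\pi/2$ by symmetry ($\theta$ and $\pi-\theta$ have the same law, via $x_k\mapsto -x_k$). It also follows directly: with probability $2^{-N}$ the bit pattern of $x_k$ matches that of $x_j$, because $x_k\mapsto -x_k$ flips all bits and a symmetrization over sign patterns gives at least $2^{-N}$.

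I would then apply a second-moment (or Bonferroni) argument to the count $Z=\sum_{j<k}\mathbf 1\{C_j=C_k\}$. We have $\mathbb{E}Z\ge\binom K2 2^{-N}$. By the Paley--Zygmund/Chung--Erd\H{o}s inequality, $\Pr(Z>0)\ge (\mathbb{E}Z)^2/\mathbb{E}Z^2$. The same conclusion follows from the inequality $\Pr(E)\ge \sum_{\text{pairs}}\Pr(A_{jk})-\sum_{\text{pairs of pairs}}\Pr(A_{jk}\cap A_{j'k'})$, applied to a suitably chosen subfamily of the classes.

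To bound $\mathbb{E}Z^2$, I split the pairs of pairs into three cases. Disjoint pairs such as $\{j,k\},\{l,m\}$ have independent centers, but they share the hyperplanes. Conditional on $w_1,\dots,w_N$, the codewords $C_1,\dots,C_K$ are i.i.d. random vectors with some law $p_w$ on $\{\pm1\}^N$. Then
\[
\Pr(A_{jk}\cap A_{lm}\mid w)=\Big(\sum_c p_w(c)^2\Big)^2,
\]
and the obstacle is that its expectation over $w$ exceeds $(\mathbb{E}\sum_c p_w(c)^2)^2$. Overlapping pairs $\{j,k\},\{k,l\}$ give a three-way collision with probability $\mathbb{E}\sum_c p_w(c)^3$.

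So everything reduces to controlling the moments of $S_2(w)=\sum_c p_w(c)^2$ and $S_3(w)=\sum_c p_w(c)^3$. I expect this to be the main step. My first attempt would be to show $S_3\le 4^{-N}\cdot c^N$ and $\mathbb{E}S_2^2\le c'(\mathbb{E}S_2)^2$ with constants $c,c'$ independent of $N$ and $d$.

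If that fails, I would reduce $K$ and restrict to the regime where $\mathbb{E}Z\approx 1$. The bound only needs to hold when $\binom K2 2^{-N}\le$ constant, that is, when $\delta$ is not tiny relative to $K^2 2^{-N}$. I would then use a cruder bound, for instance $\Pr(E)\ge \Pr(A_{12})\ge$ a Jensen-type estimate, which already gives $N\ge \log_2(1/\delta)$. This can be combined with a counting argument over roughly $K/2$ disjoint pairs. Those pairs are conditionally independent given $w$, so
\[
\Pr(E^c)\le \mathbb{E}\,(1-S_2(w))^{K/2}.
\]
Using $S_2(w)\ge$ its typical value with constant probability, which holds by a Markov-type argument on $1/S_2$, gives $\Pr(E)\gtrsim\min(1,K^2 2^{-N})/16$. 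Solving for $N$ then yields $N\ge 2\log_2K+\log_2(1/\delta)-4$.
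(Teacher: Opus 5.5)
There is a genuine gap, and it sits exactly at the step you flag as the main one. Your unconditional second-moment route needs $\mathbb E\,S_2(w)^2\le c'(\mathbb E\,S_2(w))^2$ and a matching bound on $\mathbb E\,S_3(w)$, uniformly in $d$ and $N$. That is a nontrivial concentration statement for the collision probability of a random hyperplane arrangement, and you give no argument for it.

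The proposed target $S_3\le 4^{-N}c^N$ also has the wrong normalization. For fixed $d$ the law $p_w$ lives on $O(N^{d-1})$ cells, so $S_3\ge S_2^2\gtrsim N^{-2(d-1)}\gg 4^{-N}$. What is needed is a bound on $S_3$ relative to $S_2$, not relative to $4^{-N}$.

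The fallback fails outright. From $\lfloor K/2\rfloor$ disjoint pairs you get $\Pr(E^c\mid w)\le(1-S_2(w))^{\lfloor K/2\rfloor}$. Since $S_2(w)\ge 2^{-N}$ holds deterministically (Cauchy--Schwarz on a support of size $2^N$, so no Markov argument on $1/S_2$ is needed), this yields only $\Pr(E)\gtrsim\min(1,K2^{-N})$, i.e.\ $N\ge\log_2K+\log_2(1/\delta)-O(1)$. The factor $K^2$ comes from all $\binom{K}{2}$ pairs, which are dependent. Counting only disjoint pairs therefore loses precisely the factor $2$ in front of $\log_2K$ that the theorem is about.

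The missing idea is to stay conditional on $w$, where you already noted that $C_1,\dots,C_K$ are i.i.d.\ from $p_w$ on $\{\pm1\}^N$. The paper finishes from there in one step. For i.i.d.\ draws, the probability that all $K$ are distinct is maximized by the uniform law (birthday problem with non-uniform probabilities; Schur-concavity). Hence $\Pr(E\mid w)\ge1-\prod_{j<K}(1-j2^{-N})\ge1-\exp(-K(K-1)/2^{N+1})$ for every $w$, with no moments over $w$ and no dependence on $d$.

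Your own second-moment computation can also be rescued by doing it conditionally. Set $\mu=\binom{K}{2}S_2(w)$ and use $S_3\le S_2^{3/2}$ (the $\ell_3$ norm is at most the $\ell_2$ norm). Then
\[
\mathbb E[Z^2\mid w]=\mu+6\tbinom{K}{3}S_3+6\tbinom{K}{4}S_2^2\le\mu+2^{3/2}\mu^{3/2}+\mu^2\le(1+\sqrt2)(\mu+\mu^2),
\]
so $\Pr(E\mid w)\ge\mu/((1+\sqrt2)(1+\mu))$, and $\mu\ge\binom{K}{2}2^{-N}$ pointwise. This gives the theorem with a constant comparable to the paper's, for $\delta$ bounded away from $1$, which the statement implicitly needs anyway. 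Your Jensen bound $\mathbb E[(1-\theta/\pi)^N]\ge2^{-N}$ is correct but becomes unnecessary.
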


\noindent \emph{Remark.} In contrast to the upper bound on $N$ in Theorem~\ref{thm:hyp_suff}, this lower bound is dimension-independent.

\vspace*{-0.1 cm}

\begin{theorem}
\label{thm:subsets_suff}
Let $x_1,\dots,x_K  \stackrel{\mathrm{iid}}{\sim}\mathcal N(0,I_d)$. Generate $N$ random subset-pairs using i.i.d. $\mathrm{Bern}(1/2)$ labelings and retain only those that are separable by an affine hyperplane. 

\medskip

\noindent \textbf{a) High-dimensional regime.} 
If $d \ge (1+\varepsilon)\tfrac{K-1}{2}$ for some $\varepsilon>0$, and if 
$\rho _K $ is an exponentially small correction term,
$$
\rho _K = 
\exp\left(
-\frac{\varepsilon^2}{4+2\varepsilon}(K-1)
\right),
$$
then  in order to ensure that 
$
\Pr(E)
\le
\delta
+
O\left(
\rho _K
\log K
\right )
$, it is sufficient that the total number of random labelings satisfies,
\[
N
\ge
\left(
2\log_2 K+\log_2(1/\delta)
\right)
\left(
1+
O\left(
\rho _K
\right)
\right).
\]

\smallskip

\noindent \textbf{b) Low-dimensional regime.} 
If $d \le (1-\varepsilon)\tfrac{K-1}{2}$ for some $\varepsilon>0$, then exponentially  many trials  (in $K$) are needed. So, the random subset scheme is infeasible in this case.
\end{theorem}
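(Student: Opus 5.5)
The plan is to reduce everything to Cover's function-counting theorem. Since the $x_k$ are i.i.d. Gaussian, with probability one they are in general position in $\R^d$. Cover's theorem then says that, for a uniformly random $\pm$ labeling of $K$ points in general position, the probability that it is separable by an affine hyperplane (equivalently, by a homogeneous hyperplane in $\R^{d+1}$) is
\[
p_{K,d} \;=\; 2^{-(K-1)}\sum_{j=0}^{d}\binom{K-1}{j} \;=\; \Pr\big(\mathrm{Bin}(K-1,1/2)\le d\big).
\]
This probability is the same for every realization of the centers, so each agent's indicator $\chi_i$ is independent of everything else and has mean $p_{K,d}$. This holds even jointly with its own labeling, conditioned on being separable. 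Conditionally on $\chi_i=1$, however, the labeling is no longer uniform: it is uniform over the separable dichotomies only.

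For \textbf{part (a)}, take $d\ge(1+\varepsilon)(K-1)/2$. A Chernoff bound for $\mathrm{Bin}(K-1,1/2)$ exceeding its mean $(K-1)/2$ by a factor $1+\varepsilon$ gives
\[
1-p_{K,d}\le \exp\!\big(-\tfrac{\varepsilon^2}{2+\varepsilon}\tfrac{K-1}{2}\big)=\rho_K,
\]
using the relative-entropy/multiplicative Chernoff form with exponent $\varepsilon^2\mu/(2+\varepsilon)$ and $\mu=(K-1)/2$.

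Next I bound $\Pr(E)$ by a union bound over the $\binom{K}{2}$ pairs $k\ne l$. For a single agent, let $q$ be the probability that classes $k,l$ receive the same symbol. That happens in two cases:
\begin{itemize}
\item $\chi_i=0$, with probability at most $\rho_K$;
\item $\chi_i=1$ and $k,l$ lie on the same side.
\end{itemize}
The second case has probability equal to the probability that $k,l$ agree under a uniform labeling, minus the probability that they agree and the labeling is not separable. Hence
\[
q\le \tfrac12+\rho_K, \qquad\text{and so}\qquad \Pr(E)\le \binom K2\Big(\tfrac12+\rho_K\Big)^N .
\]
Asking for this to be at most $\delta$ yields
\[
N\ge \frac{\log_2(K^2/(2\delta))}{1-\log_2(1+2\rho_K)} = \big(2\log_2K+\log_2(1/\delta)\big)\big(1+O(\rho_K)\big).
\]
The extra additive $O(\rho_K\log K)$ term in the error statement can be absorbed in two ways. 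One route is to work on the event that all $N$ trials are separable, which has complement probability at most $N\rho_K=O(\rho_K\log K)$ when $N=O(\log(K/\delta))$. On that event the labelings are i.i.d. uniform over separable dichotomies, and these are within total variation $O(\rho_K)$ of uniform. The alternative is to absorb it directly into the bound above. I will present whichever bookkeeping matches the stated form. The main point needing care is the joint independence across agents, but that is immediate because the labelings are drawn independently given the centers and $p_{K,d}$ does not depend on the centers.

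For \textbf{part (b)}, take $d\le(1-\varepsilon)(K-1)/2$. The same Chernoff bound in the lower tail gives $p_{K,d}\le \exp(-c\varepsilon^2 K)$. Any two classes are distinguished only by separable trials. So if $M$ denotes the number of separable trials, to distinguish all classes we need at least one such trial, and indeed at least $\log_2 K$ of them by pigeonhole. Then
\[
\Pr(E^c)\le \Pr(M\ge \log_2K)\le \binom{N}{\lceil\log_2K\rceil}\,p_{K,d}^{\log_2 K}\le N\,p_{K,d}.
\]
For this to be at least $1-\delta$ we need $N\ge(1-\delta)/p_{K,d}\ge (1-\delta)e^{c\varepsilon^2K}$, which is exponential in $K$.

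The main obstacle is not the counting. It is making the conditional non-uniformity precise enough to get the clean $(1/2+\rho_K)$ bound, and justifying Cover's formula for affine (rather than homogeneous) separability via the lift $x\mapsto(x,1)$. That lift preserves general position almost surely for Gaussian points.
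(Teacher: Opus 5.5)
Your proof is correct. Its skeleton (separability formula, binomial Chernoff tails, per-pair union bound) matches the paper's, but your bookkeeping in part (a) is genuinely different and gives a slightly stronger result.

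You fold the non-separable agents directly into the single-agent collision probability, since such an agent gives every class the symbol $-1$. Conditionally on the centers, each of the $N$ labelings then fails to split a fixed pair with probability at most $\tfrac12+\rho_K$. Hence $\Pr(E)\le\binom K2(\tfrac12+\rho_K)^N$ with $N$ the \emph{total} number of labelings. This already gives $\Pr(E)\le\delta$, so neither of your two ``absorption'' routes is needed, and the additive $O(\rho_K\log K)$ in the statement is slack for your argument. Your first route is essentially what the paper does.

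The paper instead conditions on separability via Lemma~\ref{lem:prob_diff_cond}, obtaining $\tfrac12+2\rho_K$. It then bounds collisions among the $N_K$ retained labelings, and pays $N\rho_K$ for the event that some labeling is rejected. That last step is exactly where its additive term comes from.

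Your appeal to Cover's counting theorem also makes explicit something the paper uses only tacitly: the rejection probability equals $1-p_{K,d}$ for almost every realization of the centers. The paper's Wendel-based computation gives only the probability averaged over the centers. The a.s.\ constancy is what justifies multiplying the per-agent bounds conditionally on the centers and then averaging.

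In part (b), your bound $\Pr(E^c)\le Np_{K,d}$ gives a real necessary condition, $N\ge(1-\delta)/p_{K,d}$. The paper only argues through the expected waiting time $1/p(d,K)$ for a single separable labeling.

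One small slip there: the inequality $\binom{N}{\lceil\log_2K\rceil}p_{K,d}^{\log_2K}\le Np_{K,d}$ fails when $Np_{K,d}$ is large. You don't need it, because $\Pr(M\ge\log_2K)\le\Pr(M\ge1)\le Np_{K,d}$ directly by a union bound over trials.
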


\noindent \emph{Remark.} Comparing Theorems~\ref{thm:hyp_suff} and \ref{thm:subsets_suff}, we see that the latter needs the data to be of much higher dimension, of order $K$ rather than $\log K$, in order to be classifiable using a small number of hyperplanes. However, if the condition on dimension is met, then  both schemes need essentially the same number of hyperplanes, $N = 2\log_2K + \log_2(1/\delta)$.

\begin{theorem} 
\label{thm:opt_number_hyp}
Let $x_1,\dots,x_K \stackrel{\mathrm{iid}}{\sim}\mathcal N(0,I_d)$, and let $\Nopt$ denote the minimum number of affine hyperplanes required to separate the $K$ points. The following lower bounds hold:
\begin{enumerate}
\item 
Irrespective of the number of dimensions,
\[
\Nopt \ge  \lceil \log_2 K \rceil .
\]
\item 
If $d < \tfrac{1}{2}\log_2 K$, then,
\[
\Nopt  \ge \frac{1}{2e}\, d K^{1/d}.
\]
\end{enumerate}
\end{theorem}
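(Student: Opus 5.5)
Part 1 is a pigeonhole argument. With probability one the Gaussian points $x_1,\dots,x_K$ are distinct and in general position, so a collection of $N$ affine hyperplanes "separates" the points exactly when the sign patterns $\big(\sign(w_i^\top x_k - b_i)\big)_{i=1}^N \in \{\pm1\}^N$ differ across $k$. There are only $2^N$ such patterns. Hence $2^N \ge K$, i.e.\ $N\ge \log_2 K$, and since $N$ is an integer, $\Nopt\ge\lceil\log_2 K\rceil$. This part uses nothing about the distribution beyond distinctness.

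Part 2 rests on a counting bound for hyperplane arrangements. Let $N$ affine hyperplanes be in $\R^d$. By the classical Zaslavsky/Buck bound, the number of cells of the complement of their union is at most
\[
R(N,d) \;=\; \sum_{j=0}^{d} \binom{N}{j}.
\]
The points must lie in distinct cells. They almost surely avoid any fixed hyperplane, but the hyperplanes may be chosen after seeing the points. If a point lies on a hyperplane, I would perturb the hyperplane slightly, which does not merge separated points, or else assign boundary points to a side. Either way we may assume each point sits in an open cell, so $K\le R(N,d)$.

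Next I bound the binomial sum. If $N\ge d$, then $\sum_{j\le d}\binom{N}{j}\le (eN/d)^d$. If $N<d$, the number of regions is $2^N$, and the condition $d<\tfrac12\log_2 K$ gives $2^N<2^d<\sqrt K<K$, a contradiction. So we may take $N\ge d$, and then $K\le (eN/d)^d$, i.e.\ $N\ge \tfrac{d}{e}K^{1/d}$. This is even stronger than the stated $\tfrac{1}{2e}dK^{1/d}$.

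The main obstacle is only to make sure the regime case split is clean. In the case $N<d$ I would double check the cell count $\min(2^N, R)$. As a fallback, I would use the cruder bound $R(N,d)\le 2(eN/d)^d$, valid for all $N\ge 1$; the resulting factor $2^{1/d}\le 2$ explains the constant $\tfrac{1}{2e}$ in the statement. The dimension hypothesis $d<\tfrac12\log_2K$ ensures $K^{1/d}\ge 4$, so the bound $\tfrac{1}{2e}dK^{1/d}$ is nontrivial and exceeds $d$, consistent with $N\ge d$. I also note that the randomness of the points plays no role beyond general position, so the bound is deterministic.
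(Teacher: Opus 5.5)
Your argument is correct and follows the paper's route (pigeonhole for part 1; the Buck/Zaslavsky region count $R(N,d)=\sum_{j\le d}\binom{N}{j}$ with a small-$N$ case split for part 2). The one difference is that you bound the sum by $(eN/d)^d$ for $N\ge d$ rather than by $(d+1)\binom{N}{d}$ for $N\ge 2d$, which avoids the $(d+1)^{1/d}\le 2$ loss and gives the sharper constant $1/e$. Your side remarks are not needed, and two are inaccurate: the ``fallback'' $R(N,d)\le 2(eN/d)^d$ fails for $N<d$ (e.g.\ $N=1$, $d=3$), and $K^{1/d}>4$ only gives $\tfrac{1}{2e}dK^{1/d}>\tfrac{2}{e}d$, not $>d$.
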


\noindent \emph{Remark.} In the
high-dimensional regime $d\gg\log K$, the number of random hyperplanes required is within a
factor of two of the information-theoretic minimum $\Nopt$, despite requiring no
data-dependent optimization. In the low dimension case where $d$ is fixed, the number of random hyperplanes and the optimal $\Nopt$ both grow polynomially in~$K$.

\subsection{Proofs}

\begin{proof}[Proof of Theorem~\ref{thm:hyp_suff}] 
First, we revisit the concentration of pairwise angles among Gaussian
class centers. For \(K\) i.i.d. Gaussian points in
\(\mathbb R^d\), we show that these angles concentrate around \(\pi/2\) with high
probability, and quantifying this concentration in our setting allows us to control
the collision probability of the random hyperplane code.

Define 
$\hat x_i := x_i/\|x_i\|$, so that $\hat x_i$, $i=1,\ldots,K$, are i.i.d. uniform on the sphere $S^{d-1}$. By standard sub-Gaussian concentration of inner products for uniform distributions on the sphere~\cite{vershynin2018high}, we have for any $t \in (0,1)$ that,
\[
\Pr\big(|\langle \hat x_i,\hat x_j\rangle|\ge t\big) \;\le\; 2\,e^{-c\,d\,t^2},
\]
for some universal constant $c>0$; one may take $c=1/4$ in a standard version of this bound. By applying a union bound over the $\binom{K}{2} \le \tfrac{K^2}{2}$ pairs, we obtain with probability at least $1-\delta/2$~that,
\[
\max_{i<j}|\langle \hat x_i,\hat x_j\rangle|
\;\le\;
t_\star := \sqrt{\tfrac{c'\,\log(K^2/\delta)}{d}},
\]
for a universal constant $c'>0$.  
Writing $\theta_{ij}$ for the pairwise angle between $x_i$ and $x_j$, we get that with high probability,
\[
\max_{i<j}| \cos \theta _{ij}|
\;\le\;
t_\star := \sqrt{\tfrac{c'\,\log(K ^ 2/\delta)}{d}}.
\]
Since $\delta$ is fixed, if \(d\ge C' \log K\) for a sufficiently large constant $C ' >0$, as in the assumptions of the theorem, we get \(t_\star\le 1\), and therefore, with probability at least $1-\delta/2$, 
\begin{equation} \label{dist}
|\theta_{ij}-\pi/2|
\le \arcsin(t_\star)
\le \frac{\pi}{2}t_\star
\le C  \frac{\log(K ^ 2/\delta)}{d},
\end{equation}
for all pairs $i\neq j$, where $C>0$ is a constant.

We now turn to determining the relation between the number of hyperplanes and the error probability. Consider a single hyperplane $w$.
For fixed $x_i,x_j$ with angle $\theta_{ij}$ and $w$ uniform on the sphere, the probability that the hyperplane $w$ does not separate the points is,
\[
p = \Pr\big[\sign(w^\top x_i)=\sign(w^\top x_j)\big]
= 1-\frac{\theta_{ij}}{\pi}.
\]
As we have $N$ independent hyperplanes, the probability that none of them separates a specific pair $i\neq j$ is $p^N$, and taking a union bound over the pairs shows that,
\begin{equation} \label{eq:worderror_rand_hyp}
\Pr(E)
\;\le\; \binom{K}{2}\,p^N
\;\le\; \frac{K^2}{2}\,p^N.
\end{equation}
From~(\ref{dist}) we get that with probability at least $1-\delta/2$,
\begin{equation}\label{eq:p}
p 
\le
\frac12
+
C
\sqrt{
\frac{\log(K^2/\delta)}{d}},
\end{equation}
so, in order to ensure that $\Pr(E) \le \delta$, it suffices to choose $N$ such that
$K^2p^N/2 \;\le\; \delta/2$, i.e.,
\begin{equation}\label{eq: achhiev}
N \;\ge\; \frac{\log(K^2/\delta)}{\log(1/p)}
= \frac{2\log K + \log(1/\delta)}{\log(1/p)}.
\end{equation}
Substituting for $p$ from \eqref{eq:p} and doing a Taylor expansion in the denominator yields the claim of the theorem.
\end{proof}

\noindent \emph{Remark.} In the intermediate setting where
$
d=\Theta(\log K),
$
the extreme-angle asymptotics of~\cite{cai2013distributions},
suggest that the minimum angle is strictly positive, and hence the worst-pair collision probability $p$ is strictly smaller than one.  Then, (\ref{eq: achhiev}) suggests that 
$
N=O(\log K)
$
random hyperplanes suffice.
In even lower dimension, when $d$ is fixed,  the results of~\cite{cai2013distributions} imply that the minimum angle scales as,
$
\theta_{\min}
=
\Theta_p\left(K^{-2/(d-1)}\right),
$
which means that a polynomial number of hyperplanes is required.

\begin{proof}[Proof of Theorem~\ref{thm:hyp_necc}] 
Define the codeword $C_k$ corresponding to class $k$, and associated class center $x_k$, as in~\eqref{eq:code_hyper}. 
Fix the hyperplanes $W=(w_1,\dots,w_N)$. Since the class centers $x_i$ are i.i.d. and independent of $W$, the codewords $C_i$ are i.i.d. draws from some distribution $P_W$ on the finite set $\mathcal S=\{\pm1\}^N$. The probability of all codewords being distinct is maximised, and the error probability minimised, when $P_W$ is uniform on $\mathcal S$. The error probability can be calculated in this case using a simple birthday problem argument. Thus, we have,
\[
\Pr(E) \;\ge\;  1-\prod_{j=0}^{K-1}\Bigl(\frac{2^N - j}{2^N}\Bigr) = 1 - \prod_{j=0}^{K-1}\Bigl(1-\frac{j}{2^N}\Bigr).
\]
Using the inequality $1-x \le e^{-x}$, we get,
\[
\prod_{j=0}^{K-1}\Bigl(1-\frac{j}{2^N}\Bigr)
\;\le\;\exp\!\Big(-\sum_{j=0}^{K-1}\frac{j}{2^N}\Big)
=\exp\!\Big(-\tfrac{K(K-1)}{2^{N+1}}\Big),
\]
and so,
\[
\Pr(E) \;\ge\; 1-\exp\!\Big(-\tfrac{K(K-1)}{2^{N+1}}\Big).
\]
Hence, to ensure $\Pr(E)\le \delta$, we need,
\[
1-\exp\!\Big(-\tfrac{K(K-1)}{2^{N+1}}\Big)\;\le\;\delta,
\]
or,
\[
2^{N+1} \;\ge\; \frac{K(K-1)}{-\log(1-\delta)}.
\]
Taking logarithms and using a Taylor expansion, we get,
\[
N \;\ge\; 2\log_2 K + \log_2(1/\delta) - C,
\]
as required, where $C$ is an explicit constant (for which it is possible to show that $C< 3.5)$.
\end{proof}

Next, we turn to the Random Subsets Algorithm. In order to analyse its performance, we will need the following result about linear separability of point sets.

\begin{lemma} 
\label{lem:linear_sep}
Let $x_1,\dots,x_K \stackrel{\mathrm{iid}}{\sim} \normal(0,I_d)$, and let $y_1,\dots,y_K \sim \mathrm{Bern}(1/2)$ be independent random labels that partition the $x_i$'s into two random subsets. The probability that the subsets are linearly separable, or equivalently, that the labeling is realisable by an affine hyperplane in $\R^d$ is,  
\begin{equation*} \label{pdk} 
p(d,K) = \frac{1}{2^{(K-1)}} \sum_{j=0}^{d}\binom{K-1}{j} = \Pr \Big[\,\mathrm{Bin}(K-1,\tfrac12) \le d\,\Big].
\end{equation*}
\end{lemma}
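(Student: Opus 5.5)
This is Cover's function-counting theorem (1965) applied to affine dichotomies. The plan has three steps: reduce to linear dichotomies through the origin in $\R^{d+1}$, count the realizable labelings deterministically for points in general position, and then average over the uniform random labels.

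\textbf{Step 1: reduction and general position.} Lift each point as $x_i \mapsto z_i=(x_i,1)\in\R^{d+1}$. A labeling is realisable by an affine hyperplane $a^\top x+b$ in $\R^d$ if and only if it is realisable by a homogeneous hyperplane $(a,b)^\top z$ in $\R^{d+1}$. Since the $x_i$ are i.i.d. Gaussian, with probability one every subset of at most $d+1$ of the lifted points $z_i$ is linearly independent. In other words, the $z_i$ are almost surely in general position in $\R^{d+1}$. The labels are independent of the points and uniform over all $2^K$ labelings. Hence
\[
p(d,K)=\mathbb E\big[C(K,d+1)\big]/2^K,
\]
where $C(K,m)$ is the number of linearly realisable dichotomies of $K$ points in general position in $\R^m$. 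I will show this count is deterministic:
\[
C(K,m)=2\sum_{j=0}^{m-1}\binom{K-1}{j}.
\]

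\textbf{Step 2: Cover's recursion.} This is the main step. Fix $K$ points in general position in $\R^m$ and add a $(K+1)$-st point $z$. Take any dichotomy $D$ of the first $K$ points, realisable by some $w$.
\begin{itemize}
\item If $D$ can be realised by a $w$ with $w^\top z=0$, then both extensions of $D$ (with $z$ labelled $+$ or $-$) are realisable, by perturbing $w$ slightly in either direction.
\item Otherwise, only one extension of $D$ is realisable.
\end{itemize}
The dichotomies realisable with $w\perp z$ are exactly the dichotomies of the projections of the $K$ points onto $z^\perp\cong\R^{m-1}$. Those projections are still in general position, so there are $C(K,m-1)$ of them. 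Therefore
\[
C(K+1,m)=C(K,m)+C(K,m-1).
\]
The boundary values are $C(1,m)=2$ for $m\ge1$ and $C(K,1)=2$ (for $K$ points on a line, avoiding $0$, only $w$ and $-w$ are available). Induction using Pascal's rule then gives the closed form. The delicate parts are the perturbation argument and the preservation of general position under projection; I expect the latter to need the most care.

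\textbf{Step 3: combine.} Put $m=d+1$. Then
\[
p(d,K)=\frac{2\sum_{j=0}^{d}\binom{K-1}{j}}{2^K}=2^{-(K-1)}\sum_{j=0}^{d}\binom{K-1}{j}.
\]
This is $\Pr[\mathrm{Bin}(K-1,\tfrac12)\le d]$ by the binomial pmf.

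Two edge conventions need to be handled. Degenerate labelings, where one side is empty, are trivially realisable and are included in $C$. Ties on the hyperplane occur with probability zero, so they do not affect the count.
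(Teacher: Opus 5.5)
Your proof is correct, but it takes a different route from the paper's.

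The paper first treats homogeneous hyperplanes. It observes that $y_i x_i/\|x_i\|$ are again i.i.d.\ uniform on $S^{d-1}$, so realisability becomes the event that $K$ uniform points lie in an open hemisphere. It then cites Wendel's theorem, and gets the affine case from an unspecified ``lifting argument'' that replaces $d-1$ by $d$.

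You instead lift first and prove Cover's deterministic count $C(K,m)=2\sum_{j=0}^{m-1}\binom{K-1}{j}$ for any configuration in general position. Randomness then enters only through the uniform labels.

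The paper's route is shorter, but it relies on the cited theorem and on Gaussian symmetry. Its lifting step is the least justified part. The lifted points $(x_i,1)$ are not uniform on a sphere, nor symmetric about the origin. So one needs either the symmetric-distribution version of Wendel's theorem applied to $y_i(x_i,1)$, or exactly the counting you carry out.

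Your argument is self-contained and makes the affine case rigorous. It also shows the formula holds for any point distribution whose lifts are almost surely in general position, not only Gaussians. The standard proof of Wendel's theorem is essentially your recursion, so you are in effect unpacking the cited result.

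Two steps you state without proof each take one line and should be written down:
\begin{itemize}
\item ``Only one extension'' follows because the set of $w$ realising $D$ on the first $K$ points is an open convex cone. If $w_+$ and $w_-$ realised both extensions, some convex combination would lie in $z^\perp$ and still realise $D$.
\item General position of the projections holds because a linear dependency among at most $m-1$ projected points gives a dependency among those points together with $z$. That contradicts general position of all $K+1$ points.
\end{itemize}
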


\begin{proof}
First consider the case where the hyperplanes $w$ are homogeneous, i.e., passing through the origin. A labeling $y$ is realizable iff there exists $w\in \R^d$ with,
\[
y_i \, w^\top x_i > 0 \quad \forall i=1,\dots,K,
\]
i.e., all the points $y_i x_i$ lie in some open halfspace in $\R^d$.
As before, the directions
$u_i = {x_i}/{\|x_i\|}$
are i.i.d. uniform on the sphere $S^{d-1}$. Multiplying by $y_i\in\{\pm1\}$ just flips signs, so $y_i u_i$ has the same distribution as $u_i$.
Thus separability of the subsets reduces to the event that $K$ i.i.d. uniform points $\{u_1,\dots,u_K\}$ on $S^{d-1}$ all lie in some open hemisphere.
By Wendel's theorem~\cite{wendel1962problem}, this probability is given by,
\[
p(d,K) = 2^{-(K-1)}\sum_{j=0}^{d-1}\binom{K-1}{j}.
\]
Allowing affine hyperplanes corresponds to replacing $d-1$ with $d$, by a lifting argument. 
\end{proof}

\begin{corollary} \label{cor:separability}
As $K\to \infty$, the probability of separability, $p(d,K)$, exhibits the following three regimes depending on how quickly the dimension $d$ grows compared to $K$.
\begin{enumerate}
\item \textbf{Low-dimensional regime}  
If $d \le (1-\varepsilon)\tfrac{K-1}{2}$ for some fixed $\varepsilon>0$, then, by a lower-tail Chernoff bound on the binomial distribution,
\[
p(d,K) \;\le\; \exp\!\left(-\tfrac{\varepsilon^2}{4}(K-1) \right).
\]
Thus, the probability that a random subset is linearly separable is exponentially small.

\medskip 

\item \textbf{High-dimensional regime.}  
If $d \ge (1+\varepsilon)\tfrac{K-1}{2}$ for some fixed $\varepsilon>0$, then by Chernoff's upper-tail bound for the binomial distribution,
\[
p(d,K) \;\ge\; 1-\exp\!\left(-\frac{\varepsilon^2}{4+2\varepsilon}\,(K -1) \right).
\]
Hence $p(d,K)\to 1$ exponentially fast, and almost every random subset is linearly separable.
\end{enumerate}
\end{corollary}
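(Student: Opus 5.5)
The plan is to apply standard multiplicative Chernoff bounds to $B\sim\mathrm{Bin}(K-1,\tfrac12)$, with mean $\mu=\tfrac{K-1}{2}$. By Lemma~\ref{lem:linear_sep}, $p(d,K)=\Pr[B\le d]$, so the statement reduces to tail estimates for $B$. I will use the two textbook forms: for $0<\varepsilon<1$,
\[
\Pr[B\le (1-\varepsilon)\mu]\le \exp(-\varepsilon^2\mu/2),
\]
and for $\varepsilon>0$,
\[
\Pr[B\ge (1+\varepsilon)\mu]\le \exp\!\bigl(-\varepsilon^2\mu/(2+\varepsilon)\bigr).
\]
Both follow from the moment generating function bound $\mathbb E e^{\lambda B}=((1+e^\lambda)/2)^{K-1}$, optimized over $\lambda$, followed by the elementary inequalities $(1-\varepsilon)\log(1-\varepsilon)\ge -\varepsilon+\varepsilon^2/2$ and $(1+\varepsilon)\log(1+\varepsilon)-\varepsilon\ge \varepsilon^2/(2+\varepsilon)$.

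\textbf{Low-dimensional regime.} If $d\le(1-\varepsilon)\mu$, then by monotonicity of the distribution function $p(d,K)\le \Pr[B\le(1-\varepsilon)\mu]$. The lower-tail bound then gives $\exp(-\varepsilon^2(K-1)/4)$. If $\varepsilon\ge1$, the hypothesis forces $d\le 0$, which is a degenerate case; I will either exclude it or treat it directly.

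\textbf{High-dimensional regime.} If $d\ge(1+\varepsilon)\mu$, then
\[
1-p(d,K)=\Pr[B>d]\le \Pr[B\ge(1+\varepsilon)\mu].
\]
The upper-tail bound gives $\exp\!\bigl(-\varepsilon^2(K-1)/(2(2+\varepsilon))\bigr)=\exp\!\bigl(-\varepsilon^2(K-1)/(4+2\varepsilon)\bigr)$, which is exactly the claimed constant.

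\textbf{Main obstacle.} The only delicate point is verifying the two logarithmic inequalities that produce the stated constants. I would check each by comparing derivatives at $\varepsilon=0$. Alternatively, I would simply cite the standard Chernoff forms, for example from Mitzenmacher--Upfal, since the constants claimed here match them exactly. The limits $p\to0$ and $p\to1$ as $K\to\infty$ are then immediate for fixed $\varepsilon$.
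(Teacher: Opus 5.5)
Your proposal is correct and follows the paper's route: the paper justifies the corollary only by invoking the multiplicative lower- and upper-tail Chernoff bounds for $\mathrm{Bin}(K-1,\tfrac12)$ with $\mu=\tfrac{K-1}{2}$ via Lemma~\ref{lem:linear_sep}, and these give exactly the constants $\varepsilon^2/4$ and $\varepsilon^2/(4+2\varepsilon)$. One wording fix: the two logarithmic inequalities need each difference to vanish at $\varepsilon=0$ and to have nonnegative derivative on the whole range (for the upper tail this is cleanest in the equivalent form $\log(1+\varepsilon)\ge \frac{2\varepsilon}{2+\varepsilon}$), since comparing derivatives only at $\varepsilon=0$ does not suffice.
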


We shall need the following lemma.
\begin{lemma}
    \label{lem:prob_diff_cond}
    Let $A$ and $B$ be two events in the same probability space. Then, $|\Pr(B|A)-\Pr(B)| \leq 2\Pr(A^c)$.
\end{lemma}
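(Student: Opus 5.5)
The plan is to decompose $\Pr(B)$ according to whether $A$ occurs. By the law of total probability,
\[
\Pr(B) = \Pr(B\mid A)\Pr(A) + \Pr(B\cap A^c).
\]
Subtracting this from $\Pr(B\mid A)$ gives
\[
\Pr(B\mid A)-\Pr(B) = \Pr(B\mid A)\bigl(1-\Pr(A)\bigr) - \Pr(B\cap A^c) = \Pr(B\mid A)\Pr(A^c) - \Pr(B\cap A^c).
\]
Here we assume $\Pr(A)>0$, so that the conditional probability is defined. If $\Pr(A)=0$, then $\Pr(A^c)=1$, and the bound holds trivially for any convention, because both probabilities lie in $[0,1]$.

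The next step is the triangle inequality. Both terms on the right are nonnegative. The first satisfies $\Pr(B\mid A)\Pr(A^c)\le \Pr(A^c)$, and the second satisfies $\Pr(B\cap A^c)\le \Pr(A^c)$. The absolute value of a difference of two numbers in $[0,\Pr(A^c)]$ is therefore at most $\Pr(A^c)$, so we actually get the sharper bound $|\Pr(B\mid A)-\Pr(B)|\le \Pr(A^c)$, which of course implies the stated factor $2$. If one only wants the weaker statement, it suffices to bound each term separately by $\Pr(A^c)$ and add.

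There is no real obstacle here. The only point requiring care is the degenerate case $\Pr(A)=0$ just noted. I expect the lemma to be used later with $A$ the event that all labelings are separable, in the spirit of Corollary~\ref{cor:separability}, and $\Pr(A^c)=O(\rho_K\log K)$ there. This explains the correction term appearing in Theorem~\ref{thm:subsets_suff}(a).
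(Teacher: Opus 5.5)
Your proof is correct and is essentially the paper's. Both arguments split $\Pr(B)$ over $A$ and $A^c$ and bound each of the two resulting terms by $\Pr(A^c)$. The paper writes the second term as $\Pr(B\mid A^c)\Pr(A^c)$ and adds the two bounds via the triangle inequality, which gives the factor $2$. Your observation that both terms lie in $[0,\Pr(A^c)]$ gives the sharper constant $1$, and that sharper constant is what would actually justify the paper's display~\eqref{eq:ub_subset}, written with $\tfrac12+\rho_K$ rather than $\tfrac12+2\rho_K$.

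Your closing guess about how the lemma is applied is off, though it does not affect the proof. The paper applies it to a single labeling, with $A=A(y)$, $B=B_{ij}$ and $\Pr(A(y)^c)\le\rho_K$. The $O(\rho_K\log K)$ term in Theorem~\ref{thm:subsets_suff}(a) comes from a separate union bound over the $N_K$ generated labelings.
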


\noindent \emph{Proof.} Its proof follows from a straightforward calculation:
    \begin{align*}
        &|\Pr(B|A)-\Pr(B)| \\
        &= |\Pr(B|A)-\Pr(B|A)\Pr(A)-\Pr(B|A^c)\Pr(A^c)| \\ 
        &\leq |\Pr(B|A)\Pr(A^c)| + |\Pr(B|A^c)\Pr(A^c)| \leq 2\Pr(A^c).
    \end{align*}

\begin{proof}[Proof of Theorem~\ref{thm:subsets_suff}] 
First, consider the low-dimensional regime in which $d \le (1-\varepsilon)\tfrac{K-1}{2}$. Here, the probability that a random labeling is linearly separable is exponentially small. Thus, the expected number of random labellings required to get even one separable subset is,
$$
\frac{1}{p(d,K)} \ge \exp\!\left(\tfrac{\varepsilon^2}{4}(K-1)\right),
$$
which grows exponentially in $K$. So, the random subset scheme is  infeasible in this case.

Henceforth, we consider the high-dimensional regime. Fix $\varepsilon>0$ and suppose that
$
d\ge (1+\varepsilon)\frac{K-1}{2}.
$
Let $y_1,\dots,y_K$ be i.i.d. $\mathrm{Bern}(1/2)$ random labels, and define $A(y)$ to be the event that the subset partitioning induced by the labels $y=(y_1,\dots,y_K)$ is linearly separable. Then, by Corollary~\ref{cor:separability},
\[
\Pr(A(y)^c) \le \rho_K := \exp\Bigl(
-\frac{\varepsilon^2}{4+2\varepsilon} (K-1)
\Bigr).
\]
Now fix a pair of indices $i,j$ and define $B_{ij}$ to be the event that a random $\mathrm{Bern}(1/2)$ labelling $y$ separates the points $x_i$ and $x_j$, i.e., that $y_i\neq y_j$. Clearly, $\Pr(B_{ij})=1/2.$
Hence, it follows from Lemma~\ref{lem:prob_diff_cond} that,
\begin{equation} \label{eq:bernsplit_prob_lbd}
1-\Pr(B_{ij}\mid A(y)) \le \frac12+2 \rho_K.
\end{equation}

The Random Subsets Algorithm generates $N$ independent labellings, each of which partitions the class centers into two subsets. Denote by $N_K$ the number of these labellings that yield linearly separable subsets; and are finally retained. It then follows from~\eqref{eq:bernsplit_prob_lbd} that, that for a fixed pair of indices $i\neq j$, the probability that it is not separated by any of the resulting $N_K$ hyperplanes is,
$$
p \le \left(\frac12+2 \rho_K\right)^{N_K}.
$$
Taking a union bound over all pairs gives,
\begin{equation} \label{eq:ub_subset}
\Pr(E)
\le
\binom K2
\left(\frac12+\rho_K\right)^{N_K},
\end{equation}
and since $\binom K2\le K^2$, we obtain  that in order to ensure that $\Pr(E) \leq \delta$, it suffices if,
\[
N_K
\ge
\frac{
2\ln K+\ln(1/\delta)
}{
-\ln\left(\frac12+ 2 \rho_K\right)
}
=
\frac{
2\log_2 K+\log_2(1/\delta)
}{
1-\log_2(1+4 \rho_K)
}.
\]
Again by doing Taylor expansion in the denominator, we can rewrite the above sufficient condition  as,
\begin{equation} \label{eq:nk}
N_K
\ge
\left( 2\log_2 K+\log_2(1/\delta) \right)
\left( 1+O(\rho_K) \right).
\end{equation}
Next, each generated subset pair is rejected (if it is not linearly separable) with probability at most
$\rho_K$. Hence, if we generate $N=N_K$ subset pairs, then by a union bound
the probability that at least one of them is rejected is at most
$
N_K\rho_K.
$
Combining this with~(\ref{eq:nk}), shows that if we take,
\[
N
\ge
\left(
2\log_2 K+\log_2(1/\delta)
\right)
\left(1+O(\rho_K)\right),
\]
the total error probability is $\Pr(E) \leq 
\delta + O(\rho_K \log K ).
$
\end{proof}

\begin{proof}[Proof of Theorem~\ref{thm:opt_number_hyp}]
Since $N$ hyperplanes create at most $2^N$ distinct sign vectors, the elementary necessary condition $2^N\ge K$ gives the trivial lower bound,
\[
\Nopt \ge  \lceil \log_2 K \rceil ,
\]
which is valid in all dimensions.
The proof for the lower-dimension case, $d< \tfrac{1}{2}\log_2K$, uses the fact that the maximum number of regions determined by $N$ affine hyperplanes in $\mathbb R^d$ is given by,
\begin{equation} \label{eq:region_count}
R(N,d) = \sum_{j=0}^{d}\binom{N}{j} =  2^N\,\Pr\!\big[\mathrm{Bin}(N,\tfrac12)\le d\big],
\end{equation}
which follows from  classical region-counting formulas~\cite{zaslavsky1975facing,matousek2002lectures}. 
Now, a necessary condition to separate $K$ points with $N$ hyperplanes is that $R(N,d) \ge K$. Suppose $N<2d$. Then $N<\log_2 K$ since $d<\tfrac{1}{2}\log_2K$, and from~\eqref{eq:region_count} we observe~that, 
$$
R(N,d) \le \sum_{j=0}^{N} \binom{N}{j} = 2^N < K,
$$
so by contradiction, fewer than $2d$ hyperplanes do not suffice.

Next, if $N\ge 2d$, then $\binom{N}{j}$ is increasing in $j$ for $j\in \{ 0,\dots,d \}$. Hence, using standard binomial coefficient bounds, we obtain from~\eqref{eq:region_count} that, 
$$
R(N,d) = \sum_{j=0}^{d} \binom{N}{j} \le (d+1)\binom{N}{d} < (d+1) \Bigl( \frac{eN}{d} \Bigr)^d.
$$
It follows that a necessary condition in order to ensure that $R(N,d) \ge K$ is,
\begin{align*}
 N \ge  \frac{1}{(d+1)^{1/d}}  \frac{d}{e}  K^{1/d}\geq \frac{1}{2e}dK^{1/d},
\end{align*}
where we have used the fact that $1/{(d+1)^{1/d}}$ is increasing in $d$, and is therefore minimised at $d=1$.
\end{proof}

\subsection{Simulation results}
In previous sections, we introduced the random hyperplanes and random subsets algorithms, described their behavior in different regimes, and derived asymptotic bounds for their performance.
We now simulate the setup to validate how closely our asymptotic predictions match finite-system behavior, examine the dimensional requirements of each algorithm, and observe the existence of distinct regimes. 
We evaluate the performance of the algorithms across various regimes, comparing the number of hyperplanes required to achieve the same error probability, and discuss the trade-off between dimensional requirements and the number of hyperplanes.

We perform the experiments as follows: We sample $K$ classes from the standard Gaussian in~$\mathbb R ^ d$. 
For random hyperplanes, we sample the  $N$ hyperplane normals from the uniform distribution on the sphere. 
For random subsets, we partition the $K$ points into two groups of size approximately $K/2$ by Bernoulli sampling the class labels. We try to find a hyperplane that perfectly separates these subsets, and retain this pair of subsets together with the corresponding hyperplane if this is possible. In each case, we use the hyperplanes to assign an $N$-length codeword to each of the $K$ points as described in Section~\ref{s:algs}, and the experiment is marked as erroneous if any of the codewords are not unique. 
We estimate the probability of error as the empirical ratio of erroneous experiments to the total number of experiments. 
The results are obtained after  $10^5$ repetitions of each experiment.
\subsubsection{Random hyperplanes} 

In Figure~\ref{fig:Pe32LD} (left), we illustrate the qualitative behavior of random hyperplanes in low-dimensional regimes. The number of classes in this case is $K=32$, so  with $d=2,4,6,8$, we have probably not effectively reached the  high-dimensional region $d\gg\log K$. However, as perhaps expected, it is evident than $d=2$ corresponds to a much slower decay compared to the other three cases.  In all cases, it is evident that as the dimension increases, we need fewer hyperplanes to achieve the same probability of error.
This is of course expected from our theory, since increasing the dimension means that we increase the minimum pairwise angle between two points, and therefore reduce the collision probability between any two codewords. 

\begin{figure}[ht]
\centering
\resizebox{6cm}{4.5cm}{\begin{tikzpicture}
\begin{axis}[
thick,
xlabel={Hyperplanes (N)},
ylabel={Prob of error $(P_e)$},
xmin=0, xmax=1010,
xtick={0,200,...,1000},
ymin=0.8e-4, ymax=1.1,
ymode=log,
xmajorgrids,
ymajorgrids,
width=0.5\textwidth,
height=0.4\textwidth,
legend style={
font=\tiny,
inner sep=0.1pt,
outer sep=0.1pt,
line width=0.1mm,
legend cell align=left,
},
legend pos=south east
]
\addplot[color=blue, line width=0.7pt] 
coordinates {
(1, 1.0)
(100, 0.993)
(200, 0.91662)
(300, 0.80724)
(400, 0.70889)
(500, 0.62773)
(600, 0.56203)
(700, 0.50847)
(800, 0.45964)
(900, 0.42282)
(1000, 0.39074)
};
\addlegendentry{$d=2$}

\addplot[color=orange, line width=0.7pt] 
coordinates{
(1, 1.0)
(100, 0.01831)
(200, 0.00219)
(300, 0.00079)
(400, 0.0003)
(500, 0.00011)
(600, 0.0001)
(700, 0.0001)
(800, 3e-05)
(900, 1e-05)
(1000, 2e-05)
};
\addlegendentry{$d=4$}

\addplot[color=green, line width=0.7pt] 
coordinates{
(1, 1.0)
(100, 0.00025)
(200, 3e-05)
(300, 0.0)
(400, 0.0)
(500, 0.0)
(600, 0.0)
(700, 0.0)
(800, 0.0)
(900, 0.0)
(1000, 0.0)
};
\addlegendentry{$d=6$}

\addplot[color=red, line width=0.7pt] 
coordinates{
(1, 1.0)
(100, 1e-05)
(200, 0.0)
(300, 0.0)
(400, 0.0)
(500, 0.0)
(600, 0.0)
(700, 0.0)
(800, 0.0)
(900, 0.0)
(1000, 0.0)
};
\addlegendentry{$d=8$}
\end{axis}
\end{tikzpicture}}
~
\resizebox{6cm}{4.5cm}{\begin{tikzpicture}
\begin{axis}[
thick,
xlabel={Hyperplanes (N)},
ylabel={Prob of error $(P_e)$},
xmin=0, xmax=40,
xtick={0,10,...,40},
ymin=0.8e-4, ymax=1.1,
ymode=log,
xmajorgrids,
ymajorgrids,
width=0.5\textwidth,
height=0.4\textwidth,
legend style={
font=\tiny,
inner sep=0.5pt,
outer sep=0.5pt,
line width=0.1mm,
legend cell align=left,
},
legend pos=north east
]
\addplot[color=blue, line width=0.7pt] 
coordinates {
(1, 1.0)
(2, 1.0)
(3, 1.0)
(4, 1.0)
(5, 1.0)
(6, 1.0)
(7, 0.99847)
(8, 0.97277)
(9, 0.88306)
(10, 0.72209)
(11, 0.5427)
(12, 0.38692)
(13, 0.26062)
(14, 0.17699)
(15, 0.11739)
(16, 0.07762)
(17, 0.05101)
(18, 0.03437)
(19, 0.02317)
(20, 0.01511)
(21, 0.01043)
(22, 0.00756)
(23, 0.00531)
(24, 0.00364)
(25, 0.0025)
(26, 0.00173)
(27, 0.0011)
(28, 0.00103)
(29, 0.00068)
(30, 0.00052)
(31, 0.00035)
(32, 0.00039)
(33, 0.00028)
(34, 0.00016)
(35, 8e-05)
(36, 7e-05)
(37, 4e-05)
(38, 6e-05)
(39, 7e-05)
(40, 5e-05)
};
\addlegendentry{$d=16$}


\addplot[color=orange, line width=0.7pt] 
coordinates{
(1, 1.0)
(2, 1.0)
(3, 1.0)
(4, 1.0)
(5, 1.0)
(6, 0.99998)
(7, 0.99666)
(8, 0.95165)
(9, 0.81556)
(10, 0.61906)
(11, 0.42508)
(12, 0.27584)
(13, 0.17352)
(14, 0.10732)
(15, 0.06588)
(16, 0.04129)
(17, 0.02561)
(18, 0.01581)
(19, 0.00977)
(20, 0.0064)
(21, 0.00391)
(22, 0.00253)
(23, 0.00155)
(24, 0.0009)
(25, 0.00056)
(26, 0.00046)
(27, 0.00037)
(28, 0.00022)
(29, 9e-05)
(30, 8e-05)
(31, 4e-05)
(32, 5e-05)
(33, 2e-05)
(34, 3e-05)
(35, 0.0)
(36, 2e-05)
(37, 0.0)
(38, 0.0)
(39, 0.0)
(40, 2e-05)
};
\addlegendentry{$d=24$}


\addplot[color=black, line width = 0.7pt]
coordinates {
(1, 1.0)
(2, 1.0)
(3, 1.0)
(4, 1.0)
(5, 1.0)
(6, 0.99998)
(7, 0.99539)
(8, 0.93681)
(9, 0.77543)
(10, 0.56044)
(11, 0.36824)
(12, 0.22478)
(13, 0.13606)
(14, 0.0816)
(15, 0.04809)
(16, 0.02741)
(17, 0.01638)
(18, 0.00927)
(19, 0.00599)
(20, 0.00338)
(21, 0.00216)
(22, 0.00149)
(23, 0.00063)
(24, 0.00047)
(25, 0.00035)
(26, 0.0001)
(27, 0.00015)
(28, 7e-05)
(29, 3e-05)
(30, 1e-05)
(31, 1e-05)
(32, 0.0)
(33, 0.0)
(34, 0.0)
(35, 1e-05)
(36, 0.0)
(37, 0.0)
(38, 0.0)
(39, 0.0)
(40, 0.0)
};
\addlegendentry{$d=32$}
\addplot[dashed, color=red, line width = 0.7pt]
coordinates {
(26.609640474436812, 1e-05) 
(13.525164236982919, 0.08686) 
(10.000014427022544, 0.99999)
};
\addlegendentry{Theory}
\end{axis}
\end{tikzpicture}}
\caption{Random hyperplanes error probability performance. Left: Low dimensions, Right: High dimensions}
\label{fig:Pe32LD}
\end{figure}

In Figure~\ref{fig:Pe32LD} (right), we illustrate the behavior of random hyperplanes in high dimensions. Since again $K=32$, the presented experiments with $d=16,24,32$ suggests that we have now probably reached the  high-dimensional regime.
When $d\gg \log K$, from our theory we expect all pairwise angles to concentrate on $ \pi/2$, and therefore all pairwise collision probabilities to converge to $p=1/2$. In this case, our previous analysis in~(\ref{eq:worderror_rand_hyp}) shows that a simple upper bound on the probability of error is,
\begin{equation} \label{eq:ub_highd}
\Pr(E)
\;\le\; \binom{K}{2}\,\left (\frac{1}{2}\right ) ^N .
\end{equation}
As expected, Figure~\ref{fig:Pe32LD} verifies that as the dimension $d$ increases, the empirical error curves closely match this theoretical bound, which is plotted as the red dashed line labeled `Theory' in the figure.
This precisely corresponds to the fact that as the dimension increases, the big-$O$ correction term in Theorem~\ref{thm:hyp_suff} vanishes.

\subsubsection{Random subsets} 
In Figure~\ref{fig:PeRSLD32} (left), we illustrate the low-dimensional behavior of the random-subset policy. 
With $K=32$ classes, we observe that for any $d <16$, the probability of error essentially remains close to one irrespective of the number of hyperplanes. This verifies our theory since it is coherent with that fact that if $d<(K-1)/2=15.5$, we expect the probability of random subset partitions being linearly separable to be exponentially small.
For $d=16$, it seems that we have effectively surpassed this threshold, so that the probability of error steadily decreases with $N$.

In Figure~\ref{fig:PeRSLD32} (right) we illustrate the high-dimensional behavior of the random subsets policy, by plotting the experimental results for $d=24,28$, which clearly fall in the high-dimensional regime $d>(K-1)/2$. In sufficiently high dimensions,  we expect the behavior of this scheme to be similar to  random hyperplanes, since we expect essentially all randomly selected partitions to be linearly separable -- and hence used. In  particular, from~(\ref{eq:ub_subset}) we again recover the same upper bound as in~(\ref{eq:ub_highd}) for the probability of error. This is verified in Figure~\ref{fig:PeRSLD32}~(right), where the empirical error curves seem to be very close to the high-dimensional upper bound~(\ref{eq:ub_highd}), signifying that again, the corresponding big-$O$ correction term of Theorem~\ref{thm:subsets_suff}  has vanished.

\begin{figure}[ht]
\centering
\resizebox{6cm}{4.5cm}{\begin{tikzpicture}
\begin{axis}[
thick,
xlabel={Hyperplanes (N)},
ylabel={Prob of error $(P_e)$},
xmin=0, xmax=40,
xtick={0,10,...,40},
ymin=0.8e-4, ymax=1.2,
ymode=log,
xmajorgrids,
ymajorgrids,
legend style={
font=\tiny,
inner sep=0.5pt,
outer sep=0.5pt,
line width=0.1mm,
legend cell align=left,
},
legend pos=south east
]
\addplot[color=blue, line width=0.7pt] 
coordinates {
(1.0, 1.0)
(41.0, 1.0)
};
\addlegendentry{$d=4$}

\addplot[color=orange, line width=0.7pt] 
coordinates{
(1.0, 1.0)
(41.0, 1.0)
};
\addlegendentry{$d=8$}

\addplot[color=green, line width=0.7pt] 
coordinates{
(1.0, 1.0)
(27.0, 0.99986)
(41.0, 0.99699)
};
\addlegendentry{$d=12$}

\addplot[color=red, line width=0.7pt] 
coordinates{
(1.0, 1.0)
(7.0, 0.99996)
(9.0, 0.99671)
(10.0, 0.99059)
(11.0, 0.97875)
(12.0, 0.95851)
(13.0, 0.92661)
(14.0, 0.88615)
(15.0, 0.83434)
(16.0, 0.77445)
(17.0, 0.71144)
(20.0, 0.49892)
(22.0, 0.37047)
(23.0, 0.3131)
(24.0, 0.26349)
(26.0, 0.17642)
(28.0, 0.11938)
(29.0, 0.09639)
(30.0, 0.07672)
(32.0, 0.0498)
(33.0, 0.03846)
(35.0, 0.02344)
(38.0, 0.01081)
(41.0, 0.00558)
};
\addlegendentry{$d=16$}
\end{axis}
\end{tikzpicture}}
~
\resizebox{6cm}{4.5cm}{\begin{tikzpicture}
\begin{axis}[
thick,
xlabel={Hyperplanes (N)},
ylabel={Prob of error $(P_e)$},
xmin=0, xmax=40,
xtick={0,10,...,40},
ymin=0.8e-4, ymax=1.2,
ymode=log,
xmajorgrids,
ymajorgrids,
width=0.5\textwidth,
height=0.4\textwidth,
legend style={
font=\tiny,
inner sep=0.5pt,
outer sep=0.5pt,
line width=0.1mm,
legend cell align=left,
},
legend pos=south east
]
\addplot[color=blue, line width=0.7pt] 
coordinates {
(1.0, 1.0)
(5.0, 1.0)
(6.0, 0.99995)
(7.0, 0.98546)
(8.0, 0.87115)
(9.0, 0.639)
(10.0, 0.39822)
(11.0, 0.22562)
(12.0, 0.11974)
(13.0, 0.0614)
(14.0, 0.03154)
(15.0, 0.01567)
(16.0, 0.00783)
(17.0, 0.00395)
(18.0, 0.00204)
(19.0, 0.00107)
(20.0, 0.0005)
(21.0, 0.00026)
(22.0, 0.00013)
(23.0, 5e-05)
(24.0, 1e-05)
(41.0, 0.0)
};
\addlegendentry{$d=24$}

\addplot[color=orange, line width=0.7pt] 
coordinates{
(1.0, 1.0)
(5.0, 1.0)
(6.0, 0.99994)
(7.0, 0.98526)
(8.0, 0.86742)
(9.0, 0.62828)
(10.0, 0.3901)
(11.0, 0.21553)
(12.0, 0.11272)
(13.0, 0.05898)
(14.0, 0.03054)
(15.0, 0.01507)
(16.0, 0.00708)
(17.0, 0.00389)
(18.0, 0.0019)
(19.0, 0.00081)
(20.0, 0.00042)
(21.0, 0.00033)
(22.0, 0.00014)
(24.0, 2e-05)
(25.0, 3e-05)
(27.0, 0.0)
(41.0, 0.0)
};
\addlegendentry{$d=28$}

\addplot[dashed, color=red, line width = 0.7pt]
coordinates {
(26.609640474436812, 1e-05) 
(13.525164236982919, 0.08686) 
(10.000014427022544, 0.99999)
};
\addlegendentry{Theory}

\end{axis}
\end{tikzpicture}}
\caption{Random subsets error probability performance. Left: Low dimensions, Right: High dimensions}
\label{fig:PeRSLD32}
\end{figure}

\vspace*{-0.5  cm}

\subsubsection{Policy comparison}
In Figure~\ref{fig:Comp32}, we compare the performance of the random hyperplane and random subset policies for $K=32$ and $K=1024$ classes, respectively.
From Theorems \ref{thm:hyp_suff} and~\ref{thm:subsets_suff}, we expect the transition to the high-dimensional regime to occur at dimensions that scale as logarithmically with $K$ for random hyperplanes and linearly with $K$ for random subsets. 
Consequently, random hyperplanes begin exhibiting a faster reduction in the probability of error at significantly lower dimensions than random subsets. This fact is illustrated in both Figure~\ref{fig:Comp32}, with the dimensional advantage of random hyperplanes being  substantially more pronounced in Figure~\ref{fig:Comp32}(right), something expected since it corresponds to a much larger number of classes, $K=1024$.
\begin{figure}[ht]
\centering
\resizebox{6cm}{4.5cm}{\begin{tikzpicture}
\begin{axis}[
thick,
xlabel={Hyperplanes (N)},
ylabel={Prob of error $(P_e)$},
xmin=0, xmax=40,
xtick={0,10,...,40},
ymin=0.8e-4, ymax=1.2,
ymode=log,
xmajorgrids,
ymajorgrids,
width=0.5\textwidth,
height=0.4\textwidth,
legend style={
font=\tiny,
inner sep=0.5pt,
outer sep=0.5pt,
line width=0.1mm,
legend cell align=left,
},
legend pos=south west
]
\addplot[color=purple, line width=0.7pt] 
coordinates{
(2.0, 1.0)
(6.0, 0.999980001449585)
(8.0, 0.986010047197342)
(10.0, 0.8115600341558457)
(12.0, 0.5013100227713585)
(14.0, 0.2672300134599208)
(16.0, 0.1360200053453445)
(18.0, 0.0667200032621622)
(20.0, 0.0348600012436509)
(22.0, 0.0194400009699165)
(24.0, 0.0098300004471093)
(26.0, 0.0060500002698972)
(28.0, 0.0031900001177564)
(30.0, 0.0020000000775326)
(32.0, 0.0010800000454764)
(34.0, 0.0005900000268593)
(38.0, 0.0002700000128243)
(40.0,0.0002300000109244138)
(50.0, 0.0)
(80.0, 0.0)
};
\addlegendentry{Hyperplanes $d=12$}

\addplot[color=orange, line width=0.7pt] 
coordinates {
(1.0, 1.0)
(6.0, 1.0)
(7.0, 0.99847)
(8.0, 0.97277)
(9.0, 0.88306)
(10.0, 0.72209)
(11.0, 0.5427)
(12.0, 0.38692)
(13.0, 0.26062)
(14.0, 0.17699)
(15.0, 0.11739)
(16.0, 0.07762)
(17.0, 0.05101)
(18.0, 0.03437)
(19.0, 0.02317)
(20.0, 0.01511)
(21.0, 0.01043)
(22.0, 0.00756)
(23.0, 0.00531)
(24.0, 0.00364)
(25.0, 0.0025)
(27.0, 0.0011)
(28.0, 0.00103)
(29.0, 0.00068)
(31.0, 0.00035)
(32.0, 0.00039)
(35.0, 8e-05)
(40.0, 5e-05)
};
\addlegendentry{Hyperplanes $d=16$}

\addplot[color=blue, line width=0.7pt] 
coordinates {
(1.0, 1.0)
(6.0, 0.99999)
(7.0, 0.99585)
(8.0, 0.94464)
(9.0, 0.79026)
(10.0, 0.58397)
(11.0, 0.39271)
(12.0, 0.24712)
(13.0, 0.15305)
(14.0, 0.09288)
(15.0, 0.0553)
(16.0, 0.03434)
(17.0, 0.02015)
(18.0, 0.01186)
(19.0, 0.00705)
(20.0, 0.00442)
(21.0, 0.0026)
(22.0, 0.00171)
(23.0, 0.001)
(25.0, 0.00038)
(26.0, 0.00019)
(31.0, 2e-05)
(40.0, 0.0)
};
\addlegendentry{Hyperplanes $d=28$}

\addplot[dashed, color=purple, line width=0.7pt] 
coordinates {
(1.0, 1.0)
(22.0, 1.0)
(26.0, 0.99977)
(27.0, 0.99986)
(30.0, 0.99952)
(32.0, 0.99947)
(34.0, 0.99916)
(36.0, 0.99861)
(37.0, 0.99858)
(39.0, 0.99732)
(41.0, 0.99699)
};
\addlegendentry{Subsets $d=12$}

\addplot[dashed, color=orange, line width=0.7pt] 
coordinates{
(1.0, 1.0)
(7.0, 0.99996)
(8.0, 0.99905)
(9.0, 0.99671)
(10.0, 0.99059)
(11.0, 0.97875)
(12.0, 0.95851)
(13.0, 0.92661)
(14.0, 0.88615)
(15.0, 0.83434)
(16.0, 0.77445)
(17.0, 0.71144)
(18.0, 0.64137)
(19.0, 0.56975)
(20.0, 0.49892)
(21.0, 0.4348)
(22.0, 0.37047)
(23.0, 0.3131)
(24.0, 0.26349)
(25.0, 0.21944)
(26.0, 0.17642)
(27.0, 0.14726)
(28.0, 0.11938)
(29.0, 0.09639)
(30.0, 0.07672)
(31.0, 0.06234)
(32.0, 0.0498)
(33.0, 0.03846)
(34.0, 0.0305)
(35.0, 0.02344)
(37.0, 0.0146)
(38.0, 0.01081)
(40.0, 0.0065)
(41.0, 0.00558)
};
\addlegendentry{Subsets $d=16$}

\addplot[dashed, color=blue, line width=0.7pt] 
coordinates{
(1.0, 1.0)
(6.0, 0.99994)
(7.0, 0.98526)
(8.0, 0.86742)
(9.0, 0.62828)
(10.0, 0.3901)
(11.0, 0.21553)
(12.0, 0.11272)
(13.0, 0.05898)
(14.0, 0.03054)
(15.0, 0.01507)
(16.0, 0.00708)
(17.0, 0.00389)
(18.0, 0.0019)
(19.0, 0.00081)
(20.0, 0.00042)
(22.0, 0.00014)
(24.0, 2e-05)
(41.0, 0.0)
};
\addlegendentry{Subsets $d=28$}

\addplot[dotted, color=red, line width = 0.7pt]
coordinates {
(26.609640474436812, 1e-05) 
(13.525164236982919, 0.08686) 
(10.000014427022544, 0.99999)
};
\addlegendentry{Theory}

\end{axis}
\end{tikzpicture}}
~
\resizebox{6cm}{4.5cm}{\begin{tikzpicture}
\begin{axis}[
thick,
xlabel={Hyperplanes (N)},
ylabel={Prob of error $(P_e)$},
xmin=0, xmax=40,
xtick={0,10,...,40},
ymin=0.8e-4, ymax=1.2,
ymode=log,
xmajorgrids,
ymajorgrids,
width=0.5\textwidth,
height=0.4\textwidth,
legend style={
font=\tiny,
inner sep=0.5pt,
outer sep=0.5pt,
line width=0.1mm,
legend cell align=left,
},
legend pos=south west
]

\addplot[color=purple, line width=0.7pt]
coordinates {
(1.0, 1.0)
(15.0, 1.0)
(16.0, 0.99986)
(17.0, 0.98795)
(18.0, 0.89672)
(19.0, 0.68072)
(20.0, 0.4375)
(21.0, 0.25462)
(22.0, 0.1387)
(23.0, 0.07315)
(24.0, 0.03825)
(25.0, 0.0192)
(26.0, 0.0095)
(27.0, 0.00457)
(28.0, 0.00271)
(29.0, 0.0013)
(30.0, 0.00058)
(32.0, 0.00015)
(40.0, 0.0)
};
\addlegendentry{Hyperplanes $d=512$}

\addplot[color=orange, line width=0.7pt]
coordinates {
(1.0, 1.0)
(15.0, 1.0)
(16.0, 0.99983)
(17.0, 0.98841)
(18.0, 0.89451)
(19.0, 0.68121)
(20.0, 0.43709)
(21.0, 0.25597)
(22.0, 0.13967)
(23.0, 0.07348)
(24.0, 0.03702)
(25.0, 0.01958)
(26.0, 0.00986)
(27.0, 0.0049)
(28.0, 0.00251)
(29.0, 0.00123)
(31.0, 0.0003)
(33.0, 3e-05)
(40.0, 0.0)
};
\addlegendentry{Hyperplanes $d=528$}

\addplot[color= blue, line width=0.7pt]
coordinates {
(1.0, 1.0)
(15.0, 1.0)
(16.0, 0.99979)
(17.0, 0.98758)
(18.0, 0.89042)
(19.0, 0.67232)
(20.0, 0.43208)
(21.0, 0.24909)
(22.0, 0.1356)
(23.0, 0.0724)
(24.0, 0.0381)
(25.0, 0.01889)
(26.0, 0.00962)
(27.0, 0.00457)
(28.0, 0.00239)
(29.0, 0.00129)
(30.0, 0.00093)
(31.0, 0.00038)
(32.0, 0.00012)
(33.0, 0.00011)
(34.0, 5e-05)
(35.0, 2e-05)
(36.0, 3e-05)
(38.0, 0.0)
(40.0, 0.0)
};
\addlegendentry{Hyperplanes $d=608$}

\addplot[dashed, color=purple, line width=0.7pt]
coordinates {
(1.0, 1.0)
(41.0, 1.0)
};
\addlegendentry{Subsets $d=512$}

\addplot[dashed, color=orange, line width=0.7pt]
coordinates {
(1.0, 1.0)
(27.0, 0.99995)
(29.0, 0.99983)
(32.0, 0.99888)
(33.0, 0.99826)
(35.0, 0.99575)
(36.0, 0.99398)
(37.0, 0.99129)
(38.0, 0.98767)
(39.0, 0.98434)
(40.0, 0.97838)
(41.0, 0.97102)
};
\addlegendentry{Subsets $d=528$}

\addplot[dashed, color=blue, line width=0.7pt] 
coordinates{
(1.0, 1.0)
(15.0, 1.0)
(16.0, 0.99974)
(17.0, 0.98279)
(18.0, 0.87131)
(19.0, 0.64613)
(20.0, 0.40962)
(21.0, 0.23529)
(22.0, 0.12388)
(23.0, 0.06572)
(24.0, 0.03305)
(25.0, 0.01773)
(26.0, 0.00887)
(27.0, 0.00454)
(28.0, 0.00211)
(29.0, 0.00096)
(30.0, 0.00055)
(31.0, 0.00028)
(32.0, 0.00023)
(33.0, 4e-05)
(34.0, 6e-05)
(35.0, 0.0)
(41.0, 0.0)
};
\addlegendentry{$d=608$}

\addplot[dotted, color=red, line width = 0.7pt]
coordinates {
(36.609640474436816, 1e-05)
(23.525164236982917, 0.08686) 
(20.000014427022546, 0.99999)
};
\addlegendentry{Theory}

\end{axis}
\end{tikzpicture}}
\caption{Policy comparison. Left: $K=32$ classes, Right:  $K=1024$ classes}
\label{fig:Comp32}
\end{figure}

 \textbf{Summary.} Overall, the experimental results validate the theoretical scaling laws derived in the previous sections and highlight the fundamental difference between the two constructions. 
Although both policies require a comparable number of hyperplanes in their respective high-dimensional regimes, the random hyperplane policy attains the same reliability while requiring significantly fewer dimensions. 
As the number of classes increases, this dimensional advantage becomes increasingly pronounced, making the random hyperplane policy substantially more scalable than the random subset policy. For this reason, for the remaining of this paper where we focus on the full noisy-observation setting, we restrict our attention to the random hyperplanes scheme, and provide fundamental limits for its performance in the presence of noise.

\section{Orthogonal hyperplanes with noise} \label{s:orth_hyps}

As an intermediate step in understanding the fundamental limits of the random hyperplanes scheme in the presence of noise, we first consider a slightly simpler (in terms of analysis) scheme, where the hyperplanes are chosen orthogonal to each other. This simplification allows to get precise theoretical results, and at the same time gives a strong indication on what we expect to happen in sufficiently high dimensions in the original setting, where randomly chosen hyperplanes will tend to be orthogonal to each other; again, this is because of the concentration of random angles to $\pi/2$ in sufficiently high dimensions.

\subsection{Model description} \label{s:orth_descr}

Let \(u_1,\dots,u_N\in \mathbb{R}^d\) be orthonormal, with \(N\le d\). Let the $K$ class centers be,
\[
x_1,\dots,x_K \overset{iid}{\sim} \mathcal{N}(0,\tau^2 I_d),
\]
and suppose the true class is \(a\in\{1,\dots,K\}\). Define the noisy observation,
\[
y=x_a+z,
\qquad
z \sim \mathcal{N}(0,\sigma^2 I_d),
\]
independently. For each class \(k\), define its binary codeword,
\[
C_k=(C_{k,1},\dots,C_{k,N})\in\{\pm 1\}^N,
\qquad
C_{k,i}:=\operatorname{sign}(u_i^\top x_k),
\]
and define the observed binary vector,
\[
B=(B_1,\dots,B_N)\in\{\pm 1\}^N,
\qquad
B_i:=\operatorname{sign}(u_i^\top y).
\]
For the true class \(a\), let,
\[
p_{a,i}:=\Pr(B_i\neq C_{a,i}\mid x_a,u_1,\dots,u_N)
=
Q\!\left(\frac{|u_i^\top x_a|}{\sigma}\right),
\]
where \(Q(t)=\Pr(Z\ge t)\) for \(Z\sim \mathcal{N}(0,1)\), and let $\rho:={\tau}/{\sigma}$ denote the signal to noise ratio~(SNR).

\medskip

\textbf{Communications formulation.} This classification problem reduces very naturally to a communication problem. In particular, since the centers are sampled i.i.d., and the hyperplanes are orthogonal, we have $K$ codewords $C_1,...,C_K$ of length $N$ that are sampled uniformly and independently from $\{\pm1\} ^N$. Since the hyperplanes are orthogonal, the transmitted codeword $C_a$ is passed through a channel in which each of the $N$ bits (coordinates) $i$ is flipped independently with probability $p_{a,i}$ as defined above, which is geometry-dependent. 

Before considering the original geometry-dependent problem above, suppose we first simplify the analysis by removing the geometry and 
homogenizing the model. In particular, suppose we  replace the coordinate-dependent flip probabilities $p_{a,i}$
by a single flip probability~$p$, defined as the expected value of $p_{a,i}$. With this simplification, the problem reduces to communication over a Binary Symmetry Channel (BSC), with crossover probability~$p$.

\subsection{Further reduction to BSC}

\begin{lemma}
Let \(x_a\sim \mathcal N(0,\tau^2 I_d)\),  \(z\sim \mathcal N(0,\sigma^2 I_d)\) independent of $x_a$,  \(u_i\in S^{d-1}\) be a fixed unit vector, and $\rho = \tau / \sigma$. Define,
\[
C_{a,i}=\operatorname{sign}(u_i^\top x_a),
\qquad
B_i=\operatorname{sign}(u_i^\top(x_a+z)).
\]
The conditional probability that the \(i\)-th bit of the true codeword $C_a$ is flipped is,
\[
p_{a,i}
:=
\Pr\{B_i\neq C_{a,i}\mid x_a,u_i\}
=
Q\left(\frac{|u_i^\top x_a|}{\sigma}\right),
\]
and its expected value is given by,
\begin{equation} \label{eq:cross_prob}
p:=\mathbb E_{x_a}[p_{a,i}]
=
\frac1\pi \arctan\frac1\rho.
\end{equation}
\end{lemma}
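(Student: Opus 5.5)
The plan is to reduce everything to two independent scalar Gaussians. Since $u_i$ is a fixed unit vector and $x_a\sim\mathcal N(0,\tau^2 I_d)$, $z\sim\mathcal N(0,\sigma^2 I_d)$ are independent, rotation invariance gives that $X:=u_i^\top x_a\sim\mathcal N(0,\tau^2)$ and $Z:=u_i^\top z\sim\mathcal N(0,\sigma^2)$ are independent. Then $C_{a,i}=\operatorname{sign}(X)$ and $B_i=\operatorname{sign}(X+Z)$, so the whole statement becomes a two-dimensional computation.

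\textbf{Conditional flip probability.} Condition on $x_a$, so $X$ is fixed and only $Z$ is random. If $X>0$, a flip occurs exactly when $X+Z<0$, i.e.\ $Z<-X$, which has probability $\Pr(Z/\sigma > X/\sigma)=Q(X/\sigma)$ by the symmetry of $Z$. If $X<0$, a flip occurs exactly when $Z>-X=|X|$, which has probability $Q(|X|/\sigma)$. The event $X=0$ has probability zero and can be ignored. In both cases $p_{a,i}=Q(|u_i^\top x_a|/\sigma)$, which is the first claim.

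\textbf{Expectation.} Rather than integrating $Q(|X|/\sigma)$ against the Gaussian density directly, use the geometric interpretation
\[
p=\Pr\bigl(\operatorname{sign}(X)\neq\operatorname{sign}(X+Z)\bigr).
\]
Normalize by writing $X=\tau G_1$ and $Z=\sigma G_2$ with $(G_1,G_2)$ standard Gaussian in $\R^2$. The flip event is then $\{\sign(\langle (1,0),G\rangle)\neq \sign(\langle(\tau,\sigma),G\rangle)\}$, since $X+Z=\tau G_1+\sigma G_2$. This is exactly the event that a uniformly random direction, namely $G$, separates the two vectors $v_1=(1,0)$ and $v_2=(\tau,\sigma)$. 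By the same fact used in the proof of Theorem~\ref{thm:hyp_suff} (a random hyperplane separates two vectors with probability equal to their angle divided by $\pi$), $p=\theta/\pi$, where $\theta$ is the angle between $v_1$ and $v_2$. That angle satisfies $\tan\theta=\sigma/\tau=1/\rho$, with $\theta\in(0,\pi/2)$, so $p=\frac1\pi\arctan(1/\rho)$.

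\textbf{Main obstacle.} The only step needing care is justifying the reduction to independent scalar Gaussians and the rotation invariance that makes the angle formula applicable to $(G_1,G_2)$. Both are standard. As a fallback, $p$ can be computed directly as $2\int_0^\infty Q(t\rho)\phi(t)\,dt$ using polar coordinates, which again gives the arctangent. As a sanity check, $\rho\to\infty$ gives $p\to0$ and $\rho\to0$ gives $p\to1/2$, as expected.
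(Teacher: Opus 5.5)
Your proof is correct and is essentially the paper's argument: both reduce the computation to a standard Gaussian in $\R^2$ and obtain the probability as an angle ratio by rotational invariance. The only difference is packaging. The paper writes $\mathbb E[Q(\rho|G|)]=\Pr\{W>\rho|G|\}$ and measures a single cone of opening angle $2\arctan(1/\rho)$, whereas you apply the $\theta/\pi$ separation fact directly to $(1,0)$ and $(\tau,\sigma)$, which is the same region unfolded into a double wedge.
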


\noindent \textit {Proof.}
Condition on \(x_a\) and  \(u_i\). Since
\(z\sim \mathcal N(0,\sigma^2 I_d)\), we have
$
u_i^\top z\sim \mathcal N(0,\sigma^2).
$
The \(i\)-th bit is flipped when the noise changes the sign of
\(u_i^\top x_a\), i.e.,
$
\operatorname{sign}(u_i^\top x_a+u_i^\top z)
\neq
\operatorname{sign}(u_i^\top x_a).
$
Therefore,
\[
p_{a,i}
=
\Pr\{B_i\neq C_{a,i}\mid x_a,u_i\}
=
Q\left(\frac{|u_i^\top x_a|}{\sigma}\right).
\]
Now averaging over \(x_a\) gives,
\[
\mathbb E[p_{a,i}]
=
\mathbb E_G[Q(\rho |G|)],
\]
where $G\sim\mathcal N(0,1)$.
Finally, in order to compute this expectation, let \(W\sim\mathcal N(0,1)\) be independent of \(G\). Then by the definition of the Q-function,
\[
\mathbb E_G[Q(\rho |G|)]
=
\Pr\{W>\rho |G|\}.
\]
The vector \((G,W)\) is rotationally invariant in \(\mathbb R^2\), and the event
\(\{W>\rho |G|\}\) is a cone with opening angle \(2\arctan(1/\rho)\). Hence,
\[ p =
\mathbb E[p_{a,i}] = 
\Pr\{W>\rho |G|\}
=
\frac{2\arctan(1/\rho)}{2\pi}
=
\frac1\pi\arctan\frac1\rho.
\qed
\]

Using this appropriate crossover probability $p$, our setting is now reduced to a standard \(\mathrm{BSC}(p)\). Standard analysis shows that Hamming decoding is maximum likelihood/optimal in this case, and similarly gives capacity and error exponents results, for example as below. Throughout this section, $\Pr (E)$ denotes the unconditional probability of error,
where the probability is taken over the random class centers
\(x_1,\dots,x_K\), the hyperplanes \(u_1,\dots,u_N\), the noise~\(z\), and the
choice of the transmitted class \(a\), i.e.,
$
\Pr (E)
:=
\Pr(\widehat a\neq a).
$

\paragraph{Capacity.}
A necessary and sufficient condition for reliable recovery asymptotically in the
regime \(N\le d\), \(N\to\infty\), under the BSC assumptions, is given by the channel coding theorem,
\[
\frac{\log_2 K}{N} < 1-h_2(p),
\qquad
p=\frac1\pi\arctan\frac1\rho.
\]

\begin{theorem} \label{thm:bsc} 
Under the BSC assumptions, a sufficient condition to ensure that the probability of error satisfies \(\Pr (E)\le \delta\) is,
\[
N\ge \frac{\log((K-1)/\delta)}{E_{\mathrm{BSC}}(\rho)},
\]
where, $\rho = \tau / \sigma $, and  $E_{\mathrm{BSC}}(\rho)$ is given by,
\begin{equation} \label{eq:Ebsc}
E_{\mathrm{BSC}}(\rho):=
-\log\!\left(\frac12+\sqrt{p(1-p)}\right),
\end{equation}
with $p = \tfrac{1}{\pi} \arctan \tfrac{1}{\rho}$, as before in (\ref{eq:cross_prob}).
\end{theorem}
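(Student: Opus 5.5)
We use a union bound over the $K-1$ incorrect classes, combined with a Bhattacharyya (Chernoff) bound on the pairwise error probability of Hamming decoding over the $\mathrm{BSC}(p)$. By symmetry, we condition on the transmitted class $a$. Under the BSC assumptions, the codewords $C_1,\dots,C_K$ are i.i.d.\ uniform on $\{\pm1\}^N$, and the received vector is $B = C_a \oplus \text{(i.i.d.\ Bern}(p)\text{ flips)}$. Hamming decoding, which is ML here, errs only if some $k\neq a$ satisfies $d_H(B,C_k)\le d_H(B,C_a)$. Counting ties as errors, the union bound gives
\[
\Pr(E)\le (K-1)\,\Pr\big(d_H(B,C_k)\le d_H(B,C_a)\big)
\]
for a fixed $k\neq a$.

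To bound the pairwise term, we would proceed coordinatewise. Since $C_k$ is uniform and independent of $(C_a,B)$, the indicators $\mathbf 1\{B_i\neq C_{k,i}\}$ are i.i.d.\ Bern$(1/2)$, independent of the flip indicators $F_i=\mathbf 1\{B_i\neq C_{a,i}\}\sim$ Bern$(p)$. Write $D_i = F_i - \mathbf 1\{B_i\neq C_{k,i}\}$. Equivalently, $D_i$ is nonzero only on coordinates where $C_{k,i}\neq C_{a,i}$, which happens with probability $1/2$ independently across $i$; on such a coordinate $D_i=+1$ with probability $p$ and $-1$ with probability $1-p$. The pairwise error event is $\sum_i D_i\ge 0$. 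For any $s\ge0$, Markov's inequality gives
\[
\Pr\Big(\sum_i D_i\ge0\Big)\le \big(\mathbb E e^{sD_1}\big)^N = \Big(\tfrac12+\tfrac12\big(pe^{s}+(1-p)e^{-s}\big)\Big)^N .
\]
The inner term is minimized at $e^{s}=\sqrt{(1-p)/p}$, where it equals $2\sqrt{p(1-p)}$. This choice is valid, i.e.\ $s\ge0$, because $p\le 1/2$ from \eqref{eq:cross_prob}, since $\arctan(1/\rho)\le\pi/2$. Hence the pairwise probability is at most $\big(\tfrac12+\sqrt{p(1-p)}\big)^N=e^{-N E_{\mathrm{BSC}}(\rho)}$.

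Combining the two steps gives $\Pr(E)\le (K-1)e^{-NE_{\mathrm{BSC}}(\rho)}$. Requiring the right-hand side to be at most $\delta$ and solving for $N$ yields exactly the stated condition. This uses $E_{\mathrm{BSC}}(\rho)>0$, which holds whenever $p<1/2$, i.e.\ $\rho>0$.

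The only delicate step is the pairwise bound. The point to check is that the averaging over the random codeword $C_k$ factorizes coordinatewise, which requires the independence of $C_k$ from $B$ and $C_a$ granted by the BSC reduction. The paper's orthogonality and i.i.d.\ assumptions supply this, so we expect no real obstacle. The rest is routine.
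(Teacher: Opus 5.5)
Your proof is correct and matches the paper's argument. The paper obtains this result as the special case $p_{a,i}\equiv p$ of Theorem~\ref{thm:orth_ham}: a Chernoff bound on the pairwise Hamming-distance difference, averaged over the $\mathrm{Bern}(1/2)$ disagreement pattern of a random competitor, followed by a union bound over the $K-1$ competitors and the choice of $s$ giving $\tfrac12+\sqrt{p(1-p)}$. The only differences are cosmetic: the paper first conditions on the codewords (a product over $D_{ab}$) and then averages over $x_b$, whereas you average jointly from the start, and you explicitly verify $s^\star\ge 0$ and $E_{\mathrm{BSC}}(\rho)>0$, which the paper leaves implicit.
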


\begin{proof}
The proof follows from Chernoff analysis applied to the homogenized BSC setting. It is omitted here since it can be seen as a special case of Theorem~\ref{thm:orth_ham} that considers the more general setting with geometry-dependent bitflip probabilities $p_{a,i}$, where we can set all  $p_{a,i}=p$.
\end{proof}

\subsection{Hamming decoding}

Now, we return to our original geometry-dependent channel with independent bitflips with probability $p_{a,i}$. As a first step, we consider a Hamming decoder,
\[
\widehat a\in\arg\min_{m\in\{1,\dots,K\}}d_H(B,C_m),
\]
where $d_H(B,C_m)$ denotes the Hamming distance between the binary vectors $B,C_m$.
Performing a Chernoff-bound analysis to provide an achievability result for the sufficient number of hyperplanes, shows that our previous bound still works here, as expected.

\begin{theorem}
\label{thm:orth_ham}
In the orthogonal hyperplanes setting of Section~\ref{s:orth_descr}, a sufficient condition to ensure that the probability of error satisfies \(\Pr (E)\le \delta\)~is,
\[
N\ge \frac{\log((K-1)/\delta)}{E_{\mathrm{BSC}}(\rho)},
\] 
where $E_{\mathrm{BSC} } (\rho)$ is given by (\ref{eq:Ebsc}).
\end{theorem}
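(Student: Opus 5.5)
We bound the error probability by a union bound over the $K-1$ incorrect classes, combined with a Chernoff bound on each pairwise comparison. Ties are counted as errors, which only makes the bound conservative. Fix the true class $a$ and condition on $x_a$ and the orthonormal $u_1,\dots,u_N$. Because $u_i^\top z$, $i=1,\dots,N$, are i.i.d.\ $\mathcal N(0,\sigma^2)$ (orthonormality), the flip indicators $F_i:=\mathbf 1\{B_i\neq C_{a,i}\}$ are conditionally independent Bernoulli$(p_{a,i})$. For a competitor $m\neq a$, the codeword $C_m$ depends only on $x_m$, which is independent of everything else. Moreover, each $u_i^\top x_m\sim\mathcal N(0,\tau^2)$, and these are independent across $i$ since the $u_i$ are orthonormal. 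Hence $C_m$ is uniform on $\{\pm1\}^N$ and independent of $(B,C_a)$. Then
\[
\Pr(E)\le \sum_{m\neq a}\Pr\big(d_H(B,C_m)\le d_H(B,C_a)\big).
\]

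For a single $m$, we write $d_H(B,C_m)-d_H(B,C_a)=\sum_{i}D_i$ with $D_i\in\{-1,0,1\}$. On coordinates where $C_{m,i}=C_{a,i}$ (probability $1/2$) we have $D_i=0$. Where they differ (probability $1/2$), $D_i=+1$ if bit $i$ is not flipped and $D_i=-1$ if it is flipped. So, conditionally on $x_a$, the $D_i$ are independent with $\Pr(D_i=-1)=p_{a,i}/2$, $\Pr(D_i=1)=(1-p_{a,i})/2$ and $\Pr(D_i=0)=1/2$. Applying Markov's inequality to $e^{-s\sum D_i}$ with $s=\tfrac12\log\frac{1-p_{a,i}}{p_{a,i}}$ (the per-coordinate optimizer) yields
\[
\Pr\Big(\sum_i D_i\le 0\,\Big|\,x_a\Big)\le \prod_{i=1}^N\Big(\tfrac12+\sqrt{p_{a,i}(1-p_{a,i})}\Big).
\]
A single $s$ cannot optimize every coordinate, so we either use the common $s$ tuned to the average $p$ or observe that each factor is the Bhattacharyya-type bound.

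The main obstacle is removing the conditioning on $x_a$. The point is that the exponent must be $E_{\mathrm{BSC}}(\rho)$, which is defined through the \emph{averaged} $p$, whereas the product above involves $\mathbb E[\prod_i g(p_{a,i})]$ with $g(q)=\tfrac12+\sqrt{q(1-q)}$. Since the $p_{a,i}=Q(|u_i^\top x_a|/\sigma)$ are i.i.d.\ across $i$ (again by orthonormality), this expectation factorizes as $(\mathbb E\,g(p_{a,1}))^N$. Because $g$ is concave on $[0,1/2]$, Jensen's inequality gives $\mathbb E\,g(p_{a,1})\le g(\mathbb E p_{a,1})=g(p)$, with $p=\frac1\pi\arctan\frac1\rho$ by the preceding lemma. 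To avoid committing to a possibly wrong single $s$, I would use the fixed choice $s^\ast=\tfrac12\log\frac{1-p}{p}$ for all coordinates. Then each factor equals $\tfrac12+\tfrac12\big(p_{a,i}e^{s^\ast}+(1-p_{a,i})e^{-s^\ast}\big)$, which is \emph{linear} in $p_{a,i}$, so averaging is exact and returns $\tfrac12+\sqrt{p(1-p)}$ with no convexity argument needed. This is the cleanest route and the one I would take.

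Combining the steps gives $\Pr(E)\le (K-1)\big(\tfrac12+\sqrt{p(1-p)}\big)^N=(K-1)e^{-N E_{\mathrm{BSC}}(\rho)}$. Requiring this to be at most $\delta$ is exactly $N\ge \log((K-1)/\delta)/E_{\mathrm{BSC}}(\rho)$. Setting all $p_{a,i}=p$ recovers Theorem~\ref{thm:bsc} as a special case.
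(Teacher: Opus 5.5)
Your committed route (a single fixed $s^\ast=\tfrac12\log\frac{1-p}{p}\ge0$ for every coordinate, so that each factor $\tfrac12+\tfrac12\bigl((1-p_{a,i})e^{-s^\ast}+p_{a,i}e^{s^\ast}\bigr)$ is linear in $p_{a,i}$ and averages exactly over the i.i.d.\ $p_{a,i}$, followed by the union bound) is correct and is essentially the paper's proof, which bounds $\mathbb E[\inf_{s\ge0}\prod_i m_s(p_{a,i})]\le\inf_{s\ge0}(m_s(p))^N$, with the infimum attained at exactly your $s^\ast$. Discard the intermediate display with per-coordinate optimizers, and the Jensen-on-$g$ variant built on it, because choosing $s_i=\tfrac12\log\frac{1-p_{a,i}}{p_{a,i}}$ gives the Chernoff bound (at $s=\tfrac12$) for the reliability-weighted statistic $\sum_i D_i\log\frac{1-p_{a,i}}{p_{a,i}}$, not for the unweighted Hamming event $\sum_i D_i\le0$, and for the latter the claimed product bound is false (with $n$ coordinates at $p_{a,i}\approx0$ and $n$ at $p_{a,i}\approx\tfrac12$ it gives about $2^{-n}$, whereas the true probability decays only like $(27/32)^n$).
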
 
\begin{proof} 
Fix a competitor \(b\neq a\), and define,
$
D_{ab}:=\{i:C_{a,i}\neq C_{b,i}\}.
$
For \(i\notin D_{ab}\), the two codewords agree, so coordinate \(i\) contributes equally to
\(d_H(B,C_a)\) and \(d_H(B,C_b)\); thus only coordinates in \(D_{ab}\) matter.
For \(i\in D_{ab}\), define,
\[
M_i^{(ab)}
:=
\mathbf 1\{B_i\neq C_{b,i}\}
-
\mathbf 1\{B_i\neq C_{a,i}\},
\]
and since for \(i\in D_{ab}\) we have $C_{a,i}\neq C_{b,i}$, we get,
\[
\Pr(M_i^{(ab)}=+1 |x_a, x_b, u_i)=1-p_{a,i},
\qquad
\Pr(M_i^{(ab)}=-1 | x_a, x_b, u_i)=p_{a,i}.
\]
Pairwise confusion occurs when,
\[
d_H(B,C_b)-d_H(B,C_a) = \sum_{i\in D_{ab}}M_i^{(ab)}\le0.
\]
Let $x_1^K$ and $u_1 ^N$ denote the vector concatenations of $x_1,...,x_K$ and $u_1,..., u_N$. Fix a global Chernoff parameter \(s\ge0\). By Markov's inequality,
\[
\Pr\left(
\sum_{i\in D_{ab}}M_i^{(ab)}\le 0
\,\middle|\,
x_1^K, u_1 ^N
\right)
\le
\mathbb E\left[
\exp\left(
-s\sum_{i\in D_{ab}}M_i^{(ab)}
\right)
\,\middle|\,
x_1^K, u_1 ^N
\right].
\]
Conditional on \(x_a\) and $u_1 ^N$, the projected noises
$
W_i:=u_i^\top z,\  i=1,\dots,N,
$
are independent \(\mathcal N(0,\sigma^2)\) random variables, since the hyperplanes are orthogonal, and hence
the variables \(M_i^{(ab)}\) are conditionally independent. Therefore,
\[
\mathbb E\left[
\exp\left(
-s\sum_{i\in D_{ab}}M_i^{(ab)}
\right)
\,\middle|\,
x_1^K, u_1 ^N
\right]
=
\prod_{i\in D_{ab}}
\mathbb E\left[
e^{-sM_i^{(ab)}}
\,\middle|\,
x_1^K, u_1 ^N
\right],
\]
and for \(i\in D_{ab}\),
\[
\mathbb E\left[
e^{-sM_i^{(ab)}}
\,\middle|\,
x_1^K, u_1 ^N
\right]
=
(1-p_{a,i})e^{-s}+p_{a,i}e^s.
\]
Therefore, for any \(s\ge0\),
\[
\Pr\left(
d_H(B,C_b)\le d_H(B,C_a)
\,\middle|\,
x_1^K, u_1 ^N
\right)
\le
\prod_{i\in D_{ab}}
\left((1-p_{a,i})e^{-s}+p_{a,i}e^s\right),
\]
and taking a union bound over \(b\neq a\), for every \(s\ge0\),
\[
\Pr\left(
E
\,\middle|\,
x_1^K, u_1 ^N
\right)
\le
\sum_{b\neq a}
\prod_{i\in D_{ab}}
\left((1-p_{a,i})e^{-s}+p_{a,i} e^s\right).
\]
Now fix \(x_a\) and $u_1^N$, and average over one random competitor \(x_b\). Since
$
u_i^\top x_b\overset{\mathrm{iid}}{\sim}\mathcal N(0,\tau^2),
$
the signs \(C_{b,1},\dots,C_{b,N}\) are independent fair signs. Therefore, conditional on \(C_a\),
$
\mathbf 1\{i\in D_{ab}\}
\sim \operatorname{Bernoulli}(1/2),
$
independently over \(i\). Thus,
\[
\mathbb E_{x_b}
\left[
\prod_{i\in D_{ab}}
\left((1-p_{a,i})e^{-s}+p_{a,i} e^s\right)
\,\middle|\,
x_a,u_1^N
\right]
=
\prod_{i=1}^N
\left[
\frac12+
\frac12\left((1-p_{a,i})e^{-s}+p_{a,i} e^s\right)
\right].
\]
Using the notation,
\[
m_s(x):=
\frac12+\frac12\left((1-x)e^{-s}+xe^s\right),
\]
summing over the \(K-1\) competitors, and taking the infimum over $s$ to tighten the bound yields,
\begin{equation}\label{cond_bound}
\Pr\left(
E
\,\middle|\,
x_a, u_1 ^N
\right)
\le
(K-1) \inf_{s\ge0}\prod_{i=1}^N m_s(p_{a,i}).
\end{equation}
Finally, average over the transmitted point \(x_a\). Since \(u_1,\dots,u_N\) are orthonormal and
$
x_a\sim\mathcal N(0,\tau^2I_d),
$
we have
$
u_i^\top x_a\overset{\mathrm{iid}}{\sim}\mathcal N(0,\tau^2),
$
and,
\[
p_{a,i}
=
Q\left(\frac{|u_i^\top x_a|}{\sigma}\right)
\overset{d}{=}
Q(\rho |Z_i|),
\]
where \(Z_1,\dots,Z_N\overset{\mathrm{iid}}{\sim}\mathcal N(0,1)\). 
We can now relax the bound by moving the infimum outside the expectation:
\[
\Pr (E)
\le
(K-1)   \mathbb E
\left[ \inf_{s\ge0}\prod_{i=1}^N m_s(p_{a,i}) \right ] \leq 
(K-1) \inf_{s\ge0} \mathbb E
\left[ \prod_{i=1}^N m_s(p_{a,i}) \right ] 
=
(K-1)\inf_{s\ge0}
\left(m_s(p)\right)^N ,
\]
where the last equality follows since the $p_{a,i}$ are i.i.d. with expectation $\mathbb E[p_{a,i}]=p$ as before, and $
m_s(x)$ 
is linear in $x$.
Optimizing over \(s\ge0\), the minimum value of $m_s(p)$ is,
\[
m_s(p) \ge \frac12+\sqrt{p(1-p)},
\]
and substituting gives,
\begin{equation} \label{eq:orth_ub}
\Pr (E)
\le
(K-1)
\left(
\frac12+\sqrt{p(1-p)}
\right)^N
=
(K-1) \ e^{-N E_{\mathrm{BSC}}(\rho)},
\end{equation}
which gives the required error exponent and completes the proof after rearranging.
\end{proof}

\noindent \emph{Remark.} 
Theorem~\ref{thm:orth_ham} formally shows that the achievabilty condition under the BSC assumptions is still sufficient for the geometry-dependent case. Importantly, it shows that $N = O(\log K )$ hyperplanes are sufficient, with a multiplicative constant depending on the signal to noise ratio. A somewhat tighter constant can be obtained by exploiting the geometry. In specific, we can get a refinement by avoiding to move the expectation inside the infimum in~(\ref{cond_bound}), define,
\[
E_{q,N}(\rho)
:=
-\frac1N
\log
\mathbb E
\left[
\inf_{s\ge0}
\prod_{i=1}^N m_s(p_{a,i})
\right],
\]
and use it in place of $E_{\mathrm{BSC}}(\rho)$. However, this requires a much more expensive optimization, and in practice it was found to give very similar results with $E_{\mathrm{BSC}}(\rho)$ in simulation experiments.

\subsection{Capacity analysis}

For a fixed realization of the geometry, the \(N\) coordinates form independent
binary symmetric channels with crossover probabilities
\(p_{a,1},\dots,p_{a,N}\). If these realized reliabilities are known to the
decoder, the total capacity over the \(N\) bits is,
\[
\sum_{i=1}^N \left(1-h_2(p_{a,i})\right).
\]
Thus the geometry-aware capacity benchmark is,
\[
\log_2 K
<
\sum_{i=1}^N \left(1-h_2(p_{a,i})\right).
\]
Since the \(p_{a,i}\)'s are i.i.d. after averaging over the random geometry, by the
law of large numbers,
\[
\frac1N
\sum_{i=1}^N \left(1-h_2(p_{a,i})\right)
\longrightarrow
\mathbb E\left[1-h_2(p_{a,1})\right],
\]
and the geometry-aware capacity benchmark reads,
\[
\frac{\log_2 K}{N}
<
C_{\mathrm{geom}}(\rho)+o(1),
\]
where the geometry-aware capacity is,
\begin{equation} \label{eq:c_geom}
C_{\mathrm{geom}}(\rho)
:=
1 -\mathbb E\left[h_2(Q(\rho |Z|))\right], \qquad Z\sim\mathcal N(0,1).
\end{equation}

\paragraph{Comparison via Jensen's inequality.}
Since the binary entropy function \(h_2\) is concave on \([0,1]\), Jensen's inequality immediately gives,
\[
C_{\mathrm{geom}}(\rho) = 1-\mathbb E[h_2(Q(\rho |Z|))]
\ge
1- h_2 ( \mathbb E[Q(\rho |Z|)]) = 
1-h_2(p) = C_{\mathrm{BSC}}(\rho).
\]
Thus the geometry-aware capacity is always at least as large as the homogeneous
BSC capacity. The improvement comes from exploiting the realized reliabilities
of the individual bits rather than replacing them by their average crossover
probability.

\subsection{Maximum likelihood decoding}

It is easy to see that Hamming decoding is not optimal in the geometry-aware setting. We now consider the optimal maximum likelihood  decoder, which uses both the binary signs and the
corresponding margins. For each class \(m\) and coordinate \(i\), 
\[
p_{m,i}
:=
Q\left(\frac{|u_i^\top x_m|}{\sigma}\right).
\]
Under hypothesis \(m\), the observed bit \(B_i\) satisfies,
\[
\Pr(B_i=C_{m,i}\mid m)=1-p_{m,i},
\qquad
\Pr(B_i\neq C_{m,i}\mid m)=p_{m,i},
\]
and the likelihood of a candidate class \(m\) is,
\[
P_m(B)
=
\prod_{i=1}^N
\left[
(1-p_{m,i})\mathbf 1\{B_i=C_{m,i}\}
+
p_{m,i}\mathbf 1\{B_i\neq C_{m,i}\}
\right].
\]
The geometry-aware maximum likelihood decoder is,
\[
\widehat m_{\mathrm{ML}}
\in
\arg\max_{m\in\{1,\dots,K\}} P_m(B).
\]

\begin{theorem}
\label{thm:orth_mle}
In the orthogonal hyperplanes setting of Section~\ref{s:orth_descr}, a sufficient condition to ensure that the probability of error satisfies \(\Pr (E)\le \delta\)~is,
\[
N\ge \frac{\log((K-1)/\delta)}{E_\mathrm{ML}(\rho)},
\] 
where, with $Z\sim\mathcal N(0,1)$, the corresponding  ML error exponent is defined as,
\begin{equation} \label{eq:orth_ml}
E_{\mathrm{ML}}(\rho)
=
-\log\left[
\frac12
\left(
\mathbb E_Z\left[
\sqrt{1-Q(\rho|Z|)}
+
\sqrt{Q(\rho|Z|)}
\right]
\right)^2
\right].
\end{equation}
\end{theorem}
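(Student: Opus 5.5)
The plan is to mirror the proof of Theorem~\ref{thm:orth_ham}, replacing the Hamming metric by the log-likelihood ratio and using the Bhattacharyya form of the Chernoff bound, i.e.\ the parameter value $s=1/2$. Fix the true class $a$ and a competitor $b\neq a$, and condition on $x_1^K$ and $u_1^N$. The ML decoder errs in favour of $b$ only if $P_b(B)\ge P_a(B)$. Markov's inequality gives
\[
\Pr\bigl(P_b(B)\ge P_a(B)\mid x_1^K,u_1^N\bigr)\le \mathbb E_a\Bigl[\sqrt{P_b(B)/P_a(B)}\Bigr]=\sum_{B}\sqrt{P_a(B)P_b(B)}.
\]
By orthogonality, the projected noises $u_i^\top z$ are independent, so both likelihoods factor over coordinates. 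The Bhattacharyya coefficient therefore factors as $\prod_{i=1}^N \beta_i$, where
\[
\beta_i=\sum_{\beta\in\{\pm1\}}\sqrt{P_a(B_i=\beta)\,P_b(B_i=\beta)}.
\]
If $C_{a,i}=C_{b,i}$, then $\beta_i=\sqrt{(1-p_{a,i})(1-p_{b,i})}+\sqrt{p_{a,i}p_{b,i}}$. If $C_{a,i}\neq C_{b,i}$, then $\beta_i=\sqrt{(1-p_{a,i})p_{b,i}}+\sqrt{p_{a,i}(1-p_{b,i})}$.

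Next, average over the competitor $x_b$ with $x_a$ and $u_1^N$ held fixed. Because the $u_i$ are orthonormal, the projections $u_i^\top x_b$ are i.i.d.\ $\mathcal N(0,\tau^2)$. Writing $u_i^\top x_b=\tau S_i|Z_i'|$, the sign $C_{b,i}=S_i$ is a fair coin independent of $p_{b,i}=Q(\rho|Z_i'|)$, and everything is independent across $i$. Averaging $\beta_i$ over $S_i$ gives
\[
\tfrac12\bigl(\sqrt{1-p_{a,i}}+\sqrt{p_{a,i}}\bigr)\bigl(\sqrt{1-p_{b,i}}+\sqrt{p_{b,i}}\bigr),
\]
since the two cases together form exactly the expansion of that product. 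Taking the expectation over $Z_i'$ turns the second factor into $\mu:=\mathbb E_Z[\sqrt{1-Q(\rho|Z|)}+\sqrt{Q(\rho|Z|)}]$.

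Then average over $x_a$. The $p_{a,i}$ are i.i.d.\ copies of $Q(\rho|Z|)$, so the first factor also averages to $\mu$. Each coordinate thus contributes $\tfrac12\mu^2$, and independence across coordinates gives $\mathbb E\prod_i\beta_i=(\tfrac12\mu^2)^N=e^{-NE_{\mathrm{ML}}(\rho)}$. A union bound over the $K-1$ competitors then yields $\Pr(E)\le (K-1)e^{-NE_{\mathrm{ML}}(\rho)}$, and rearranging gives the stated condition on $N$. Ties in the decoder are harmless because the event $P_b\ge P_a$ already counts them as errors.

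I expect the main obstacle to be the averaging steps, namely the order of expectations and the independence claims. The conditional bound involves $p_{b,i}$ and $C_{b,i}$ jointly, so I must check that sign and magnitude of a symmetric Gaussian are independent, and that the $x_a$ factor and the $x_b$ factor decouple. Unlike Theorem~\ref{thm:orth_ham}, no infimum over $s$ needs to be pulled outside an expectation here, because $s=1/2$ is fixed in advance. The result therefore holds exactly with this exponent and requires no Jensen step. Finally, I will confirm that $E_{\mathrm{ML}}(\rho)>0$: by Cauchy--Schwarz, $\sqrt{1-q}+\sqrt q\le\sqrt2$, with strict inequality unless $q=1/2$, so $\tfrac12\mu^2<1$ whenever $\rho>0$.
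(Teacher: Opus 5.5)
Your proposal is correct and tracks the paper's proof step for step: Markov's inequality on the likelihood ratio, coordinatewise factorization via orthogonality, averaging over the independent fair signs and i.i.d.\ magnitudes to get a per-coordinate factor $\tfrac12\mu^2$, and a union bound over the $K-1$ competitors. The only difference is that the paper keeps a general $s\in[0,1]$, obtains $\Psi_\rho(s)=\tfrac12 M_\rho(1-s)M_\rho(s)$, and then uses H\"older-based log-convexity plus the symmetry $\Psi_\rho(s)=\Psi_\rho(1-s)$ to show $s=1/2$ is optimal, whereas you fix $s=1/2$ from the outset, which already gives the stated exponent and makes that optimization unnecessary for the sufficiency claim; your check that $E_{\mathrm{ML}}(\rho)>0$ is a small addition the paper omits.
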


\begin{proof}
The proof follows a Chernoff information analysis. Suppose the transmitted class is \(a\). Fix a competitor \(b\neq a\). Let
\(P_a\) and \(P_b\) denote the product distributions of the observed bit vector~\(B\) under hypotheses
\(a\) and \(b\), respectively, given
the geometry. The pairwise ML confusion event is,
$
\{P_b(B)\ge P_a(B)\}.
$
Conditional on $x_1^K, u_1^N$, for any \(s\in[0,1]\), Markov's inequality gives,
\[
\Pr\left(
P_b(B)\ge P_a(B)
\,\middle|\,
x_1^K,u_1^N
\right)
=
\Pr\left[
\left(\frac{P_b(B)}{P_a(B)}\right)^s\ge 1
\,\middle|\,
x_1^K,u_1^N
\right]
\le
\mathbb E\left[
\left(\frac{P_b(B)}{P_a(B)}\right)^s
\,\middle|\,
x_1^K,u_1^N
\right].
\]
Since the bits \(B_1,\dots,B_N\) are independent under each hypothesis (as before),
\[
\mathbb E\left[
\left(\frac{P_b(B)}{P_a(B)}\right)^s
\,\middle|\,
x_1^K,u_1^N
\right]
=
\prod_{i=1}^N
\sum_{\beta\in\{\pm1\}}
P_{a,i}(\beta)
\left(\frac{P_{b,i}(\beta)}{P_{a,i}(\beta)}\right)^s
=
\prod_{i=1}^N
\sum_{\beta\in\{\pm1\}}
P_{a,i}(\beta)^{1-s}
P_{b,i}(\beta)^s.
\]
Defining the coordinate Chernoff coefficient,
\[
Z_i^{(ab)}(s)
:=
\sum_{\beta\in\{\pm1\}}
P_{a,i}(\beta)^{1-s}
P_{b,i}(\beta)^s,
\]
gives the conditional pairwise error bound,
\begin{equation} \label{eq:orth_pair}
\Pr (P_b(B)\ge P_a(B) \mid  x_1^K,u_1^N)
\le
\prod_{i=1}^N Z_i^{(ab)}(s).
\end{equation}
Now average over the random geometry $x_a,x_b$. Since \(x_a\) and \(x_b\) are i.i.d.
Gaussian, as before,
\[
p_{a,i}\overset d=Q(\rho |Z_a|),
\qquad
p_{b,i}\overset d=Q(\rho |Z_b|),
\]
where \(Z_a,Z_b\overset{\mathrm{iid}}{\sim}\mathcal N(0,1)\). Furthermore,
the signs \(C_{a,i}\) and \(C_{b,i}\) are independent fair signs, independent
of the magnitudes:
$
\Pr(C_{a,i}=C_{b,i})=
\Pr(C_{a,i}\neq C_{b,i})=1/2.
$
If \(C_{a,i}=C_{b,i}\), then,
\[
Z_i^{(ab)}(s)
=
(1-p_{a,i})^{1-s}(1-p_{b,i})^s
+
p_{a,i}^{1-s}p_{b,i}^s,
\]
and if \(C_{a,i}\neq C_{b,i}\), then,
\[
Z_i^{(ab)}(s)
=
(1-p_{a,i})^{1-s}p_{b,i}^s
+
p_{a,i}^{1-s}(1-p_{b,i})^s.
\]
Therefore,  denoting for simplicity of notation for the corresponding coordinate $i$, $p_a=p_{a,i}$,
\[
\mathbb E[Z_i^{(ab)}(s)]
=
\frac12
\mathbb E\left[
(1-p_a)^{1-s}(1-p_b)^s
+
p_a^{1-s}p_b^s
\right]
+
\frac12
\mathbb E\left[
(1-p_a)^{1-s}p_b^s
+
p_a^{1-s}(1-p_b)^s
\right].
\]
Combining terms,
\[
\mathbb E[Z_i^{(ab)}(s)]
=
\frac12
\mathbb E\left[
\bigl((1-p_a)^{1-s}+p_a^{1-s}\bigr)
\bigl((1-p_b)^s+p_b^s\bigr)
\right].
\]
Since \(p_a\) and \(p_b\) are i.i.d. copies of a random variable $q$ with distribution $q \sim Q(\rho |Z|)$,
\[
\mathbb E[Z_i^{(ab)}(s)]
=
\frac12
\mathbb E_q\left[(1-q)^{1-s}+q^{1-s}\right]
\mathbb E_q\left[(1-q)^s+q^s\right].
\]
Using the definition,
$
M_\rho(s)
=
\mathbb E_q\left[(1-q)^s+q^s\right],
$
we obtain,
\[
\mathbb E[Z_i^{(ab)}(s)]
=
\frac12 M_\rho(1-s)M_\rho(s)
=
\Psi_\rho(s).
\]
Since the coordinates are i.i.d., averaging over the random geometry and
applying the union bound over the \(K-1\) competitors in~(\ref{eq:orth_pair}) gives,
\begin{equation} \label{eq:fin_pre_orthml}
\Pr (E)
\le
(K-1)\inf_{0\leq s \leq 1}\Psi_\rho(s)^N.
\end{equation}
It remains to identify the optimizer. The function
$
M_\rho(s)
$
is log-convex as a function of $s$; this follows from a standard
log-convexity argument for Chernoff coefficientsusing Hölder's inequality. In particular, fix \(s_0,s_1\in[0,1]\) and
\(\lambda\in[0,1]\), and consider 
$s=(1-\lambda)s_0+\lambda s_1.
$
Then,
\[
M_\rho(s)
=
\mathbb E_q\left[
(1-q)^{s}+q^{s}
\right]
=
\mathbb E_q\left[
\left((1-q)^{s_0}\right)^{1-\lambda}
\left((1-q)^{s_1}\right)^\lambda
+
\left(q^{s_0}\right)^{1-\lambda}
\left(q^{s_1}\right)^\lambda
\right].
\]
By Hölder's inequality, 
\[
M_\rho(s)
\le
\left(
\mathbb E_q\left[(1-q)^{s_0}+q^{s_0}\right]
\right)^{1-\lambda}
\left(
\mathbb E_q\left[(1-q)^{s_1}+q^{s_1}\right]
\right)^\lambda
=
M_\rho(s_0)^{1-\lambda}M_\rho(s_1)^\lambda,
\]
so \(M_\rho (s)\) is log-convex, hence
$
\log \Psi_\rho(s)
$
is also log-convex, and therefore $\Psi_\rho(s)$ is convex in $s$. Since,
$
\Psi_\rho(s)=\Psi_\rho(1-s),
$
it is also symmetric around \(s=1/2\). Therefore, as a convex
and  symmetric function, $\Psi_\rho(s)$  is minimized at its mid-point \(s=1/2\): 
\[
\Psi_\rho(s) \ge \Psi_\rho(1/2) =
\frac12
\left(
\mathbb E\left[\sqrt{1-q}+\sqrt q\right]
\right)^2
=
\frac12
\left(
\mathbb E_Z\left[
\sqrt{1-Q(\rho|Z|)}
+
\sqrt{Q(\rho|Z|)}
\right]
\right)^2,
\]
where $Z\sim\mathcal N(0,1)$. Substituting in the above bound~(\ref{eq:fin_pre_orthml}) and re-arranging gives the corresponding ML exponent and completes the proof. 
\end{proof}

\noindent \emph{Remark.} Theorem~\ref{thm:orth_mle} gives an achievability condition of the same form as  Theorem~\ref{thm:orth_ham}, showing that again $N = O(\log K ) $ hyperplanes are sufficient. Comparing the corresponding multiplicative constants, Jensen's inequality directly shows that $E_{\mathrm{ML}} (\rho) \ge E_{\mathrm{BSC}} (\rho)$. In specific, since the function \(
g(q) = \sqrt{1-q}+\sqrt q
\)
is concave on \([0,1]\), Jense's gives,
\[
\mathbb E_Z\left[
\sqrt{1-Q(\rho|Z|)}
+
\sqrt{Q(\rho|Z|)}
\right]
=
\mathbb E_Z[g(Q(\rho|Z|))]
\le
g( \mathbb E_Z[Q(\rho|Z|)])
=
g(p) = 
\sqrt{1-p}+\sqrt p.
\]
Squaring both sides, multiplying by \(1/2\), and rearranging shows that $E_{\mathrm{ML}} (\rho) \ge E_{\mathrm{BSC}} (\rho)$,
which verifies that the ML Chernoff exponent always gives an improved bound.

\subsection{Order-statistics error formulas} \label{s:ord_stats}

The Chernoff analysis gives clean exponential upper bounds and highlights the
relevant error exponents. These bounds are especially useful for deriving
achievability conditions of the form \(N\gtrsim \log K/E(\rho)\). However, the
union bound over competitors can be conservative, particularly at finite \(K\)
and \(N\). To obtain sharper approximations, we next exploit the order-statistic
structure of the decoding rule: Conditional on the score of the true class, the
scores of the \(K-1\) competing classes are independent, so the probability of
error can be written as the probability that the maximum competitor score
exceeds the true score. This leads to more precise finite-sample formulas,
at the cost of less explicit expressions than the Chernoff exponents.

\subsubsection{Hamming decoding}

The Hamming decoder is,
\[
\widehat a
=
\arg\min_m d_H(B,C_m).
\]

\begin{proposition}
    In the orthogonal hyperplanes setting of Section~\ref{s:orth_descr}, the  probability of error of the Hamming decoder satisfies,
    \begin{equation} \label{eq:bsc_ord}
P_{e}^{\mathrm{Ham}}
=
\sum_{r=0}^N
\binom Nr
p^r(1-p)^{N-r}
\left[
1-
\left(
1-F_N(r)
\right)^{K-1}
\right],
\end{equation}
where $
p
=
\frac1\pi\arctan\frac1\rho
$, and $F_N(r)$ denotes the cumulative distribution function of the symmetric Binomial random variable,
\(
F_N(r)
=
\Pr\left(
\operatorname{Binomial}\left(N,\frac12\right)\le r
\right).
\)
\end{proposition}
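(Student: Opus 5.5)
The plan is to condition on the Hamming distance of the observed word to the \emph{true} codeword and then exploit the fact that, given that distance, the competitors' distances are i.i.d.\ and independent of it. I take the convention that ties are counted as errors; this is the convention under which the formula is exact, since \(1-F_N(r)\) is the probability that a competitor's distance is \emph{strictly} larger than \(r\).

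\textbf{Step 1: the law of the true distance.} Let \(R:=d_H(B,C_a)=\sum_{i=1}^N \mathbf 1\{B_i\neq C_{a,i}\}\). Conditionally on \(x_a\) and \(u_1^N\), the flip indicators are independent Bernoulli\((p_{a,i})\), by orthogonality of the \(u_i\) as in the proof of Theorem~\ref{thm:orth_ham}. Since \(u_i^\top x_a\) are i.i.d.\ \(\mathcal N(0,\tau^2)\), the \(p_{a,i}\overset d= Q(\rho|Z_i|)\) are i.i.d. Hence the pairs (reliability, flip) are i.i.d.\ across \(i\). Integrating out \(x_a\), the flip indicators become unconditionally i.i.d.\ Bernoulli with mean \(\mathbb E[p_{a,i}]=p=\frac1\pi\arctan\frac1\rho\), by the lemma giving~(\ref{eq:cross_prob}). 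Therefore \(R\sim\mathrm{Bin}(N,p)\), which yields the weights \(\binom Nr p^r(1-p)^{N-r}\).

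\textbf{Step 2: the competitors.} For \(b\neq a\), \(C_b\) is a function of \(x_b\) (and the fixed \(u_i\)) only. It is therefore independent of \((x_a,z)\), and hence of \(B\) and \(R\), and the \(C_b\) are mutually independent. Each \(C_b\) is uniform on \(\{\pm1\}^N\), since the \(u_i^\top x_b\) are i.i.d.\ centred Gaussians. So for any fixed \(B\), the distance \(d_H(B,C_b)\) is a sum of \(N\) independent fair indicators, i.e.\ \(\mathrm{Bin}(N,\tfrac12)\), whatever the value of \(B\). Consequently, conditional on \(B\) (and thus on \(R=r\)), the \(K-1\) competitor distances are i.i.d.\ \(\mathrm{Bin}(N,\tfrac12)\).

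\textbf{Step 3: assembling the formula.} With ties counted as errors, a correct decision occurs iff every competitor satisfies \(d_H(B,C_b)>r\). Conditional on \(R=r\), this has probability \((1-F_N(r))^{K-1}\) by Step~2. Hence the conditional error probability is \(1-(1-F_N(r))^{K-1}\). Averaging over \(R\sim\mathrm{Bin}(N,p)\) from Step~1 gives~(\ref{eq:bsc_ord}).

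The only delicate point is Step~1. The flips are not identically distributed given the geometry, so one must be careful that the mixture over \(x_a\) produces exactly \(\mathrm{Bin}(N,p)\). This holds because, by orthogonality, the mixing variables \(p_{a,i}\) are independent across coordinates, so the flips remain independent after averaging. Under the Gaussian model the result is therefore exact rather than a BSC approximation.
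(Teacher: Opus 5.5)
Your proof is correct and is essentially the paper's argument. You condition on $R=d_H(B,C_a)$, use that the $K-1$ competitor distances are i.i.d.\ $\mathrm{Bin}(N,\tfrac12)$ independently of it, and show $R\sim\mathrm{Bin}(N,p)$. For that last step, the paper averages the generating function $\prod_{i}(1-p_{a,i}+p_{a,i}t)$ over the i.i.d.\ $p_{a,i}$, whereas you observe directly that the flip indicators are unconditionally i.i.d.\ Bernoulli$(p)$; these amount to the same fact.

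Your explicit ties-as-errors convention also matches the paper, which notes in its simulation discussion that the analytical Hamming curve is slightly conservative for exactly this reason.
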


\begin{proof}
Let \(a\) denote the true class as before and define,
\[
R_a
:=
d_H(B,C_a).
\]
Conditional on $C_a$ and $B$, the wrong codewords \(C_b\)  are i.i.d. and
uniform on \(\{\pm1\}^N\), so that, 
$
d_H(B,C_b)\sim \operatorname{Binomial}\left(N,1/2\right).
$
For any competitor $b\neq a$, the probability that it does not beat the true class is given by $1 - F_N(R_a)$.
Since the \(K-1\) wrong codewords are conditionally independent, the probability that no wrong class beats or
ties the true class is,
$  \left(1-F_N(R_a)\right)^{K-1}.
$
Averaging over $C_a$ and $B$, or equivalently over \(R_a\), gives
the exact Hamming error probability,
\begin{equation} \label{ord_stat_orth}
\Pr (E)
=
\mathbb E_{R_a}
\left[
1-
\left(
1-F_N(R_a)
\right)^{K-1}
\right].
\end{equation}
It remains to compute the above expectation. Conditional on \(p_{a,1},\dots,p_{a,N}\), the true distance~is,
\[
R_a=\sum_{i=1}^N E_i,
\qquad
E_i\mid p_{a,i}\sim \operatorname{Bernoulli}(p_{a,i})
\]
independently. Therefore its conditional probability-generating function is given by,
\[
\mathbb E[t^{R_a}\mid p_{a,1},\dots,p_{a,N}]
=
\prod_{i=1}^N \left(1-p_{a,i}+p_{a,i}t\right).
\]
Averaging over the i.i.d.  \(p_{a,1},\dots,p_{a,N}\) gives the unconditional probability-generating function, 
\[
\mathbb E[t^{R_a}]
=
\mathbb E\left[
\prod_{i=1}^N \left(1-p_{a,i}+p_{a,i}t\right)
\right]
=
\prod_{i=1}^N
\mathbb E\left[1-p_{a,i}+p_{a,i}t\right].
=
(1-p+pt)^N,
\]
which is the probability-generating function of
\(\operatorname{Binomial}(N,p)\). Hence, unconditionally, we get,
$
R_a\sim \operatorname{Binomial}(N,p),
$
where, as before,
$
p
=
\frac1\pi\arctan\frac1\rho.
$
Substituting appropriately the Binomial distribution in~(\ref{ord_stat_orth}) completes the proof.
\end{proof}

\noindent \textbf{Conclusion.} 
Proposition 1 shows that, perhaps surprisingly, in this setting we are able to obtain an exact, finite-\(N\)
closed-form expression for the Hamming probability of error. This formula is
particularly useful for numerical evaluation, since it can be computed in just
\(O(N)\) operations. Nevertheless, since the finite-sum expression does not directly
expose the scaling of the error probability with \(N\),~\(K\), and \(\rho\),
the Chernoff bound derived above still remains useful: It isolates a simple
exponential rate, gives an interpretable error exponent, and leads directly
to achievability condition of the form \(N\gtrsim \log K/E(\rho)\).

\subsubsection{Maximum likelihood decoding} \label{s:ord_ml}

The log-likelihood score for class \(m\) is,
\[
L_m(B)
=
\sum_{i=1}^N
\left[
\mathbf 1\{B_i=C_{m,i}\}\log(1-p_{m,i})
+
\mathbf 1\{B_i\neq C_{m,i}\}\log p_{m,i}
\right].
\]
The ML decoder chooses,
\[
\widehat a_{\mathrm{ML}}
=
\arg\max_m L_m(B),
\]
and an  ML error occurs if there exists \(b\neq a\) such that
$
L_b(B)\ge L_a(B).
$
Conditional on the true-class score \(L_a\) and on the observed vector
\(B\), the wrong-class scores 
$
L_b(B),
$
are i.i.d. Therefore,
\[
P_e^{\mathrm{ML}}
=
\mathbb E_{L_a,B}
\left[
1-
\left(
1-
\Pr\left(L_b(B)\ge L_a \mid L_a,B\right)
\right)^{K-1}
\right].
\]
For each wrong class \(b\neq a\), the competitor bits \(C_{b,i}\) are fair signs
relative to the fixed observed vector \(B\); and they are independent of
the corresponding probabilities \(p_{b,i}\). Hence, the conditional distribution of the
wrong-class score does not depend on the realized value of \(B\), and,
\[
P_e^{\mathrm{ML}}
=
\mathbb E_{L_a}
\left[
1-
\left(
1-
\Pr\left(L_b\ge L_a\mid L_a\right)
\right)^{K-1}
\right].
\]
Now, it turns out that the distributions of $L_a$ and $L_b$ admit simple  scalar-sum distributions, 
\[
L_a
=
\sum_{i=1}^N S_i, \quad 
L_b
=
\sum_{i=1}^N T_i,
\]
where it remains to define the true-score and wrong score increments $S_i$ and $ T_i$, respectively.
Conditional on the true-class realized probability \(p_{a,i}\), the \(i\)-th true-score
increment is,
\[
S_i
=
\begin{cases}
\log p_{a,i}, & \text{with probability } p_{a,i},\\
\log(1-p_{a,i}), & \text{with probability } 1-p_{a,i},
\end{cases}
\]
where, as before, 
$
p_{a,i}
=
Q(\rho |Z_i|),
$
with $
Z_i\overset{\mathrm{iid}}{\sim}\mathcal N(0,1).
$
So, \(S_1,\dots,S_N\) are i.i.d. scalar increments.
Similarly,
the \(i\)-th wrong-score increment is given by,
\[
T_i
=
\begin{cases}
\log q_i, & \text{with probability } 1/2,\\
\log(1-q_i), & \text{with probability } 1/2,
\end{cases}
\]
where 
$
q_i
=
Q(\rho |Z_i'|),
$
with $
Z_i'\overset{\mathrm{iid}}{\sim}\mathcal N(0,1),
$
independently of the true-class variables. Therefore \(T_1,\dots,T_N\) are i.i.d. and
independent of \(L_a\). So, by defining the wrong-score tail function,
\[
G_N(t)
:=
\Pr\left(
\sum_{i=1}^N T_i \ge t
\right),
\]
the ML probability of error can be written as,
\begin{equation} \label{eq:orth_ml_ord}
P_e^{\mathrm{ML}}
=
\mathbb E_{S_1,\dots,S_N}
\left[
1-
\left(
1-
G_N\left(\sum_{i=1}^N S_i\right)
\right)^{K-1}
\right].
\end{equation}
\textit{Remark.} The above expression can be evaluated by Monte Carlo (MC) over scalar i.i.d. sums, or by
numerical convolution of the one-coordinate score distributions. The details of an
appropriate Monte Carlo sampler are given in Appendix~\ref{app:sampler}; while this computation is
more expensive than the direct Hamming formula, it is still much cheaper than
simulating the full \(K\)-class experiment.
The resulting MC estimate gives sharp finite-$N$ predictions for the probability of error compared, but -- unlike our previous Chernoff analysis -- does not directly expose an exponential scaling law or
lead to a simple closed-form achievability condition.

\subsection{Simulation results}

In this section, we run appropriate simulation experiments to empirically validate the main theoretical findings for the orthogonal hyperplanes setting.
In specific, we inspect the empirical probability of error, together with the corresponding Chernoff upper bounds and order statistics expressions we derived, both for Hamming and ML decoding.
In order to examine which values of the SNR parameter $\rho$ seem practically relevant, in Table~\ref{tab:p-rho} we present a range of values for~$\rho$ together with their corresponding induced crossover probabilities $p =  \frac{1}{\pi}\arctan({1}/{\rho})$, which play a key role in much of the above analysis.

\begin{table}[h!]
    \centering
    \begin{tabular}{ccccccc}
        \toprule
        $\rho$
        & $0.1$ & $0.5$ & $1$ & $2$ & $5$ & $10$ \\
        \midrule
        $p$
        & $0.468$
        & $0.352$
        & $0.250$
        & $0.148$
        & $0.063$
        & $0.032$ \\
        \bottomrule
    \end{tabular}
    \caption{Average crossover probability $p =
        \frac{1}{\pi}\arctan\!\left(\frac{1}{\rho}\right)$  for different values of $\rho$.}
    \label{tab:p-rho}
\end{table}

Here, we focus on the representative cases $\rho=1$ and
$\rho=2$, which correspond to practically relevant intermediate
crossover probabilities. Results for more extreme regimes, where
$p$ is closer either to $1/2$ or to $0$, can be found in Appendix~\ref{app:sims}. These additional plots exhibit the same qualitative behavior
as the cases reported here. The main effect of increasing $\rho$ is to
reduce the crossover probability and shift all error curves towards
smaller values of $N$, so that the same target error probability can be
achieved using fewer hyperplanes.
 In all cases, the results are obtained from $10^4$ repetitions of the experiment, using $K=32$ class labels.

The results presented in Figure~\ref{fig:orth} provide strong empirical validation of the theoretical predictions. First, the derived Chernoff expressions \eqref{eq:orth_ub}-\eqref{eq:fin_pre_orthml} give valid upper bounds on the probability of error for both Hamming and ML decoding, respectively. Although these bounds are somewhat loose at finite $N$, they correctly capture the exponential decay of the error probability and lead to clean achievability conditions of the form,
\[
N\gtrsim \log K/E(\rho),
\]
where $E(\rho)$ is the corresponding error  exponent (Hamming or ML), that depends on the SNR. Second, ML decoding consistently and often substantially outperforms Hamming decoding. This highlights the importance of accounting for the unequal reliability of the individual binary decisions, and shows that treating all decisions equally, as in Hamming decoding or in a majority-voting rule, can be significantly suboptimal in general classification problems.

\begin{figure}[h!]
    \centering
    \includegraphics[width=0.48\linewidth]{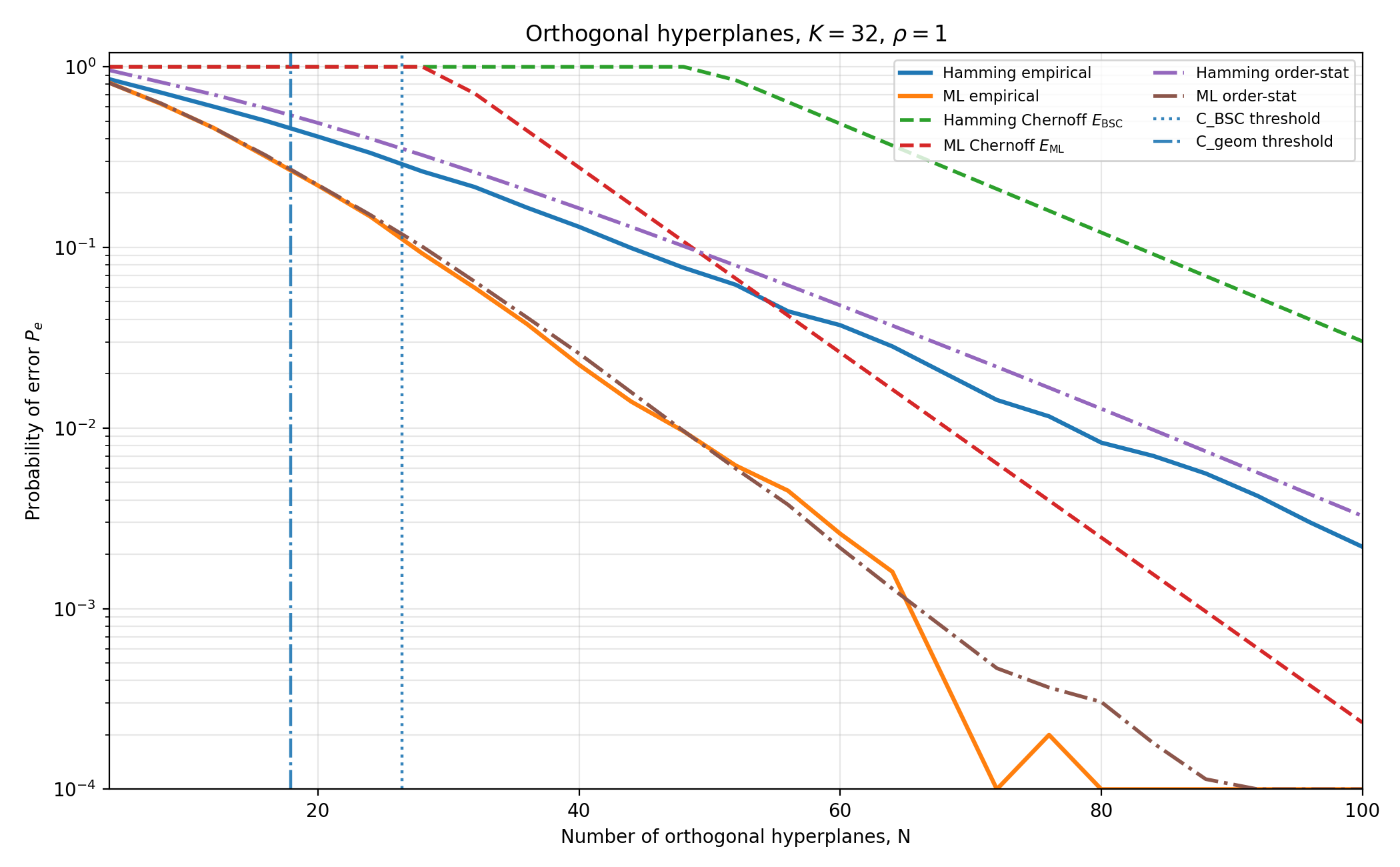}
    ~
        \includegraphics[width=0.48\linewidth]{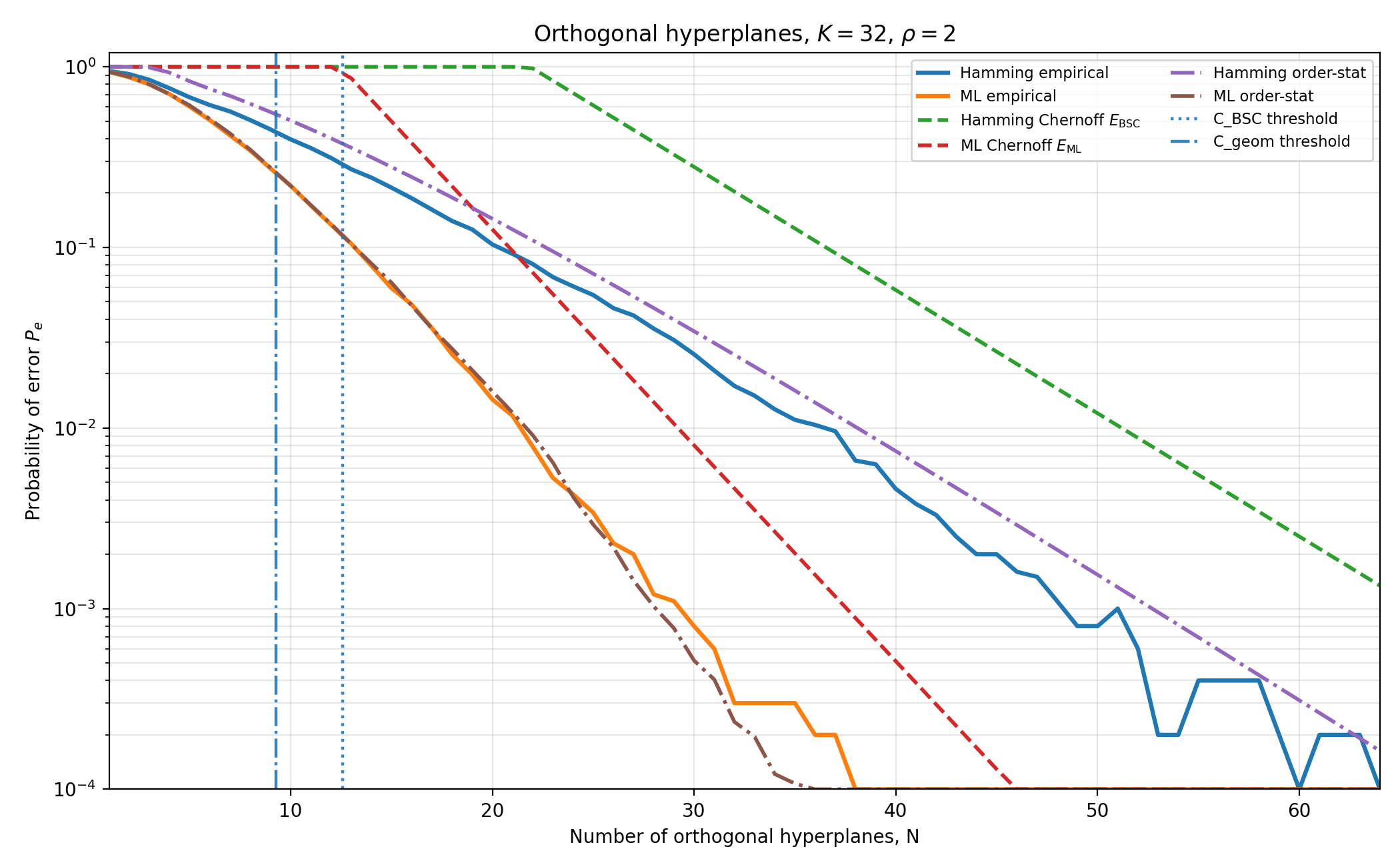}
    \caption{Probability of error as a function of the number of orthogonal hyperplanes. Left: $\rho=1$, Right: $\rho=2$}
    \label{fig:orth}
\end{figure}

Finally, the derived order-statistics expressions closely follow the empirical error curves, providing a more accurate finite-$N$ characterization than the Chernoff bounds and further validating the theory. The small systematic discrepancy observed for the Hamming order-statistic curve is explained by the treatment of ties, since the analytical expression counts every tie with a competing class as an error, whereas the simulations use random tie-breaking. Consequently, the theoretical Hamming order-statistic curve is slightly conservative.

\section{Random hyperplanes with noise} \label{s:random_hyps}

We now return to the original setting of practical interest, in which the
hyperplanes are drawn randomly, and are no longer assumed to be
orthogonal. The analysis of the orthogonal model provides a useful benchmark for
this setting, since in sufficiently high dimensions i.i.d. random
hyperplanes are approximately orthogonal; this high-dimensional regime is studied in detail in Section~\ref{s:high_d_analysis}. Viewed as a communication channel, the
critical difference from the orthogonal setting is that the coordinate bitflips
are no longer~independent.

\subsection{Model description} \label{s:iid_descr}

The centers and test-point are generated exactly as before, with the true class $a$:
\[
x_1,\dots,x_K\overset{\mathrm{iid}}{\sim}\mathcal N(0,\tau^2 I_d),
\qquad
z\sim\mathcal N(0,\sigma^2 I_d),
\qquad
y=x_a+z.
\]
The hyperplanes are now sampled independently as,
\[
u_1,\dots,u_N\overset{\mathrm{iid}}{\sim}\mathrm{Unif}(S^{d-1}).
\]
The binary codeword of \(x_k\), and the observed binary vector are again denoted as,
\[
C_k
=
\left(
\operatorname{sign}(u_1^\top x_k),
\dots,
\operatorname{sign}(u_N^\top x_k)
\right),
\quad
B
=
\left(
\operatorname{sign}(u_1^\top y),
\dots,
\operatorname{sign}(u_N^\top y)
\right).
\]
In this setting, it is convenient to also define the following angles,
\begin{equation} \label{eq:angles}
\alpha := \angle(x_a,y),\qquad
\theta _ b := \angle(x_a,x_b),\qquad
\beta _ b := \angle(x_b,y).
\end{equation}

\subsection{Hamming decoder}

First, we again consider the Hamming decoder,
\[
\widehat a
=
\arg\min_m d_H(B,C_m).
\]
By appropriately applying a Chernoff analysis, we are again able to obtain an upper bound on the probability of error in this setting.

\begin{theorem} \label{thm:iid_ham}
In the random hyperplanes setting of Section~\ref{s:iid_descr}, the probability of error using a Hamming decoder satisfies,
\begin{equation} \label{eq:iid_ham_bound}
\Pr (E)
\le
(K-1)\mathcal B_{d,\rho}(N),
\quad 
\end{equation}
where,
\begin{equation} \label{beta_def}
\mathcal B_{d,\rho}(N)
:=
\mathbb E_{x_a,z,x_b}
\left[
e^{-N E_{\mathrm{iid}}(\theta _b,\alpha,\beta_b)}
\right],
\end{equation}
and,
\begin{equation} \label{eq:eiid}
E_{\mathrm{iid}}(\theta _b,\alpha,\beta_b)
=
\begin{cases}
-\log\left[
1-\dfrac{\theta _ b}{\pi}
+
\dfrac1\pi\sqrt{\theta _ b^2-(\beta _ b-\alpha)^2}
\right],
& \beta _ b\ge\alpha,\\[2ex]
0,
& \beta _b <\alpha,
\end{cases}
\end{equation} 
with the definition of the angles $\alpha, \theta_b , \beta _b$ as in (\ref{eq:angles}).
\end{theorem}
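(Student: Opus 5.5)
The plan is to condition on the three vectors $x_a$, $z$ and $x_b$, which fixes $y=x_a+z$ and hence the angles $\alpha,\theta_b,\beta_b$ of (\ref{eq:angles}). Then I run a one-dimensional Chernoff argument over the $N$ hyperplanes, which are i.i.d.\ given these vectors. Finally I take a union bound over competitors and average over the geometry.

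As in the proof of Theorem~\ref{thm:orth_ham}, for a fixed competitor $b\neq a$ define
\[
M_i:=\mathbf 1\{B_i\neq C_{b,i}\}-\mathbf 1\{B_i\neq C_{a,i}\}\in\{-1,0,+1\}.
\]
The pairwise confusion event is $\{\sum_i M_i\le 0\}$; counting ties as errors only enlarges it. Conditional on $x_a,z,x_b$, each $M_i$ is a function of $u_i$ alone, so the $M_i$ are i.i.d. This replaces the coordinatewise independence we had from orthogonality.

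The key step is computing the law of a single $M_i$. I will use the fact, already used in the proof of Theorem~\ref{thm:hyp_suff}, that a uniform $u$ separates two vectors with probability equal to their angle divided by $\pi$. The value $M_i=+1$ means $u$ separates $y$ from $x_b$ but not $y$ from $x_a$; $M_i=-1$ is the reverse. Among the three sign patterns of $(x_a,x_b,y)$, $u$ separates $x_a$ from $x_b$ exactly when it separates exactly one of them from $y$. Writing $P_\pm=\Pr(M_i=\pm1)$ and $P_{\rm both}$ for the probability that $u$ separates $y$ from both, this gives the linear system
\[
P_++P_-=\tfrac{\theta_b}{\pi},\qquad P_++P_{\rm both}=\tfrac{\beta_b}{\pi},\qquad P_-+P_{\rm both}=\tfrac{\alpha}{\pi}.
\]
Its solution is
\[
P_\pm=\tfrac{\theta_b\pm(\beta_b-\alpha)}{2\pi},\qquad \Pr(M_i=0)=1-\tfrac{\theta_b}{\pi}.
\]
Nonnegativity of $P_\pm$ is the triangle inequality for angles on the sphere, $|\beta_b-\alpha|\le\theta_b$. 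I will check this carefully, since it is what makes the square root in (\ref{eq:eiid}) real.

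Next, by Markov's inequality, for every $s\ge0$,
\[
\Pr\Big(\sum_i M_i\le0 \,\Big|\, x_a,z,x_b\Big)\le \big(1-\tfrac{\theta_b}{\pi}+P_+e^{-s}+P_-e^{s}\big)^N.
\]
If $\beta_b\ge\alpha$, then $P_+\ge P_-$ and the minimiser $e^{2s}=P_+/P_-$ is admissible ($s\ge0$). The bracket then becomes
\[
1-\tfrac{\theta_b}{\pi}+2\sqrt{P_+P_-}=1-\tfrac{\theta_b}{\pi}+\tfrac1\pi\sqrt{\theta_b^2-(\beta_b-\alpha)^2},
\]
which is $e^{-E_{\mathrm{iid}}}$. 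If $\beta_b<\alpha$, the optimum over $s\ge0$ is $s=0$, giving the trivial bound $1$, i.e.\ exponent $0$. The degenerate case $P_-=0$ is handled by letting $s\to\infty$, and the formula still holds.

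Finally, I take the union bound over the $K-1$ competitors conditional on $x_1^K,z$, and take expectations. Since the $x_b$ are exchangeable given $x_a,z$, each term has the same expectation $\mathcal B_{d,\rho}(N)$, which yields (\ref{eq:iid_ham_bound}).

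I expect the main obstacle to be the single-hyperplane three-point computation. The rest is routine once one realises that the correlation between bits is removed by conditioning on the class centers and the noise rather than on the hyperplanes.
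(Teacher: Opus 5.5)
Your proposal is correct and follows the paper's proof essentially step for step. You condition on $x_a,z,x_b$ so that the $M_i^{(ab)}$ are i.i.d., compute their three-valued law from hyperplane separation probabilities, optimize the Chernoff parameter over $s\ge0$ (including the $\beta_b<\alpha$ case giving exponent $0$), and then union-bound and average. The only cosmetic difference is that you solve for $P_\pm$ through the auxiliary unknown $P_{\rm both}$, whereas the paper uses $\Pr(M_i\neq0)=\theta_b/\pi$ together with $\mathbb E[M_i]=(\beta_b-\alpha)/\pi$, which is just the difference of your last two equations.
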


\begin{proof}
Fix a competitor $b \neq a$, and as before define,
\begin{equation}\label{mab_def}
M_i^{(ab)}
:=
\mathbf 1\{B_i\neq C_{b,i}\}
-
\mathbf 1\{B_i\neq C_{a,i}\}.
\end{equation}
Conditional on \(x_a,x_b,z\), the random variables \(M_i^{(ab)}\) are i.i.d. since the hyperplanes \(u_i\) are i.i.d., and their corresponding conditional distribution is given in the following lemma.

\begin{lemma} \label{lem:cond_mab}
Given $x_a,x_b,z$, the conditional law of \(M_i^{(ab)} \in\{-1,0,+1\}\)  is,
\[
a_+ :=\Pr(M_i^{(ab)}=+1\mid x_a,x_b,z)=\frac{\theta _ b+\beta _ b-\alpha}{2\pi},
\]
\[
a_- := \Pr(M_i^{(ab)}=-1\mid x_a,x_b,z)=\frac{\theta _ b-\beta _b +\alpha}{2\pi},
\]
\[
a_0 : = \Pr(M_i^{(ab)}=0\mid x_a,x_b,z)=1-\frac{\theta _b}{\pi},
\]
where the angles $\alpha, \theta_b , \beta _b$ are defined in (\ref{eq:angles}).
\end{lemma}

\begin{proof}
Since \( M_i^{(ab)}\neq0\) exactly when the hyperplane $u_i$ separates \(x_a\) and \(x_b\), by rotational symmetry of the random hyperplane sampling we get,
\begin{equation} \label{pmo}
\Pr(M_i^{(ab)}\neq0 \mid x_a,x_b,z) = a_++a_-=\frac{\theta _ b}{\pi},
\end{equation}
where for simplicity of notation we omit conditioning on $x_a, x_b,z $ for rest of this proof. 
Now, by taking expectations in the definition of $M_i^{(ab)}$ in (\ref{mab_def}), we get, 
\[
\mathbb E[M_i^{(ab)}]=
\Prob(B_i\neq C_{b,i})
-
\Prob(B_i\neq C_{a,i})
=
\Prob(\operatorname{sign}(u_i^\top y)\neq \operatorname{sign}(u_i^\top x_b))
-
\Prob(\operatorname{sign}(u_i^\top y)\neq \operatorname{sign}(u_i^\top x_a)),
\]
and by rotational symmetry this gives,
\begin{equation} \label{pmo2}
 \mathbb E[M_i^{(ab)}] =\frac{\beta _ b}{\pi} - \frac{\alpha}{\pi} = 
 \Pr(M_i^{(ab)}=+1)- 
 \Pr(M_i^{(ab)}=-1)
 = a_+ - a_-   
\end{equation}
Solving the system of equations (\ref{pmo})-(\ref{pmo2}) for $a_+,a_-$  completes the proof of the lemma.
\end{proof}
\noindent Now, for any \(s\ge0\), conditional on $x_a,x_b,z$, a Chernoff bound gives,
\[
\Pr \left(
\sum_{i=1}^N M_i^{(ab)}\le0
\,\middle|\,
x_a,x_b,z
\right)
\le
\mathbb E  \left[
\exp\left(
-s\sum_{i=1} ^ N M_i^{(ab)}
\right)
\middle|\,
x_a,x_b,z
\right ] ,
\]
and since the random variables \(M_i^{(ab)}\) are conditionally independent, from Lemma~\ref{lem:cond_mab}, we get,
\[
\Pr \left(
d_H (B, C_b) \leq d_H (B, C_a)
\,\middle|\,
x_a,x_b,z
\right)
=
\Pr \left(
\sum_{i=1}^N M_i^{(ab)}\le0
\,\middle|\,
x_a,x_b,z
\right)
\leq
\left(
a_+e^{-s}+a_-e^s+a_0
\right)^N
\]
Optimizing over $s$, if \(\beta_b\ge\alpha\), the minimizer is,
\[
s^\star
=
\frac12\log\frac{a_+}{a_-}
=
\frac12
\log
\frac{\theta_b+\beta_b-\alpha}
{\theta_b-\beta_b+\alpha}.
\]
If \(\beta_b\ge\alpha\), we get \(s^\star\ge0\),
and substituting this value gives,
\[
\inf_{s\ge0}
\left(
a_+e^{-s}+a_-e^s+a_0
\right)
=
1-\frac{\theta_b}{\pi}
+
\frac1\pi
\sqrt{\theta_b^2-(\beta_b-\alpha)^2}.
\]
If
\(\beta_b<\alpha\), then $s^* <0$, and the infimum over \(s\ge0\) is attained
at \(s=0\), giving the trivial bound~\(1\).
So, with the definition of $E_{\mathrm{iid}}(\theta _b ,\alpha,\beta_ b )$ from~(\ref{eq:eiid}), we get the pairwise error bound,
\[
\Pr \left(
d_H (B, C_b) \leq d_H (B, C_a)
\,\middle|\,
x_a,x_b,z
\right)
\le
e^{-N E_{\mathrm{iid}}(\theta_b,\alpha,\beta_b)},
\]
and taking a union bound over the competitors $b \neq a$, gives the conditional probability of error~bound,
\begin{equation}
\Pr (E | x_1 ^ K,z)
\le
\sum_{b\neq a}
e^{-N E_{\mathrm{iid}}(\theta_b,\alpha,\beta_b)}.
\end{equation}
Finally, averaging over the random centers $x_1 ^ K$ and the noise $z$, we get the required result by the tower property, since all competitors \(b\neq a\) are identically distributed by symmetry.
\end{proof}

\noindent \textit{Remarks.} Theorem~\ref{thm:iid_ham} gives an upper to the probability of error of Hamming decoding. 
Unlike the orthogonal-hyperplane bound,  $\mathcal B_{d,\rho}(N)$ is not available in closed form, so the result does
not immediately yield an explicit achievability condition of the form \(N\gtrsim \log K/E(\rho)\). Nevertheless, the form of $\mathcal B_{d,\rho}(N)$ allows to evaluate the bound efficiently by direct Monte
Carlo, without requiring to simlulate the \(N\) hyperplanes.
Moreover, by analysing the
asymptotic behavior of \(\mathcal B_{d,\rho}(N)\), we are able to obtain a clean
achievability condition in sufficiently high dimensions.


\subsubsection{Numerical evaluation of the Hamming bound}

First, it is noted that the expectation in~(\ref{beta_def}) directly leads to a simple Monte Carlo estimate for $\mathcal B_{d,\rho}(N)$, by obtaining $T$ Gaussian samples of $x_a,x_b,z$ in $\mathbb R ^d$:
\[
\widehat{\mathcal B}_{d,\rho}(N)
=
\frac1T
\sum_{t=1}^T
e^{-N E_{\mathrm{iid}}(\theta  ^{(t)},\alpha ^{(t)},\beta ^{(t)})},
\]
where $x_a ^ {(t)}, x_b ^ {(t)} \sim \mathcal N (0,  \tau ^ 2 I_d) $, $z ^ {(t)} \sim \mathcal N (0,  \sigma ^ 2 I_d) $, and the corresponding angles $\alpha ^{(t)}, \beta ^{(t)}, \theta ^{(t)}$ are directly obtained from~(\ref{eq:angles}). Importantly, this simple estimate does not require to simulate $N$ hyperplanes. 
Furthermore, it turns out that rotational invariance can be exploited to give an alternative lower-dimensional expression for $\mathcal B_{d,\rho}(N)$, which therefore leads to a low-dimensional Monte Carlo sampler, that avoids $d$-dimensional sampling. This low-dimensional expression is given in the following proposition,  proven in Appendix~\ref{app:proofs}.

\begin{proposition} \label{prop:dim}
For \(d>2\), an alternative expression for $\mathcal B_{d,\rho}(N)$ is,
\begin{equation*}
\mathcal B_{d,\rho}(N)
=
\mathbb E_{R,G,S, V_1,V_2}
\left[
e^{-N E_{\mathrm{iid}}(\theta,\alpha,\beta)}
\right],
\end{equation*}
where the corresponding angles are given by,
\[
\alpha
=
\arccos\left(
\frac{\rho R+G}
{\sqrt{(\rho R+G)^2+S^2}}
\right),
\quad 
\theta=\arccos(V_1),
\quad 
\beta
=
\arccos(V_1\cos\alpha+V_2\sin\alpha),
\]
and the expectation is over,
$
R\sim\chi_d,
G\sim\mathcal N(0,1),
S\sim\chi_{d-1},
$
and $(V_1, V_2)$ with density,
\begin{equation*}\label{comp_pdf}
f_d(v_1,v_2)
=
\frac{d-2}{2\pi}
(1-v_1^2-v_2^2)^{(d-4)/2},
\qquad
v_1^2+v_2^2\le 1.
\end{equation*}
\end{proposition}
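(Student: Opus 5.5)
The key observation is that $E_{\mathrm{iid}}(\theta,\alpha,\beta)$ depends on $(x_a,x_b,z)$ only through the three angles in (\ref{eq:angles}). The plan is to use rotational invariance to find the joint law of $(\alpha,\theta_b,\beta_b)$, then realise each angle from the stated low-dimensional variables. Throughout we set $\tau=1$ and $\sigma=1/\rho$; this changes no angle, since the angles are scale invariant. It suffices to produce random variables $R,G,S,V_1,V_2$ with the stated laws such that the three angles are the stated functions of them.

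\textbf{Step 1: the angle $\alpha$ between $x_a$ and $y$.} Write $x_a=R e$ with $R=\|x_a\|\sim\chi_d$ and $e$ uniform on $S^{d-1}$, independent of $R$. Condition on $e$ and decompose $\rho z = G e + S f$. Here $G=\langle \rho z,e\rangle\sim\mathcal N(0,1)$, $S=\|P_{e^\perp}\rho z\|\sim\chi_{d-1}$, and $f$ is a unit vector uniform on $S^{d-1}\cap e^\perp$. These three are mutually independent and independent of $(R,e)$, by Gaussian rotation invariance. Then $\rho y=(\rho R+G)e+Sf$, so
\[
\cos\alpha=\frac{\rho R+G}{\sqrt{(\rho R+G)^2+S^2}}.
\]
Moreover, the unit vector $\hat y=\cos\alpha\, e+\sin\alpha\, f$ lies in the plane spanned by $e$ and $f$.

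\textbf{Step 2: the competitor $x_b$.} The direction $\hat x_b=x_b/\|x_b\|$ is uniform on $S^{d-1}$ and independent of everything else. Its norm is irrelevant because only angles enter. Define $V_1=\langle \hat x_b,e\rangle$ and $V_2=\langle\hat x_b,f\rangle$. Since $(e,f)$ is an orthonormal pair independent of $\hat x_b$, rotation invariance shows that $(V_1,V_2)$ is distributed as the first two coordinates of a uniform point on $S^{d-1}$, independent of $(R,G,S)$. Then $\cos\theta_b=V_1$ and $\cos\beta_b=\langle\hat x_b,\hat y\rangle=V_1\cos\alpha+V_2\sin\alpha$, which are exactly the stated formulas.

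\textbf{Step 3: the density of $(V_1,V_2)$.} The only computational step is to verify that the two-dimensional marginal of the uniform measure on $S^{d-1}$ is $f_d$. The standard fact is that the $k$-dimensional marginal has density proportional to $(1-\|v\|^2)^{(d-k-2)/2}$ on the unit ball. To derive it, write the uniform point as $g/\|g\|$ with $g$ Gaussian, or compute the volume of the slice sphere $S^{d-3}$ of radius $\sqrt{1-r^2}$ together with the Jacobian factor $(1-r^2)^{-1/2}$. For $k=2$ this gives the exponent $(d-4)/2$.

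The normalising constant follows from
\[
\int_0^1 2\pi r(1-r^2)^{(d-4)/2}\,dr=\frac{2\pi}{d-2},
\]
which gives $\frac{d-2}{2\pi}$. The requirement $d>2$ is exactly what makes this density integrable. (At $d=3$ the density is the Archimedes-type $\frac{1}{2\pi}(1-r^2)^{-1/2}$, which is still integrable.)

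The main care point is the independence bookkeeping in Steps 1 and 2. The frame $(e,f)$ is random, but it is independent of $\hat x_b$, so conditioning on it and using invariance of the uniform measure under orthogonal maps sending $(e,f)$ to the first two basis vectors settles the claim. Taking expectations of $e^{-NE_{\mathrm{iid}}}$ then completes the proof.
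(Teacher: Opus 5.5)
Your proposal is correct and follows essentially the same route as the paper. You align a frame with $x_a$, split the noise into its component along $x_a$ and its orthogonal part to get $(R,G,S)$ and the formula for $\cos\alpha$, and read off $V_1,V_2$ as the coordinates of the uniform direction of $x_b$ in the plane of $x_a$ and $y$. The only addition is that you derive the two-dimensional marginal density and its constant $\frac{d-2}{2\pi}$, where the paper simply cites it.

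One wording fix: $f$ cannot be independent of $e$, since $f \perp e$. What holds, and what your Step 2 actually uses, is that $R$, $G$, $S$ are mutually independent and $\hat x_b$ is independent of the orthonormal pair $(e,f)$.
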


\subsubsection{High-dimensional asymptotics: \texorpdfstring{\(d\to\infty\)}{d to infinity}, fixed \texorpdfstring{\(N\)}{N}}

In sufficiently high dimensions, we expect the random hyperplanes to behave as orthogonal hyperplanes due to the concentration of angles. This phenomenon is studied in its full generality in Section~\ref{s:high_d_analysis}. In the following lemma, a much weaker  -- yet useful -- statement is shown, namely, that as $d\to \infty$ we recover the BSC achievability condition from Theorem~\ref{thm:bsc}.

\begin{lemma}
\label{lem:B_high_dim}
For fixed \(N >0\) and \(\rho>0\), as \(d\to\infty\),
\[
\mathcal B_{d,\rho}(N)
\to
e^{-N E_{\mathrm{BSC}}(\rho)},
\]
and therefore a sufficient  condition to ensure $\Pr (E) \leq \delta$ is,
\[
N\ge \frac{\log((K-1)/\delta)}{E_{\mathrm{BSC}}(\rho)}.
\]
\end{lemma}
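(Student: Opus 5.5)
The plan is to show that the three random angles $(\theta_b,\alpha,\beta_b)$ entering $E_{\mathrm{iid}}$ converge in probability to deterministic limits as $d\to\infty$. We then evaluate $E_{\mathrm{iid}}$ at those limits, check that this gives exactly $E_{\mathrm{BSC}}(\rho)$, and pass the limit through the expectation in (\ref{beta_def}) by bounded convergence.

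\emph{Step 1: limits of the angles.} By the law of large numbers, $\|x_a\|^2/d\to\tau^2$, $\|x_b\|^2/d\to\tau^2$, $\|z\|^2/d\to\sigma^2$, and the normalized cross inner products $\langle x_a,z\rangle/d$, $\langle x_a,x_b\rangle/d$, $\langle x_b,z\rangle/d$ all tend to $0$, in probability. Equivalently, one can read these limits off the representation of Proposition~\ref{prop:dim}, using $R/\sqrt d\to1$, $S/\sqrt d\to1$, $G/\sqrt d\to0$, and $(V_1,V_2)\to0$, since the density $f_d$ concentrates at the origin. From these limits:
\begin{itemize}
\item $\cos\theta_b\to 0$, so $\theta_b\to\pi/2$;
\item $\cos\alpha=\langle x_a,x_a+z\rangle/(\|x_a\|\|y\|)\to \tau/\sqrt{\tau^2+\sigma^2}=\rho/\sqrt{1+\rho^2}$, so $\alpha\to\arctan(1/\rho)=\pi p$ with $p$ as in (\ref{eq:cross_prob});
\item $\cos\beta_b=\langle x_b,y\rangle/(\|x_b\|\|y\|)\to0$, so $\beta_b\to\pi/2$.
\end{itemize}

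\emph{Step 2: evaluation at the limit point and continuity.} Since $\rho>0$, we have $p<1/2$, so at the limit point $\beta_b=\pi/2>\pi p=\alpha$ strictly. Hence in a neighbourhood of the limit point the first branch of (\ref{eq:eiid}) applies, and that branch is continuous there. Plugging in the limits gives
\[
E_{\mathrm{iid}}\to-\log\Big[\tfrac12+\tfrac1\pi\sqrt{\tfrac{\pi^2}{4}-\pi^2(\tfrac12-p)^2}\Big]=-\log\Big[\tfrac12+\sqrt{p(1-p)}\Big]=E_{\mathrm{BSC}}(\rho).
\]
By the continuous mapping theorem, $e^{-NE_{\mathrm{iid}}(\theta_b,\alpha,\beta_b)}\to e^{-NE_{\mathrm{BSC}}(\rho)}$ in probability.

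\emph{Step 3: exchanging limit and expectation.} The integrand lies in $[0,1]$. Indeed, $E_{\mathrm{iid}}\ge0$ always, because $\sqrt{\theta_b^2-(\beta_b-\alpha)^2}\le\theta_b$ (and, by the triangle inequality for angles, $|\beta_b-\alpha|\le\theta_b$, so the square root is real). Bounded convergence, valid under convergence in probability, then gives $\mathcal B_{d,\rho}(N)\to e^{-NE_{\mathrm{BSC}}(\rho)}$ for each fixed $N$.

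\emph{Step 4: the sufficient condition.} Combining with Theorem~\ref{thm:iid_ham} gives $\limsup_{d}\Pr(E)\le (K-1)e^{-NE_{\mathrm{BSC}}(\rho)}$, which is at most $\delta$ under the stated condition on $N$. This should be phrased as an asymptotic statement: for any $\delta'>\delta$, $\Pr(E)\le\delta'$ once $d$ is large enough.

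\emph{Main obstacle.} The delicate points are the continuity of $E_{\mathrm{iid}}$ near the limit and the boundary $\beta_b=\alpha$ between the two branches of (\ref{eq:eiid}). Both are handled by the strict gap $\pi/2-\pi p>0$ at the limit point together with the uniform bound on the integrand, so no quantitative rates are needed for fixed $N$.
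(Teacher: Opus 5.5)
Your proposal is correct and follows the paper's proof essentially step for step. Law-of-large-numbers concentration gives $\theta_b,\beta_b\to\pi/2$ and $\alpha\to\arctan(1/\rho)=\pi p$, the strict gap $\beta_b>\alpha$ at the limit keeps you on the continuous first branch of $E_{\mathrm{iid}}$, and bounded convergence passes the limit through $\mathcal B_{d,\rho}(N)$. Your explicit check that $E_{\mathrm{iid}}(\pi/2,\pi p,\pi/2)=E_{\mathrm{BSC}}(\rho)$, your argument that the integrand lies in $[0,1]$, and your remark that the sufficient condition only holds asymptotically in $d$ (giving $\Pr(E)\le\delta+o(1)$, or $\Pr(E)\le\delta$ for large $d$ when $N$ strictly exceeds the threshold) are slightly more careful than the paper's write-up.
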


\begin{proof}
First, consider the  angle
$
\alpha=\angle(x_a,x_a+z),
$
so that,
\[
\cos\alpha
=
\frac{x_a^\top(x_a+z)}
{\|x_a\|\|x_a+z\|},
\]
where
$
x_a\sim \mathcal N(0,\tau^2 I_d),
\ 
z\sim \mathcal N(0,\sigma^2 I_d).
$
By the law of large numbers, we easily get the concentration of Gaussian norms and inner products,
\[
\frac{\|x_a\|^2}{d} \overset{P}{\to} \tau^2,
\qquad
\frac{\|z\|^2}{d} \overset{P}{\to} \sigma^2,
\qquad
\frac{x_a^\top z}{d} \overset{P}{\to}0.
\]
Therefore,
\[
\cos\alpha
\overset{P}{\to}
\frac{\tau}{\sqrt{\tau^2+\sigma^2}}
=
\frac{\rho}{\sqrt{1+\rho^2}}
\]
and since $\arccos$ is continuous,
\[
\alpha
\overset{P}{\to} \alpha_\infty = 
\arccos\left(\frac{\rho}{\sqrt{1+\rho^2}}\right)
=
\arctan\frac1\rho.
\]
Similarly, since \(x_b\) is independent of both \(x_a\) and \(y=x_a+z\), its direction is asymptotically orthogonal to both, and the high-dimensional concentration of angles yields,
\[
\theta_ b=\angle(x_a,x_b) \overset{P}{\to} \frac{\pi}{2},
\qquad
\beta _ b=\angle(x_b,y) \overset{P}{\to} \frac{\pi}{2}.
\]
Since \(\rho>0\), we have
$
\alpha _ \infty = \arctan 1 / \rho < \pi/2,
$
and hence $\beta _b > \alpha$.
Therefore,
\[
e^{-N E_{\mathrm{iid}}(\theta_b,\alpha,\beta_b)}
\overset{P}{\longrightarrow}
e^{-N E_{\mathrm{iid}}
\left(
\frac{\pi}{2},
\alpha_\infty,
\frac{\pi}{2}
\right)},
\]
and since $e^{-N E_{\mathrm{iid}}(\theta_b,\alpha,\beta_b)}$ is uniformly bounded, we also get convergence in expectation, 
\[
\mathcal B_{d,\rho}(N) =  \mathbb  E _{x_a,x_b,z} [e^{-N E_{\mathrm{iid}}(\theta_b,\alpha,\beta_b)} ] 
\to
e^{-N E_{\mathrm{iid}}
\left(
\frac{\pi}{2},
\alpha_\infty,
\frac{\pi}{2}
\right)}.
\]
Substituting in~(\ref{eq:eiid}), and noting that 
$
p=\frac{\alpha_\infty}{\pi}
=
\frac1\pi\arctan \frac1\rho $, completes the proof, since it finally yields, $E_{\mathrm{iid}}
\left(
\frac{\pi}{2},
\alpha_\infty,
\frac{\pi}{2}
\right) = E_{\mathrm{BSC}}(\rho)$.
\end{proof}

\subsubsection{Large-\texorpdfstring{\(N\)}{N} asymptotics: \texorpdfstring{\(N\to\infty\)}{N to infinity}, fixed \texorpdfstring{\(d\)}{d}}

For fixed \(d\), as \(N\to\infty\), Hamming distances converge to angular distances. In particular,
\[
\frac1N d_H(B,C_m) = \frac1 N \sum _{i=1} ^N \mathbf 1\{B_i\neq C_{m,i}\}
\overset{P}{\to}
\frac{\angle(y,x_m)}{\pi},
\]
The convergence follows from the law of large numbers, since conditional on $x_m,y$ the indicator random variables are i.i.d. Bernoulli with $p = {\angle(y,x_m)}/{\pi}$.
Therefore, for any fixed realization of $x_1 ^K, z$, the Hamming decoder converges to angular nearest-neighbor decoding:
\[
\widehat a_{\mathrm{Ham}}
\to
\arg\min_m \angle(y,x_m).
\]
Therefore, for any fixed $d$, the limiting probability of error is given by,
\[
P_e ^ {\infty} = 
\Pr\left(
\min _ {b\neq a} \beta_b < \alpha
\right).
\]
In the following proposition, we derive an exact and numerically tractable
expression for the limiting probability of error, which can be evaluated
efficiently by low-dimensional Monte Carlo. Its proof follows an order statistics argument, and is given in Appendix~\ref{app:proofs}.

\begin{proposition}
\label{prop:angular_floor}
In the random hyperplanes setting of Section~\ref{s:iid_descr}, for the probability of error of Hamming decoding, as
\(N\to\infty\) we get,
\begin{equation} \label{eq:lim_error}
P_e^{\mathrm{Ham}}
\to
P_e ^ {\infty} = 
\Pr\left(
\min_{b\neq a}\beta_b\le\alpha
\right)
=
\mathbb E_{\alpha}
\left[
1-
\left(
I_{\frac{1+\cos\alpha}{2}}
\left(
\frac{d-1}{2},
\frac{d-1}{2}
\right)
\right)^{K-1}
\right],
\end{equation}
where \(I_x(a,b)\) denotes the regularized incomplete beta function, and,
\[
\cos\alpha
=
\frac{\rho R+G}
{\sqrt{(\rho R+G)^2+S^2}},
\]
with
$
R\sim\chi_d,\
G\sim\mathcal N(0,1),\
S\sim\chi_{d-1},
$
mutually independent.
\end{proposition}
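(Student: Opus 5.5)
The plan has three parts. First I justify the limit $P_e^{\mathrm{Ham}}\to\Pr(\min_{b\neq a}\beta_b\le\alpha)$. Then I compute that probability by conditioning on $(x_a,z)$, using the independence of the competitors. Finally I express $\cos\alpha$ in the stated low-dimensional form.

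\emph{Convergence.} Fix a realization of $x_1^K,z$. Conditional on it, the indicators $\mathbf 1\{B_i\neq C_{m,i}\}$ are i.i.d. Bernoulli$(\angle(y,x_m)/\pi)$ over $i$. By the strong law of large numbers, $\frac1N d_H(B,C_m)\to\angle(y,x_m)/\pi$ almost surely, simultaneously for all $K$ classes. With probability one the angles $\alpha,\beta_b$ are all distinct, since the $x_m,y$ have continuous distributions in $d\ge2$. On this event:
\begin{itemize}
\item if $\min_b\beta_b>\alpha$, then eventually $d_H(B,C_a)<d_H(B,C_b)$ for all $b\neq a$, so no error occurs;
\item if $\min_b\beta_b<\alpha$, then an error eventually occurs.
\end{itemize}
Hence the conditional error probability converges to $\mathbf 1\{\min_b\beta_b<\alpha\}$ almost surely. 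Bounded convergence then gives $P_e^{\mathrm{Ham}}\to\Pr(\min_b\beta_b\le\alpha)$. Ties between angles have probability zero, so $<$ versus $\le$ does not matter.

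\emph{Order statistics.} Condition on $x_a$ and $z$, which fixes $y$ and $\alpha$. The competitors $x_b$, $b\neq a$, are i.i.d. $\mathcal N(0,\tau^2I_d)$ and independent of $y$. Their directions are therefore i.i.d. uniform on $S^{d-1}$, so the events $\{\beta_b>\alpha\}$ are conditionally independent with a common probability $q(\alpha)$. This gives
\[
\Pr\Big(\min_{b\neq a}\beta_b\le\alpha\,\Big|\,\alpha\Big)=1-q(\alpha)^{K-1}.
\]
It remains to compute $q(\alpha)=\Pr(\langle U,\hat y\rangle<\cos\alpha)$ for $U$ uniform on the sphere. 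The coordinate $T=\langle U,\hat y\rangle$ has the standard law in which $(1+T)/2\sim\mathrm{Beta}\big(\frac{d-1}{2},\frac{d-1}{2}\big)$, with density proportional to $(1-t^2)^{(d-3)/2}$. Consequently
\[
q(\alpha)=I_{(1+\cos\alpha)/2}\Big(\tfrac{d-1}{2},\tfrac{d-1}{2}\Big).
\]
Averaging over $\alpha$ gives the stated formula. A quick sanity check at $\alpha=\pi/2$ gives $q=1/2$, as it should.

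\emph{Law of $\alpha$.} By rotational invariance I rotate so that $x_a=\tau R e_1$ with $R=\|x_a\|/\tau\sim\chi_d$. I then decompose $z/\sigma$ into its $e_1$ component $G\sim\mathcal N(0,1)$ and its orthogonal part, whose norm is $S\sim\chi_{d-1}$. These three are mutually independent and independent of $R$, by independence of Gaussian coordinates. Then
\[
y/\sigma=(\rho R+G)e_1+(\text{orthogonal part of norm } S),
\]
which gives $\cos\alpha=(\rho R+G)/\sqrt{(\rho R+G)^2+S^2}$.

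The main obstacle is the first step, making the interchange of the limit in $N$ with the expectation rigorous. In particular I must check that ties occur with probability zero, so that the almost-sure limit of the decoder's indicator exists. The remaining steps are routine.
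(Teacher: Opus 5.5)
Your proposal is correct and follows the paper's route. The paper also reduces Hamming decoding, via the law of large numbers, to angular nearest-neighbour decoding. It then uses conditional independence of the competitor directions given $\alpha$ to get $1-q(\alpha)^{K-1}$, the Beta law of one coordinate of a uniform point on $S^{d-1}$, and the $(R,G,S)$ decomposition of $\alpha$ from Proposition~\ref{prop:dim}.

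Your handling of the $N\to\infty$ limit is more careful than the paper's, which simply asserts that the decoder converges. You use the SLLN for all $K$ classes at once, show that ties $\beta_b=\alpha$ have probability zero, and apply bounded convergence. Writing the CDF directly as $\Pr\big((1+T)/2<(1+\cos\alpha)/2\big)$ also spares you the symmetry identity $1-I_x(a,a)=I_{1-x}(a,a)$ that the paper invokes.
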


\noindent \textbf {Remark 1.}
For every finite \(d\ge2\) and finite \(\rho<\infty\),
the limiting probability of error is strictly positive.
Moreover,  since $I_x(a,b)$  is strictly increasing in~x, the limiting probability of error is strictly decreasing with~\(\rho\). In particular, as \(\rho\to\infty\),
\(
P_e ^ {\infty}\to0,
\)
and as \(\rho\to0\), $
P_e ^ {\infty}\to 1-1/K.
$

\medskip

\noindent \textbf {Remark 2.}
For any \(\rho>0\),  $P_e ^ {\infty} \to 0$ as $d \to \infty$, showing that in high dimensions the limiting probability of error vanishes. In fact, a more refined asymptotic analysis of the above expression (e.g., via Stirling's approximation) suggests that the decay is exponential in $d$.



\subsection{Maximum likelihood decoding} \label{s:iid_ml}

As before, Hamming decoding is not optimal in this setting, so we can again consider ML decoding to improve performance. Let, 
\[
U=
\begin{bmatrix}
u_1^\top\\
\vdots\\
u_N^\top
\end{bmatrix}
\in\mathbb R^{N\times d},
\qquad
B=\operatorname{sign}(Uy)\in\{\pm1\}^N.
\]
Under hypothesis \(m\),
\[
y=x_m+z,
\qquad
z\sim\mathcal N(0,\sigma^2 I_d).
\]
so that,
$
Uy
=
Ux_m+Uz,
$ and finally,
\[
Uy
\sim
\mathcal N(Ux_m,\sigma^2UU^\top).
\]
Therefore, the likelihood of a candidate class \(m\) is,
\[
P_m(B)
=
\Pr\left(
\operatorname{sign}(G_m)=B
\right),
\qquad
G_m\sim\mathcal N(Ux_m,\sigma^2UU^\top).
\]
In general,
$
UU^\top\neq I_N,
$
so the likelihood does not factor over coordinates as in the orthogonal hyperplanes setting.
So computing the likelihood
\(P_m(B)\) corresponds to computing an \(N\)-dimensional correlated
Gaussian orthant probability. 

The numerical evaluation of Gaussian orthant probabilities is a well-studied problem in the statistics literature, which becomes excessively hard as the dimension grows; see, e.g.,~\cite{ridgway2016computation}. Numerous approaches have been proposed, including numerical integration, importance-sampling, and sequential Monte Carlo methods~\cite{ridgway2016computation,genz1992numerical,hajivassiliou1996simulation,geweke1991efficient}. Although these methods can be effective in moderate dimensions (typically up to few tens, depending on the specific problem), in high dimensions the problem remains challenging. In our setting,  ML decoding becomes already impractical for moderate values of $N$ around $N\ge 50$; see also the simulations section below.

For this reason, we introduce the following approximate ML (`pseudo-ML') decoder, by assuming that bitflips are independent across coordinates. In the next section, we show that in sufficiently high dimensions, this decoder effectively achieves the same performance as the true ML decoder, while being computationally efficient. This observation is empirically validated in the simulations section, where we illustrate the practical utility of the pseudo-ML decoder.

\subsubsection{Approximate pseudo-ML decoder}

The pseudo-ML decoder assumes that bitflips are independent, as in the orthogonal hyperplanes setting. As before, for candidate \(m\), we define the realized crossover probabilities,
\[
p_{m,i}
=
Q\left(
\frac{|u_i^\top x_m|}{\sigma}
\right),
\]
for $i=1,...,N$. The pseudo-likelihood score of candidate $m$ is defined exactly as in the orthogonal hyperplanes setting, by assuming that bitflips are independent across coordinates,
\[
\widetilde P_m(B)
=
\prod_{i=1}^N
\left[
(1-p_{m,i})\mathbf 1\{B_i=C_{m,i}\}
+
p_{m,i}\mathbf 1\{B_i\neq C_{m,i}\}
\right],
\]
and the approximate pseudo-ML decoder is given by,
\[
\widehat m_{\mathrm{pML}}
=
\arg\max_m \widetilde P_m(B).
\]
This decoder is exactly ML in the orthogonal hyperplanes setting.
For random hyperplanes, it is a valid reliability-aware mismatched decoder, but not the optimal one, since it
uses the realized probabilities \(p_{m,i}\) but ignores cross-bit correlations. In high dimensions, where random hyperplanes are approximately orthogonal, we expect this decoder to actually perform similarly with the true ML decoder; this is studied in detail in the next section.

It is noted that Chernoff analysis allows to obtain an upper bound to the probability of error of the pseudo-ML decoder; see Appendix~\ref{app:proofs} for more details. However, the resulting bound does not directly give a clean achievability condition for the required number of hyperplanes, and does not simplify enough to allow for an efficient  Monte Carlo estimate.

\vspace*{-0.15 cm}

\subsection{Theoretical justification of the high-dimensional orthogonal limit } \label{s:high_d_analysis}

\vspace*{-0.05 cm}

As described before, in sufficiently high dimensions we expect random hyperplanes to behave similarly with orthogonal hyperplanes, which provides the main motivation for the careful analysis of the orthogonal-hyperplanes setting in previous sections. We now provide strong theoretical justifications of this high-dimensional orthogonal limit. 

In terms of quantifying the modes of this convergence, we consider the following relevant regimes: a) Covariance-level convergence of the projected noise vector, and b) Full block-likelihood convergence of the observed binary vector. In general, the first condition provides a practical rule of thumb for when to expect random hyperplanes to behave as approximately orthogonal, while the second gives a strong sufficient condition ensuring that the complete observation model has converged in a strict sense. These theoretical findings are also empirically validated in the following experimental section.

\vspace*{-0.2 cm}

\subsubsection{Covariance-level convergence}

With notation as before, the exact sign likelihood under class \(m\) is,
\[
P_m^{}(B)
=
\Pr\left\{
\operatorname{sign}(G_m)=B
\right\},
\qquad
G_m\sim
\mathcal N(Ux_m,\sigma^2UU^\top).
\]
The first sense in which random hyperplanes approximate the orthogonal model is
through the projected-noise covariance, i.e., when,
\[
UU^\top\approx I_N.
\]
Since likelihood-based
decoding depends on the full \(N\)-dimensional Gaussian covariance, this matrix approximation  is naturally measured in operator norm. In the following, \(\|\cdot\|_{\mathrm{op}}\) denotes the spectral norm, or equivalently, the largest singular value of the matrix.

\begin{proposition}
\label{prop:covariance_limit}
Let
$
u_1,\dots,u_N\overset{\mathrm{iid}}{\sim}\operatorname{Unif}(S^{d-1}),
\quad
U=
\begin{bmatrix}
u_1^\top\\
\vdots\\
u_N^\top
\end{bmatrix}
\in\mathbb R^{N\times d}.
$
If \(N/d\to0\), then,
\[
\left\|UU^\top-I_N\right\|_{\mathrm{op}}
=
O_p \left(
\sqrt{\frac Nd}
\right)
\overset{P}{\longrightarrow}0.
\]
\end{proposition}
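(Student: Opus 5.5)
The plan is to reduce the statement to a standard singular-value bound for Gaussian matrices, and then handle the normalization of the rows separately. Write each $u_i = g_i/\|g_i\|$ with $g_1,\dots,g_N \overset{\mathrm{iid}}{\sim}\mathcal N(0,I_d)$. Let $G\in\mathbb R^{N\times d}$ be the matrix with rows $g_i^\top$, and let $D=\mathrm{diag}(\sqrt d/\|g_i\|)$. Then $U = d^{-1/2} D G$, so
\[
UU^\top = D\,\Big(\tfrac1d GG^\top\Big)\,D .
\]
We split
\[
UU^\top - I_N = D\Big(\tfrac1d GG^\top - I_N\Big)D + (D^2 - I_N),
\]
and bound the two terms separately in operator norm. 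Since $\|D\|_{\mathrm{op}}$ will be shown to be $1+o_p(1)$, it suffices to establish two estimates:
\[
\big\|\tfrac1d GG^\top - I_N\big\|_{\mathrm{op}} = O_p\big(\sqrt{N/d}\big)
\quad\text{and}\quad
\max_i \big|d/\|g_i\|^2 - 1\big| = O_p\big(\sqrt{N/d}\big).
\]

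\textbf{Gaussian term.} Here $G^\top$ is a $d\times N$ matrix with i.i.d.\ standard Gaussian entries and $d\ge N$. By the standard bound on extreme singular values (e.g.\ Vershynin, Theorem 4.6.1 in~\cite{vershynin2018high}), with probability at least $1-2e^{-t^2}$ all singular values of $G^\top$ lie in $[\sqrt d - C(\sqrt N + t),\ \sqrt d + C(\sqrt N + t)]$. Equivalently, the eigenvalues of $\tfrac1d GG^\top$ lie in $[(1-\epsilon)^2,(1+\epsilon)^2]$ with $\epsilon = C(\sqrt N+t)/\sqrt d$. Hence $\|\tfrac1d GG^\top - I_N\|_{\mathrm{op}} \le 3\epsilon$ once $\epsilon\le 1$. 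Taking $t$ to be a fixed large constant gives the $O_p(\sqrt{N/d})$ bound.

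\textbf{Normalization term.} For the diagonal, apply the $\chi^2$ concentration (Laurent--Massart):
\[
\Pr\big(|\|g_i\|^2/d - 1| \ge 2\sqrt{x/d} + 2x/d\big) \le 2e^{-x}.
\]
A union bound over the $N$ rows with $x = \log N + t$ gives
\[
\max_i |\|g_i\|^2/d - 1| = O_p\big(\sqrt{\log N/d}\big),
\]
which is $O_p(\sqrt{N/d})$ since $\log N\le N$. In particular $\|D\|_{\mathrm{op}} \to 1$ in probability. Combining the two pieces with the triangle inequality and submultiplicativity yields
\[
\|UU^\top - I_N\|_{\mathrm{op}} = O_p(\sqrt{N/d}),
\]
which tends to $0$ in probability under $N/d\to0$.

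\textbf{Main obstacle.} The only delicate point is getting the rate $\sqrt{N/d}$ rather than something weaker. The natural alternative, a Frobenius or entrywise bound on the off-diagonal inner products $u_i^\top u_j$, would give only $N/\sqrt d$ or $\sqrt{N^2/d}$. So the argument must go through the non-asymptotic spectral bound for the Gaussian matrix; the spherical normalization is then a lower-order perturbation. If one prefers to avoid the Gaussian representation, an alternative is to apply the same theorem directly to the matrix with independent isotropic sub-Gaussian rows $\sqrt d\,u_i$, whose sub-Gaussian norm is $O(1)$ uniformly in $d$. This gives the claim in one step.
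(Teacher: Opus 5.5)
Your proof is correct, but it takes a different route from the paper. The paper works directly with $A=\sqrt d\,U^\top\in\mathbb R^{d\times N}$. Its \emph{columns} $\sqrt d\,u_i$ are independent, isotropic, sub-Gaussian, and of norm exactly $\sqrt d$, so the paper invokes the independent-columns bound (Theorem 5.58 and Remark 5.59 of Vershynin 2012) to get $\|\tfrac1d A^\top A-I_N\|_{\mathrm{op}}=O_p(\sqrt{N/d})$ in one step. There the spherical normalization is simply a hypothesis of that theorem.

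You instead undo the normalization via $u_i=g_i/\|g_i\|$. Then $G^\top$ has i.i.d.\ Gaussian \emph{rows} in $\mathbb R^N$, and the most classical tall-matrix singular-value bound applies. You pay for the normalization separately, through the exact identity $UU^\top-I_N=D(\tfrac1dGG^\top-I_N)D+(D^2-I_N)$ and a $\chi^2$ union bound.

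The paper's route is shorter but relies on a more specialized theorem. Yours uses only the Gaussian bound and scalar concentration, gives explicit tail probabilities, and shows that the normalization contributes only a lower-order $O_p(\sqrt{(1+\log N)/d})$ term. Two small cleanups:
\begin{itemize}
\item Write $\sqrt{(1+\log N)/d}$ rather than $\sqrt{\log N/d}$, since the latter vanishes at $N=1$.
\item State explicitly that $\|D^2-I_N\|_{\mathrm{op}}=\max_i|d/\|g_i\|^2-1|\le 2\max_i|\|g_i\|^2/d-1|$ once the right-hand maximum is at most $1/2$.
\end{itemize}

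Your closing ``one-step'' alternative, however, does not work as stated. Apply the independent-rows theorem (Theorem 4.6.1) to the $N\times d$ matrix with rows $\sqrt d\,u_i\in\mathbb R^d$. It only says the singular values lie in $[\sqrt N-C(\sqrt d+t),\sqrt N+C(\sqrt d+t)]$. For $N\le d$ this gives no lower bound and only an $O(1)$ upper bound on $\|U\|_{\mathrm{op}}$, not $1+o(1)$. The theorem controls $\tfrac1N\sum_i d\,u_iu_i^\top\approx I_d$ and needs $N\gg d$, which is the opposite regime.

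Transposing does not help either. The rows of $\sqrt d\,U^\top$ are not independent, because the coordinates of each $u_i$ are coupled by $\|u_i\|=1$. A genuine one-step argument needs the independent-columns theorem with the exact-norm condition, which is precisely the paper's proof.
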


\begin{proof}
The proof follows directly from standard random matrix theory. For example, by defining 
$A:=\sqrt d\,U^\top\in\mathbb R^{d\times N},
$
its columns are independent, centered, isotropic
sub-Gaussian random vectors in \(\mathbb R^d\), with norms
$
\|A_i\|_2=\sqrt d
$, almost surely. Also,
under \(N/d\to0\), we have \(N\le d\) eventually, and
standard Gram-matrix concentration for such matrices gives, 
\[
\left\|
\frac1dA^\top A-I_N
\right\|_{\mathrm{op}}
=
O_p \left(
\sqrt{\frac Nd}
\right),
\]
e.g., by Theorem~5.58 and Remark~5.59 of~\cite{vershynin2012introduction}.
The required result  follows directly.
\end{proof}

\noindent \textit{Remark.} Thus,
$
 d \gg N
$
is the natural covariance-level regime in which random hyperplanes behave
approximately as orthogonal; this is also validated in the simulations section.
This covariance convergence is, however, only a first-order notion of
approximation. The stronger notion of  convergence of the full block-likelihood of the $N$-bit vector is studied next.

\subsubsection{Full block-likelihood convergence}

A natural strong notion of convergence to the orthogonal limit is convergence of the full block-likelihood in total variation. This is a natural notion of convergences since it ensures that every decoder has asymptotically the same probability of error probability under the two observation models (i.e., under orthogonal or random hyperplanes). In specific, for two
probability distributions \(P\) and \( \widetilde P\) on \(\{\pm1\}^N\), and any decoder rule \(\widehat m:\{\pm1\}^N\to\{1,\dots,K\}\),
\[
\left|
\Pr_P(\widehat m(B)\neq m)
-
\Pr_{\widetilde P}(\widehat m(B)\neq m)
\right|
\le
d_{\mathrm{TV}}(P,\widetilde P).
\]
In our setting, for any observed binary vector $B \in \{\pm1\}^N$, define the likelihood of class $m$ as,
\begin{equation} \label{eq:true_lik}
P_m(B)
=
\Pr\left\{
\operatorname{sign}(G_m)=B
\right\},
\qquad
G_m\sim
\mathcal N(Ux_m,\sigma^2UU^\top),
\end{equation}
and the corresponding pseudo-likelihood as,
\begin{equation} \label{eq:pseudo_lik}
\widetilde P_m (B)
=
\Pr\left\{
\operatorname{sign}( \widetilde G_m)=B
\right\},
\qquad
\widetilde G_m\sim
\mathcal N(Ux_m,\sigma^2 I _N).
\end{equation}

\begin{proposition}
\label{prop:block_likelihood_limit}
Let $P_m$ and $\widetilde P_m$ denote the likelihood and pseudo-likelihood of class $m$, as defined in (\ref{eq:true_lik})-(\ref{eq:pseudo_lik}), respectively.
If $
{N^2}/{d}\to0,
$
then for every class \(m\),
\[
d_{\mathrm{TV}}(P_m,\widetilde P_m)
=
O_p\left(\frac{N}{\sqrt d}\right)
\overset{P}{\longrightarrow}0.
\]
\end{proposition}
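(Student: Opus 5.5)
The plan is to bound the total variation between the two sign laws by the total variation between the underlying Gaussian vectors. Sign quantization is a deterministic map, so the data-processing inequality gives
\[
d_{\mathrm{TV}}(P_m,\widetilde P_m)\le d_{\mathrm{TV}}\bigl(\mathcal N(Ux_m,\sigma^2UU^\top),\,\mathcal N(Ux_m,\sigma^2I_N)\bigr).
\]
The two Gaussians share the mean $Ux_m$. The mean cancels, so neither $x_m$ nor $\sigma$ enters, and it suffices to compare $\mathcal N(0,\Sigma)$ with $\mathcal N(0,I_N)$ for $\Sigma:=UU^\top$. Both laws are well defined on the event that $\Sigma$ is invertible. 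By Proposition~\ref{prop:covariance_limit} this event has probability tending to one, since $N^2/d\to0$ implies $N/d\to0$.

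Next, I would apply Pinsker's inequality, which gives $d_{\mathrm{TV}}\le\sqrt{\tfrac12 D_{\mathrm{KL}}}$, together with the closed form of the Gaussian KL divergence:
\[
D_{\mathrm{KL}}\bigl(\mathcal N(0,\Sigma)\,\|\,\mathcal N(0,I_N)\bigr)=\tfrac12\sum_{j=1}^N\bigl(\lambda_j-1-\log\lambda_j\bigr),
\]
where the $\lambda_j$ are the eigenvalues of $\Sigma$. Set $\varepsilon:=\|\Sigma-I_N\|_{\mathrm{op}}$. Whenever $\varepsilon\le1/2$, the elementary bound $\lambda-1-\log\lambda\le C(\lambda-1)^2$ for $|\lambda-1|\le1/2$ applies. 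It yields $D_{\mathrm{KL}}\le C\|\Sigma-I_N\|_F^2\le CN\varepsilon^2$.

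The step I expect to be the crux is turning this into the rate $N/\sqrt d$. The crude operator-norm route gives $D_{\mathrm{KL}}\lesssim N\cdot N/d$ and hence $d_{\mathrm{TV}}=O_p(N/\sqrt d)$, which is exactly the claimed rate. It is cleaner to bound the Frobenius norm directly. The diagonal of $\Sigma-I_N$ vanishes because $\|u_i\|=1$. Off the diagonal, $\mathbb E\langle u_i,u_j\rangle^2=1/d$, so $\mathbb E\|\Sigma-I_N\|_F^2=N(N-1)/d$. Markov's inequality then gives $\|\Sigma-I_N\|_F^2=O_p(N^2/d)$, and this route needs nothing beyond the second moment. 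The eigenvalue condition $\varepsilon\le1/2$ holds with probability tending to one, either by Proposition~\ref{prop:covariance_limit} or simply because $\varepsilon\le\|\Sigma-I_N\|_F=O_p(N/\sqrt d)\to0$.

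Combining these steps gives $d_{\mathrm{TV}}(P_m,\widetilde P_m)\le\sqrt{C'\,\|\Sigma-I_N\|_F^2}=O_p(N/\sqrt d)$ on a high-probability event. Off that event we use the trivial bound $d_{\mathrm{TV}}\le1$, which is harmless for an $O_p$ statement. The bound is uniform in $m$, because only $U$ enters, so it holds for every class simultaneously.
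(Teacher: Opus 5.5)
Your proof is correct and follows the paper's outline: data processing through the sign map, Pinsker, the equal-mean Gaussian KL reducing to $-\tfrac12\log\det\Sigma$ because $\operatorname{tr}\Sigma=N$, and a quadratic bound in $\Delta:=\Sigma-I_N$. You apply data processing to TV and then Pinsker, whereas the paper applies Pinsker first and data processing to KL; that swap is immaterial.

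The real difference is how you control $\|\Delta\|_{\mathrm F}^2$ and keep the spectrum near $1$. The paper bounds $\|\Delta\|_{\mathrm F}^2\le N\|\Delta\|_{\mathrm{op}}^2$ and imports $\|\Delta\|_{\mathrm{op}}=O_p(\sqrt{N/d})$ from sub-Gaussian Gram-matrix concentration (Proposition~\ref{prop:covariance_limit}). It then handles $\log\det(I_N+\Delta)$ by a Taylor expansion whose remainder $O(\|\Delta\|_{\mathrm{op}}\|\Delta\|_{\mathrm F}^2)$ is only asymptotic. You instead compute $\mathbb E\|\Delta\|_{\mathrm F}^2=N(N-1)/d$ exactly from $\mathbb E\langle u_i,u_j\rangle^2=1/d$. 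You get the spectral localization for free from $\|\Delta\|_{\mathrm{op}}\le\|\Delta\|_{\mathrm F}$, and you use the explicit inequality $\lambda-1-\log\lambda\le(\lambda-1)^2$ for $|\lambda-1|\le\tfrac12$.

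Your route needs no random-matrix input and is effectively non-asymptotic. On $\{\|\Delta\|_{\mathrm F}\le\tfrac12\}$ you get $d_{\mathrm{TV}}(P_m,\widetilde P_m)\le\tfrac12\|\Delta\|_{\mathrm F}$, so Markov gives $\Pr\bigl(d_{\mathrm{TV}}(P_m,\widetilde P_m)>t\bigr)\le 5N^2/(dt^2)$ for every $t\in(0,1)$ and every $d$. The paper's route reuses Proposition~\ref{prop:covariance_limit} and gives exponential rather than Markov-type tails for $\|\Delta\|_{\mathrm{op}}$, which is more than an $O_p$ claim needs. The rate $N/\sqrt d$ comes out the same either way, since both give $\|\Delta\|_{\mathrm F}^2=O_p(N^2/d)$.
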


\begin{proof}
By Pinsker's inequality, it is sufficient to consider the relative entropy between $P_m$ and~$ \widetilde P_m$.
Set
$
\Sigma:=UU^\top, \
\mu_m:=Ux_m.
$
By the data-processing inequality for relative entropy, applied to the
coordinatewise sign map,
\[
D_{\mathrm{KL}}(P_m\|\widetilde P_m)
\le
D_{\mathrm{KL}}\!\left(
\mathcal N(\mu_m,\sigma^2\Sigma)
\middle\|
\mathcal N(\mu_m,\sigma^2I_N)
\right).
\]
Since the two Gaussian distributions have the same mean,
\[
D_{\mathrm{KL}}\!\left(
\mathcal N(\mu_m,\sigma^2\Sigma)
\middle\|
\mathcal N(\mu_m,\sigma^2I_N)
\right)
=
\frac12
\left[
\operatorname{tr}(\Sigma)-N-\log\det\Sigma
\right] =  - \frac12 \log\det\Sigma,
\]
since every diagonal entry of \(\Sigma\) equals one, and therefore
\(\operatorname{tr}(\Sigma)=N\). 

Let $\Delta : = \Sigma - I_N$.
By Proposition~\ref{prop:covariance_limit}, in this regime we have
$
\|\Delta\|_{\mathrm{op}}
\overset{P}{\longrightarrow}0,
$
so we can apply a relevant Taylor expansion to the log-determinant term,
\[
\log\det \Sigma =
\log\det(I_N+\Delta)
=
\operatorname{tr}(\Delta)
-\frac12\operatorname{tr}(\Delta^2)
+
O\left(
\|\Delta\|_{\mathrm{op}}\|\Delta\|_{\mathrm F}^2
\right),
\]
which follows by applying the Taylor expansion of $\log (1+x)$ 
to the eigenvalues of \(\Delta\).
Since
$
\operatorname{tr}(\Delta)
=
\operatorname{tr}(\Sigma)-N
=
0
$,
and \(\Delta\) is symmetric, we obtain,
\[
D_{\mathrm{KL}}
\left(
\mathcal N(\mu_m,\sigma^2\Sigma)
\,\middle\|\,
\mathcal N(\mu_m,\sigma^2I_N)
\right)
=
\frac14\|\Delta\|_{\mathrm F}^2
+
O\left(
\|\Delta\|_{\mathrm{op}}\|\Delta\|_{\mathrm F}^2
\right).
\]
Moreover, by Proposition~\ref{prop:covariance_limit}, we can bound the Frobenius norm as,
\[
\|\Delta\|_{\mathrm F}^2
\le
N\|\Delta\|_{\mathrm{op}}^2
=
O_p\left(\frac{N^2}{d}\right),
\]
and since \(\|\Delta\|_{\mathrm{op}}=o_p(1)\), the remainder is of smaller
order, so,
\[
D_{\mathrm{KL}}(P_m\|\widetilde P_m)
\le
D_{\mathrm{KL}}
\left(
\mathcal N(\mu_m,\sigma^2\Sigma)
\,\middle\|\,
\mathcal N(\mu_m,\sigma^2I_N)
\right)
=
O_p\left(\frac{N^2}{d}\right).
\]
By Pinsker's inequality, we finally get,
\[
d_{\mathrm{TV}}(P_m,\widetilde P_m)
\le
\sqrt{\frac12D_{\mathrm{KL}}(P_m\|\widetilde P_m)}
=
O_p\left(\frac{N}{\sqrt d}\right).
\]
\end{proof}

\begin{corollary}
\label{cor:pml_optimality}
Let \(P_e^{\mathrm{ML}}\) and \(P_e^{\mathrm{pML}}\) denote the error
probabilities of the true-ML and pseudo-ML decoders, respectively. If $N^2/d \to 0$, then,
\[
0
\le
P_e^{\mathrm{pML}}-P_e^{\mathrm{ML}}
=
O_p\left(\frac{N}{\sqrt d}\right)
\overset{P}{\longrightarrow}0.
\]
\end{corollary}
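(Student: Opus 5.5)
The plan is to compare the two decoders under the true observation model and to pay for the mismatch with the total-variation bound of Proposition~\ref{prop:block_likelihood_limit}. Condition on the geometry, i.e.\ on $x_1^K$ and $U$, and take the transmitted class $a$ uniform on $\{1,\dots,K\}$, as in the definition of $\Pr(E)$. The lower bound $P_e^{\mathrm{pML}}-P_e^{\mathrm{ML}}\ge 0$ is immediate. Under a uniform prior, the true ML rule $\arg\max_m P_m(B)$ is the MAP rule for the true model, so it minimizes the conditional error probability among all decoders $\{\pm1\}^N\to\{1,\dots,K\}$. The pseudo-ML rule is one such decoder, since it depends only on $B$ and the geometry, so its conditional error can only be larger. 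Taking expectations over the geometry preserves the inequality.

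For the upper bound I will use a two-model sandwich. Let $\widetilde{\Pr}$ denote the error probability computed as if $B$ were drawn from $\widetilde P_a$ rather than $P_a$. By construction, the pseudo-ML rule is exactly the MAP rule under the model $\{\widetilde P_m\}$, so it minimizes $\widetilde{\Pr}$ over all decoders. This gives the chain
\[
P_e^{\mathrm{pML}}(P)\le \widetilde{P}_e^{\mathrm{pML}}+\max_m d_{\mathrm{TV}}(P_m,\widetilde P_m)\le \widetilde{P}_e^{\mathrm{ML}}+\max_m d_{\mathrm{TV}}(P_m,\widetilde P_m)\le P_e^{\mathrm{ML}}+2\max_m d_{\mathrm{TV}}(P_m,\widetilde P_m).
\]
Each step is justified as follows.
\begin{itemize}
\item The first inequality is the decoder-agnostic bound displayed before Proposition~\ref{prop:block_likelihood_limit}, applied for each fixed $a$ and then averaged over $a$.
\item The second inequality uses the optimality of pseudo-ML under the $\widetilde P$ model, with the true-ML rule viewed as a competitor decoder.
\item The third inequality applies the total-variation bound once more, this time to the fixed true-ML rule.
\end{itemize}
Hence, conditionally on the geometry, $0\le P_e^{\mathrm{pML}}-P_e^{\mathrm{ML}}\le 2\max_m d_{\mathrm{TV}}(P_m,\widetilde P_m)$.

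The remaining step is to control $\max_m d_{\mathrm{TV}}(P_m,\widetilde P_m)$ uniformly in $m$. The key observation is that the bound in the proof of Proposition~\ref{prop:block_likelihood_limit} does not depend on $m$ at all. The KL upper bound $-\tfrac12\log\det\Sigma$ involves only $\Sigma=UU^\top$ and not the mean $\mu_m=Ux_m$. Consequently the same $O_p(N/\sqrt d)$ bound, $\sqrt{-\tfrac14\log\det\Sigma}$, holds simultaneously for all $m$, and no union bound over $K$ classes is needed. This yields the conditional statement with an $O_p(N/\sqrt d)$ quantity.

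The main subtlety, which is where I expect to be most careful, is deciding whether the statement concerns the conditional error probabilities, which are random through $U$ (hence the $O_p$), or the unconditional ones. If the unconditional version is wanted, I will average over the geometry. The difference is bounded by $1$, and the conditional bound is $O_p(N/\sqrt d)\to0$, so the averaged difference also tends to $0$ by bounded convergence. If one wants the sharper $O(N/\sqrt d)$ in expectation, I will instead bound $\mathbb E\|UU^\top-I_N\|_{\mathrm{op}}$ directly using the expectation form of the Gram-matrix concentration result cited in Proposition~\ref{prop:covariance_limit}, and handle separately the low-probability event on which $\|\Delta\|_{\mathrm{op}}$ is not small, so that the Taylor expansion of the log-determinant remains valid.
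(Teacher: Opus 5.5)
Your proposal is correct and follows the same route as the paper: optimality of true ML under $P_m$, optimality of pseudo-ML under $\widetilde P_m$, and the decoder-agnostic total-variation bound applied twice. It also spells out what the paper's one-line proof leaves implicit: the factor of $2$, the uniform prior on $a$ needed for the lower bound, and the fact that the KL bound $-\tfrac12\log\det(UU^\top)$ does not depend on $m$, so the $O_p(N/\sqrt d)$ control holds uniformly over all classes.
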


\begin{proof}
The result follows immediately from the optimality of the true-ML decoder under
\(P_m\), the optimality of pseudo-ML decoder under
\(\widetilde P_m\), and the fact that the error probability of any fixed
decoder changes by at most the total-variation distance between the two
observation laws.  
\end{proof}

\paragraph{Conclusion.}
Hence, $ d \gg N ^2$ provides a strong sufficient condition, ensuring convergence, in total variation, of the  full $N$-bit likelihood to its orthogonal approximation. In this high-dimensional regime, the proposed pseudo-ML decoder, that does not take into account correlations among the coordinate bitflips, is proven to be essentially optimal, while remaining computationally efficient. 
More generally, the detailed orthogonal-hyperplane analysis conducted in the previous section becomes asymptotically
relevant in this high-dimensional regime, including the corresponding
error exponents,  achievability conditions and so on. 

In the following section, we conduct simulation experiments to validate the above theoretical findings. In practice, it turns out that the condition $ d \gg N^2 $ appears to be  conservative, in the sense that pseudo-ML decoding can perform nearly as well as true-ML at much
lower dimensions. 
In fact, as a general rule of thumb, it appears that having a number of dimensions comparable to the number of hyperplanes seems to be enough for the two decoders to achieve similar error probabilities.
These empirical findings are illustrated in detail in the next section.

\subsection{Simulation results}

\vspace*{-0.1  cm}

In this section, we conduct simulation experiments to validate the preceding theoretical results, investigate the empirical performance of Hamming and pseudo-ML decoding, and obtain the practical high-dimensional regime in which pseudo-ML essentially performs as true-ML. All Hamming and pseudo-ML results are obtained from $10^4$ repetitions of the experiment. For true-ML, in order to compute the high-dimensional Gaussian orthant probabilities we use the Geweke-Hajivassiliou-Keane (GHK) algorithm~\cite{genz1992numerical,geweke1991efficient,hajivassiliou1996simulation}; 
see also our relevant discussion in Section~\ref{s:iid_ml}. Since this computation quickly becomes excessively expensive, we evaluate the performance of true-ML up to
\(N\approx30\)--\(40\), using \(10^3\) repetitions of each experiment.

In Figure~\ref{fig:iid}, the experimental results are presented for  a setting with \(K=32\) classes in three
representative dimensional regimes: \(d=5\), \(d=20\), and \(d=100\),
corresponding respectively to low, intermediate, and high dimensions.
Overall, the results provide empirical validation of the theoretical
predictions of the previous sections,
and strongly support the use of the computationally efficient pseudo-ML decoder.

\begin{figure}[h!]
\vspace*{-0.2  cm}
    \centering
    \includegraphics[width=0.48\linewidth]{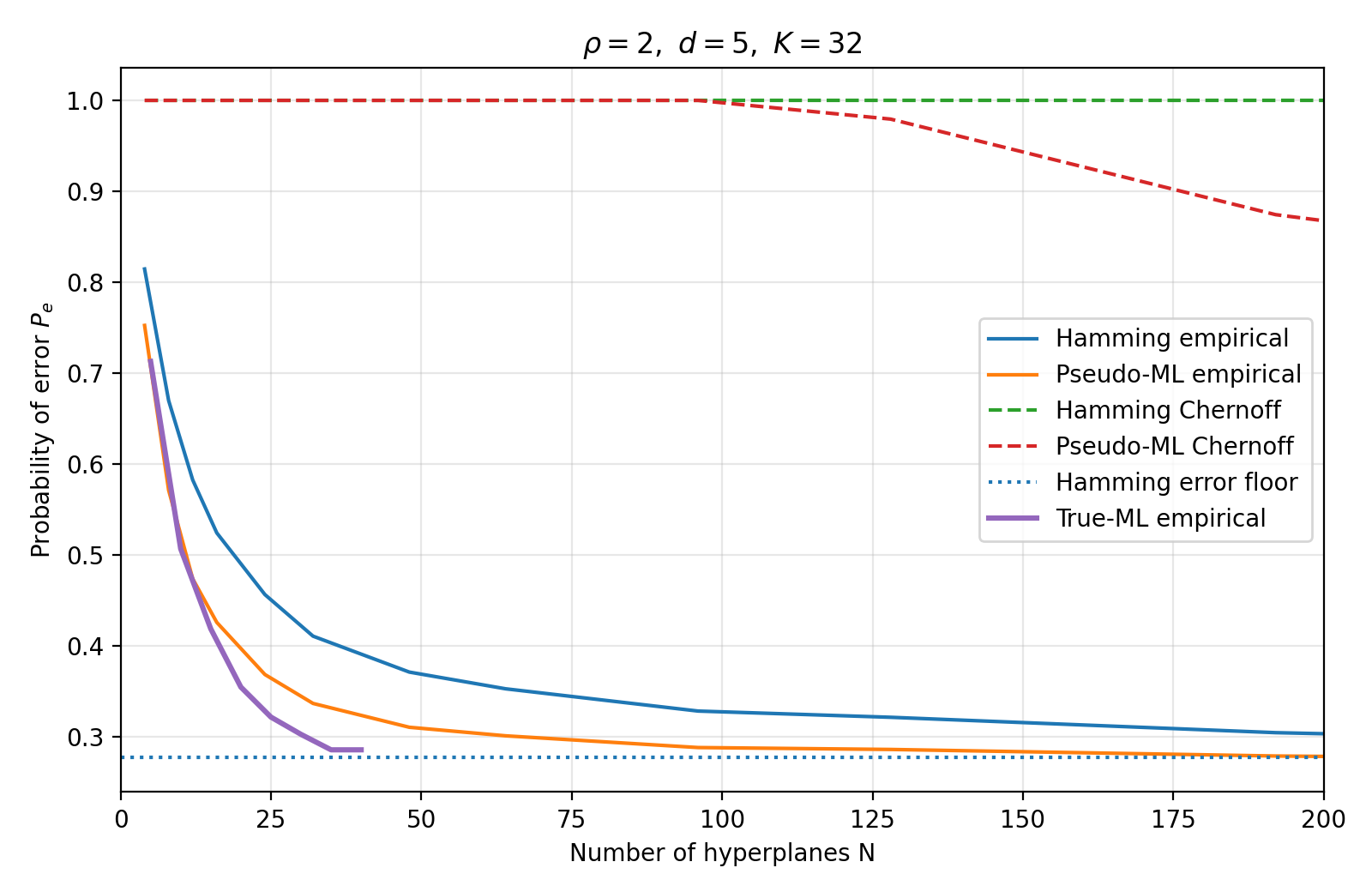}
    ~
        \includegraphics[width=0.48\linewidth]{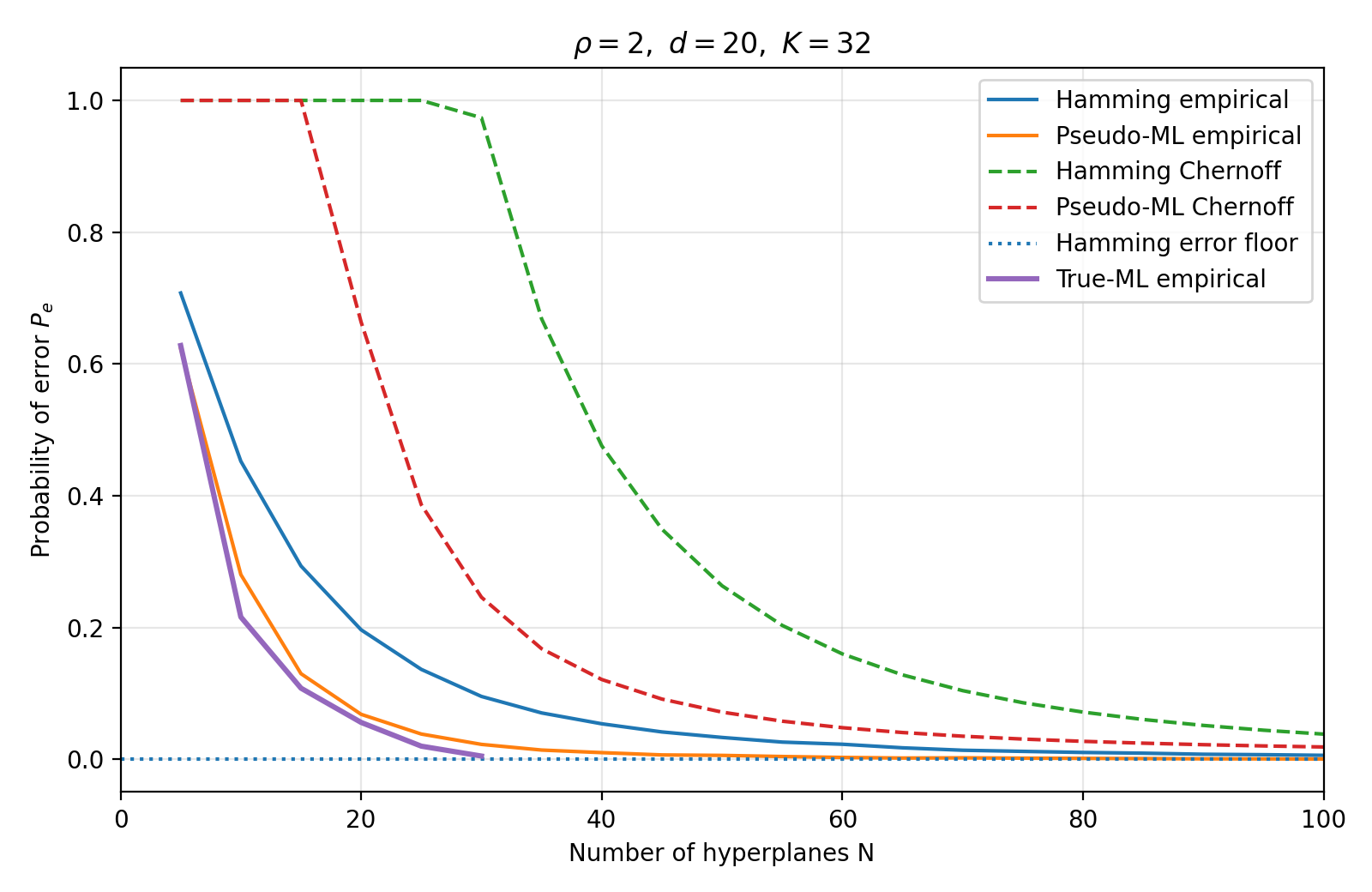}
    ~
        \includegraphics[width=0.48\linewidth]{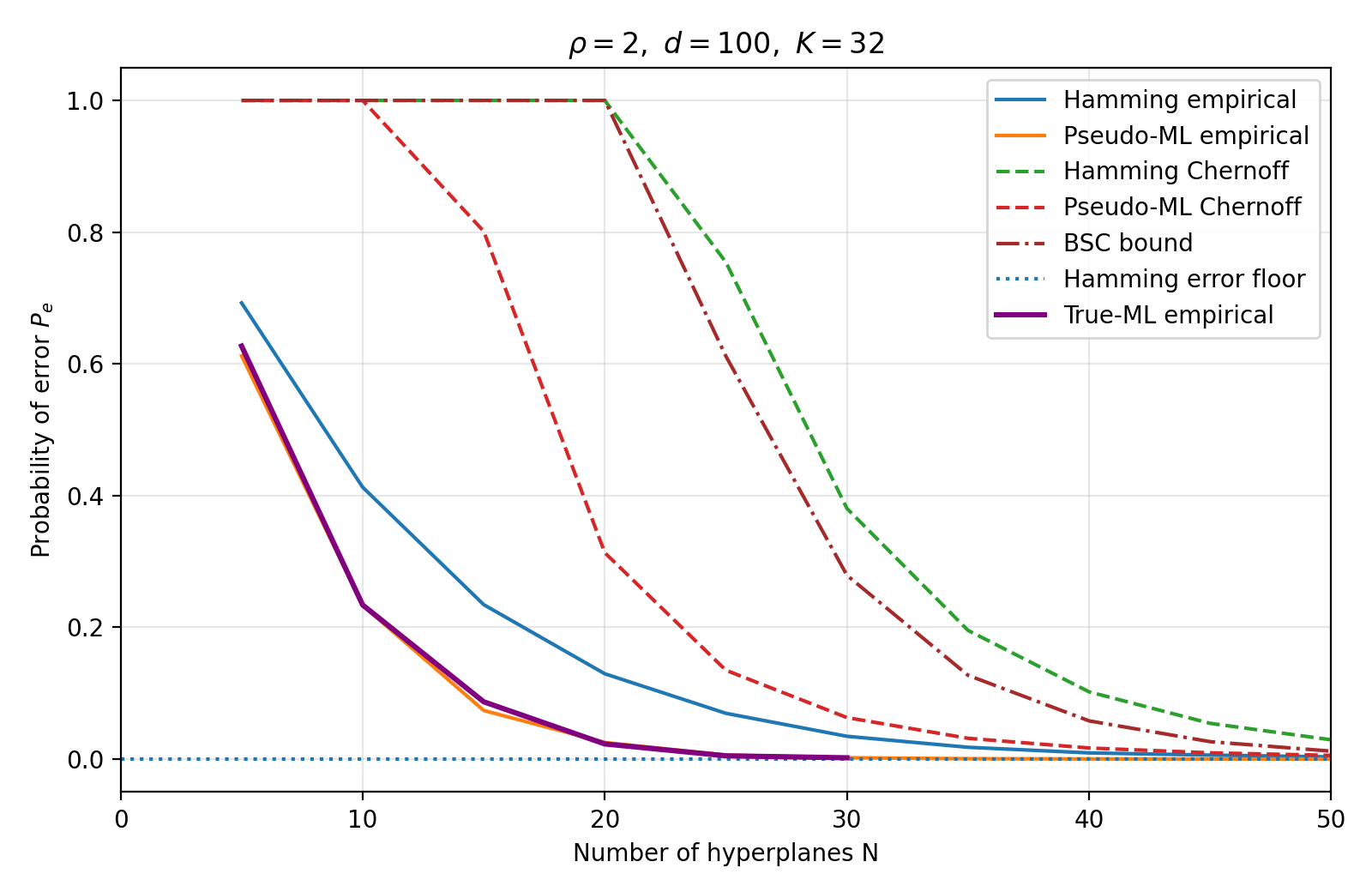}
        \vspace*{-0.3  cm}
    \caption{Probability of error as a function of the number of hyperplanes. Top Left: $d=5$,  Top Right: $d=20$, Bottom: $d=100$.}
    \label{fig:iid}
\end{figure}

Figure~\ref{fig:iid} clearly illustrates that pseudo-ML can in practice perform nearly  as well as true-ML even in moderately low dimensions, despite being computationally much more efficient. 
In particular, the results  suggest that
pseudo-ML
closely tracks the performance of true-ML even when the ambient dimension is
only of the same order as the number of hyperplanes. Specifically, only in the very low-dimensional example  $d=5$ there seems to be an apparent gap between their corresponding error curves, and this happens only after  $N \ge15$.  In moderate and high dimensions where  $d=20$ and $d=100$, pseudo-ML essentially achieves identical performance with true-ML all the way up to $N=30$, where both decoders seem to converge to their limiting probability of error. 
Thus, the theoretical
condition \(d\gg N^2\), which guarantees convergence of the full-block
likelihood in total variation, appears to be conservative in practice, and pseudo-ML decoding can perform essentially as true-ML at much lower dimensions.

As predicted from Proposition~\ref{prop:angular_floor}, as $N$ grows large, the probability of error of Hamming decoding converges to its derived limit in~(\ref{eq:lim_error}), which is also plotted in Figure~\ref{fig:iid}. In addition, it is observed that this limiting error floor decreases rapidly with
the ambient dimension, with the simulations exhibiting the exponential decay
predicted by the theory. 
The high-dimensional BSC limit of Hamming decoding is also empirically validated: For $d=100$, we plot the BSC error bound from~(\ref{eq:orth_ub}), which closely matches the corresponding Chernoff bound~(\ref{eq:iid_ham_bound}), therefore verifying the results of Lemma~\ref{lem:B_high_dim}. 
The fact that across all tested regimes, Hamming decoding
performs substantially worse than pseudo-ML, illustrates the
importance of exploiting the heterogeneous coordinate reliabilities, and shows that treating all binary decisions with equal weight can be significantly suboptimal in general classification problems.

In all the above, we have again focused on the representative case $\rho = 2$, corresponding to a practically relevant moderate-SNR regime. Additional experiments for smaller and larger SNR values can be found in Appendix~\ref{app:sims},
with the corresponding plots essentially displaying the same qualitative behavior. As expected, the main effect of increasing \(\rho\) is that it shifts the error curves towards
smaller values of $N$ and it lowers the corresponding limiting error floor (for a certain value of $d$);
while decreasing \(\rho\) has exactly the opposite effect.

\section{Concluding remarks}

We studied a distributed formulation of multiclass classification, motivated by
settings in which a population of agents, each implementing a lightweight
classifier trained on limited local data, cooperates to produce an
accurate multiclass decision.
To this end, we proposed a random hyperplanes classification scheme and derived explicit bounds for its performance in a variety of settings.
In a noiseless Gaussian setting, we showed that, in sufficiently high dimensions, 
$N = 2\log _2 K$ random hyperplanes are required to separate the $K$ class centers, which is within a factor of two of the information-theoretic minimum. Also, comparing with a natural ECOC alternative, we concluded that the dimensionality requirements of the proposed scheme are much lower, requiring a dimension that scales only logarithmically with $K$.

In a more realistic setting where observations are corrupted by Gaussian noise, we again derived explicit performance bounds for the random hyperplanes classifier. It was shown that in sufficiently high dimensions, $N = O(\log K )$ hyperplanes are again sufficient, with a multiplicative constant that depends on the signal to noise ratio and the decoding mechanism. In specific, it was shown that Hamming decoding, which assigns equal weight to all binary decisions, can be significantly suboptimal compared to reliability-aware decoders. In this view, we proposed a computationally efficient approximate maximum likelihood algorithm, that is both theoretically justified and was found to perform nearly optimally in practice.

Closing, we note a few possible directions for future work. First, it is natural to study an even more realistic
version of the problem, where the class centers are not directly observed and need to be estimated from the data. Also, one can
move beyond hyperplanes and consider more flexible base classifiers at the
local agents. Finally, relaxing the Gaussian
assumptions would help identify which aspects of the theory persist for more
general data distributions.
These extensions would greatly broaden the
scope and practical applicability of the proposed~framework.

\newpage

\bibliographystyle{plain}
\bibliography{refs2}

\newpage

\appendix

\begin{center}
{\huge
{\bf Appendix}\\
}
\end{center}

\section{Theoretical results} \label{app:proofs}

\subsection{Proof of Proposition \ref{prop:dim}}

Because the Gaussian model is rotationally invariant, we may rotate coordinates so that $e_1$ is aligned with the true center $x_a$, i.e.,
$
e_1 = {x_a}/{\|x_a\|}.
$
In these coordinates we may write,
\[
x_a=\tau R e_1,
\]
with $R\sim \chi_d$
since \(x_a\sim \mathcal N(0,\tau^2 I_d)\), where $\chi _ d$ denotes the chi distribution with $d$ degrees of freedom.
 Now we can decompose the noise \(z\) into its component parallel to \(x_a\) and its orthogonal component:
\[
z
=
\sigma G e_1+\sigma g_\perp,
\]
where
$
G\sim\mathcal N(0,1),
$ and $
g_\perp\sim\mathcal N(0,I_{d-1})
$
in the subspace orthogonal to \(e_1\). If $
S:=\|g_\perp\|,
$
then,
$
S\sim \chi_{d-1},
$
and $R,G,S$ are
 independent.
With \(\rho=\tau/\sigma\), we have,
\[
y=x_a+z
=
\tau R e_1+\sigma G e_1+\sigma g_\perp
=
\sigma\left((\rho R+G)e_1+g_\perp\right).
\]
Therefore,
\[
\cos\alpha
=
\frac{x_a^\top y}{\|x_a\|\|y\|}
=
\frac{\rho R+G}
{\sqrt{(\rho R+G)^2+\|g_\perp\|^2}}
\]
Now, condition on \(\alpha\), or equivalently on the triple $(R,G,S)$. Since \(x_b\) is an independent isotropic Gaussian vector, its direction
$
V:={x_b}/{\|x_b\|}
$
is uniform on \(S^{d-1}\) and is independent of \((R,G,S)\). 
After conditioning on \(\alpha\), we may place \(x_a\) and \(y\) in the plane spanned by \(e_1,e_2\), so that:
\[
\frac{y}{\|y\|}
=
\cos\alpha\,e_1+\sin\alpha\,e_2.
\]
Therefore, for \(V\sim\mathrm{Unif}(S^{d-1})\),
\[
\cos\theta
=
V^\top e_1,
\quad 
\cos\beta
=
V^\top
(\cos\alpha\,e_1+\sin\alpha\,e_2).
\]
Therefore, we finally get,
\begin{equation*}
\mathcal B_{d,\rho}(N)
=
\mathbb E_{R,G,S}
\left[
\mathbb E_{V\sim\mathrm{Unif}(S^{d-1})}
\left[
e^{-N E_{\mathrm{iid}}(\theta,\alpha,\beta}
\right]
\right],
\end{equation*}
where
$
R\sim\chi_d, \
G\sim\mathcal N(0,1), \ 
S\sim\chi_{d-1}, \
V\sim\mathrm{Unif}(S^{d-1})
$
are independent, and,
\[
\alpha
=
\arccos\left(
\frac{\rho R+G}
{\sqrt{(\rho R+G)^2+S^2}}
\right),
\quad 
\theta=\arccos(V^\top e_1),
\quad 
\beta
=
\arccos\left(
V^\top(\cos\alpha\,e_1+\sin\alpha\,e_2)
\right).
\]
The final expression follows by observing that only the first two components of \(V\) are needed. If we denote 
$
V_1:=V^\top e_1,
\
V_2:=V^\top e_2,
$
then,
$
\cos\theta=V_1,
\
\cos\beta
=
V_1\cos\alpha+V_2\sin\alpha.
$
For $d=2$, $V$ is uniform on the circle, and for \(d>2\), the joint density of the first two components \((V_1,V_2)\) is available in closed form, as, see, e.g.~\cite{diaconis1987dozen},
\begin{equation*}
f_d(v_1,v_2)
=
\frac{d-2}{2\pi}
(1-v_1^2-v_2^2)^{(d-4)/2},
\qquad
v_1^2+v_2^2\le 1. \qed
\end{equation*}

\subsection{Proof of Proposition \ref{prop:angular_floor}}

As described in the main text, the limiting probability of error is given by,
\[
P_e^{\infty}
=
\Pr\left(
\min_{b\neq a}\beta_b\le\alpha
\right).
\]
It remains to evaluate this probability. For all competitors $b \neq a$, define their corresponding directions $
V_b={x_b}/{\|x_b\|},
$
which are independent and uniform on the sphere~
\(S^{d-1}\). Let $V_1:=V^\top e_1$ denote the first component of $V$, let $ \Phi:=\angle(V,e_1) = \arccos(V_1) $ denote the angle between $V$ and $e_1$, and let $F_d(\phi)$ denote its cumulative distribution function,
\[
F_d(\phi)
:=
\Pr\bigl(\arccos V_1\le  \phi \bigr),
\qquad
V\sim\operatorname{Unif}(S^{d-1}).
\]
Then, by rotational invariance we get,
\[
\Pr(\beta_b\le\alpha\mid\alpha)
=
F_d(\alpha),
\]
and since the $V_b$'s are conditionally independent given $\alpha$, we get the order-statistics expression,
\begin{equation}
\label{eq:floor_cap_intermediate}
\Pr\left(
\min_{b\neq a}\beta_b\le\alpha
\right)
=
\mathbb E_\alpha
\left[
1-\bigl(1-F_d(\alpha)\bigr)^{K-1}
\right].
\end{equation}
Now, it remains to obtain an expression for $F_d$.
The first coordinate of a uniformly distributed point on \(S^{d-1}\) has
density,
\[
f_{V_1}(v)
=
\frac{\Gamma(d/2)}
{\sqrt{\pi}\,\Gamma((d-1)/2)}
(1-v^2)^{(d-3)/2},
\qquad -1<v<1,
\]
see, for example, \cite{diaconis1987dozen}.
By defining,
$
W:=\frac{1-V_1}{2},
$
the change of variables formula gives the distribution of $W$,
\[
W\sim
\operatorname{Beta}\left(
\frac{d-1}{2},
\frac{d-1}{2}
\right).
\]
Now,
\[
\angle(V,e_1)\le t
\quad\Longleftrightarrow\quad
V_1\ge\cos t
\quad\Longleftrightarrow\quad
W\le\frac{1-\cos t}{2}.
\]
Since the CDF of the Beta distribution is the regularized incomplete beta function, we get,
\[
F_d( \phi)
=
I_{\frac{1-\cos \phi}{2}}
\left(
\frac{d-1}{2},
\frac{d-1}{2}
\right),
\]
and since,
$
1-I_x(a,a)=I_{1-x}(a,a),
$
we finally get,
\[
1-F_d(\phi)
=
I_{\frac{1+\cos \phi}{2}}
\left(
\frac{d-1}{2},
\frac{d-1}{2}
\right).
\]
Substituting with $\phi= \alpha$ in~\eqref{eq:floor_cap_intermediate} completes the proof, since the unconditional distribution of $\alpha$ is given as before in Proposition~\ref{prop:dim}. \qed



\subsection{Chernoff bound for pseudo-ML decoder}

For one hyperplane \(u\), define,
\[
B(u):=\operatorname{sign}(u^\top y), \quad 
C_m(u):=\operatorname{sign}(u^\top x_m), \quad
p_m(u):=
Q\left(
\frac{|u^\top x_m|}{\sigma}
\right).
\]
The coordinate-wise pseudo-log-likelihood score for candidate \(m\) is,
\[
 \widetilde l_m(u)
=
\mathbf 1\{B(u)=C_m(u)\}\log(1-p_m(u))
+
\mathbf 1\{B(u)\neq C_m(u)\}\log p_m(u).
\]
For a competitor $b \neq a$, define the one-hyperplane score difference,
$
\Delta_{ab}(u)
:=
\widetilde  l _b(u)- \widetilde  l _a(u).
$
Conditional on \(x_a,x_b,z\), the random variables
$
\Delta_{ab}(u_1),\dots,\Delta_{ab}(u_N)
$
are i.i.d. since the hyperplanes \(u_i\) are i.i.d.
For
any \(s \ge 0\), Chernoff's bound gives the pairwise error bound,
\[
\Pr\left(
\widetilde P_b(B)\ge \widetilde P_a(B)
\,\middle|\,
x_a,x_b,z
\right)
\le
\left(
\mathbb E_u
\left[
e^{s\Delta _{ab}(u)}
\,\middle|\,
x_a,x_b,z
\right]
\right)^N.
\]
By defining the geometry-dependent pseudo-ML error exponent,
\begin{equation} \label{eq:epml}
E_{\mathrm{pML}}(x_a,x_b,z)
:=
-\inf_{s\ge0}\log
\mathbb E_u
\left[
e^{s\Delta _{ab}(u)}
\,\middle|\,
x_a,x_b,z
\right].
\end{equation}
a union bound over the competitors yields,
\[
\Pr\left(
E
\,\middle|\,
x_1^K,z
\right)
\le
\sum_{b\neq a}
e^{-N E_{\mathrm{pML}}(x_a,x_b,z)},
\]
and by symmetry, averaging over the random centers $x_1 ^N$ and noise $z$, finally gives,
\[
\Pr (E)
\le
(K-1) \ 
\mathbb E_{x_a,x_b,z}
\left[
e^{-N E_{\mathrm{pML}}(x_a,x_b,z)}
\right].
\]
Hence in this case the error exponent $E_{\mathrm{pML}}(x_a,x_b,z)$ does not reduce to a closed-form
function of the angles \((\theta_b,\alpha,\beta _ b)\), since \(\Delta_{ab}(u)\) also depends
on the realized distances
\(
|u^\top x_a|,
|u^\top x_b|.
\)
So, in order to numerically evaluate the bound, we need a nested MC procedure, which requires to first obtain outer MC samples of the geometry $x_a,x_b,z$, and then for each such realization sample enough hyperplanes to estimate $E_{\mathrm{pML}}(x_a,x_b,z)$ from~(\ref{eq:epml}). This is not particularly helpful, since its computation cost becomes comparable with simulating the whole experiment.

\subsection{Order-statistics error formulas for random hyperplanes}

In order to tighten Chernoff analysis, it is possible to derive analogous order-statistics formulas for the random hyperplanes setting, avoiding the union bound over competitors. The analysis is similar with Section~\ref{s:ord_stats} for orthogonal hyperplanes, but unfortunately, the final expressions no longer simplify enough to be employed in an efficient Monte Carlo estimate. Indeed, computing the resulting estimates would require a computational cost comparable to simulating the whole experiment, meaning that the corresponding expressions are not particularly useful.

\paragraph{Hamming decoder.}
Denoting the conditional pairwise error probability,
\[
p^{\mathrm{Ham}}
:=
\Pr
\left(
d_H(B,C_b)\le d_H(B,C_a)
\,\middle|\,
x_a, z, u_1 ^N
\right),
\]
the exact Hamming error probability is given by,
\[
P_e^{\mathrm{Ham}}
=
\mathbb E_{x_a,z,u_1 ^ N}
\left[
1-
\left(
1-p^{\mathrm{Ham}}
\right)^{K-1}
\right],
\]
but this final expectation does not simplify enough to give an efficient Monte Carlo estimate.

\paragraph{Approximate pseudo-ML decoder.}
Denoting the conditional pairwise error probability,
\[
p^{\mathrm{pML}}
:=
\Pr
\left(
\widetilde P_b(B)\ge \widetilde P_a(B)
\,\middle|\,
x_a,z,u_1^N
\right),
\]
the exact pseudo-ML error probability is,
\[
P_e^{\mathrm{pML}}
=
\mathbb E_{x_a,z,u_1 ^ N}
\left[
1-
\left(
1-p^{\mathrm{pML}}
\right)^{K-1}
\right],
\]
but again this does not lead to an efficient Monte Carlo estimate.

\section{Monte Carlo sampler for Section~\ref{s:ord_ml}} \label{app:sampler}

In the orthogonal hyperplane setting, the ML order-statistic expression
in~(\ref{eq:orth_ml_ord}) can be estimated by Monte Carlo over the scalar score
distributions of \(L_a\) and \(L_b\).
First, generate \(T\) independent true-score samples, $L_a^{(1)},\dots,L_a^{(T)}$. For each 
sample $L_a ^{(t)}$, draw,
\[
Z_i\overset{\mathrm{iid}}{\sim}\mathcal N(0,1),
\qquad
p_i=Q(\rho |Z_i|),
\qquad i=1,\dots,N.
\]
Conditional on \(p_i\), set,
\[
S_i
=
\begin{cases}
\log p_i, & \text{with probability }p_i,\\
\log(1-p_i), & \text{with probability }1-p_i,
\end{cases}
\]
and compute,
\[
L_a ^{(t)}=\sum_{i=1}^N S_i.
\]
Independently, generate $M$ wrong-class score samples $L_b^{(1)},\dots,L_b^{(M)}$.
For
each sample $L_b ^{(m)}$,~draw,
\[
Z_i'\overset{\mathrm{iid}}{\sim}\mathcal N(0,1),
\qquad
q_i=Q(\rho |Z_i'|),
\qquad i=1,\dots,N.
\]
Conditional on \(q_i\), set,
\[
T_i
=
\begin{cases}
\log q_i, & \text{with probability }1/2,\\
\log(1-q_i), & \text{with probability }1/2,
\end{cases}
\]
and compute,
\[
L_b ^{(m)}=\sum_{i=1}^N T_i.
\]
After generating the $M$ wrong-class score samples $L_b^{(1)},\dots,L_b^{(M)}$, sort them.
For each true-score sample \(L_a^{(t)}\), estimate
$
p_t
=
\Pr(L_b\ge L_a^{(t)})
$,
by the empirical tail fraction,
\[
\widehat p_t
=
\frac1M
\sum_{\ell=1}^M
\mathbf 1\{L_b^{(\ell)}\ge L_a^{(t)}\}.
\]
The order-statistic Monte Carlo error estimate can then be computed as,
\[
\widehat P_e^{\mathrm{ML,orth}}
=
\frac1T
\sum_{t=1}^T
\left[
1-
(1-\widehat p_t)^{K-1}
\right].
\]

\section{Simulation results} \label{app:sims}

\subsection{Orthogonal hyperplanes with noise}

\begin{figure}[h!]
    \centering
    \includegraphics[width=0.48\linewidth]{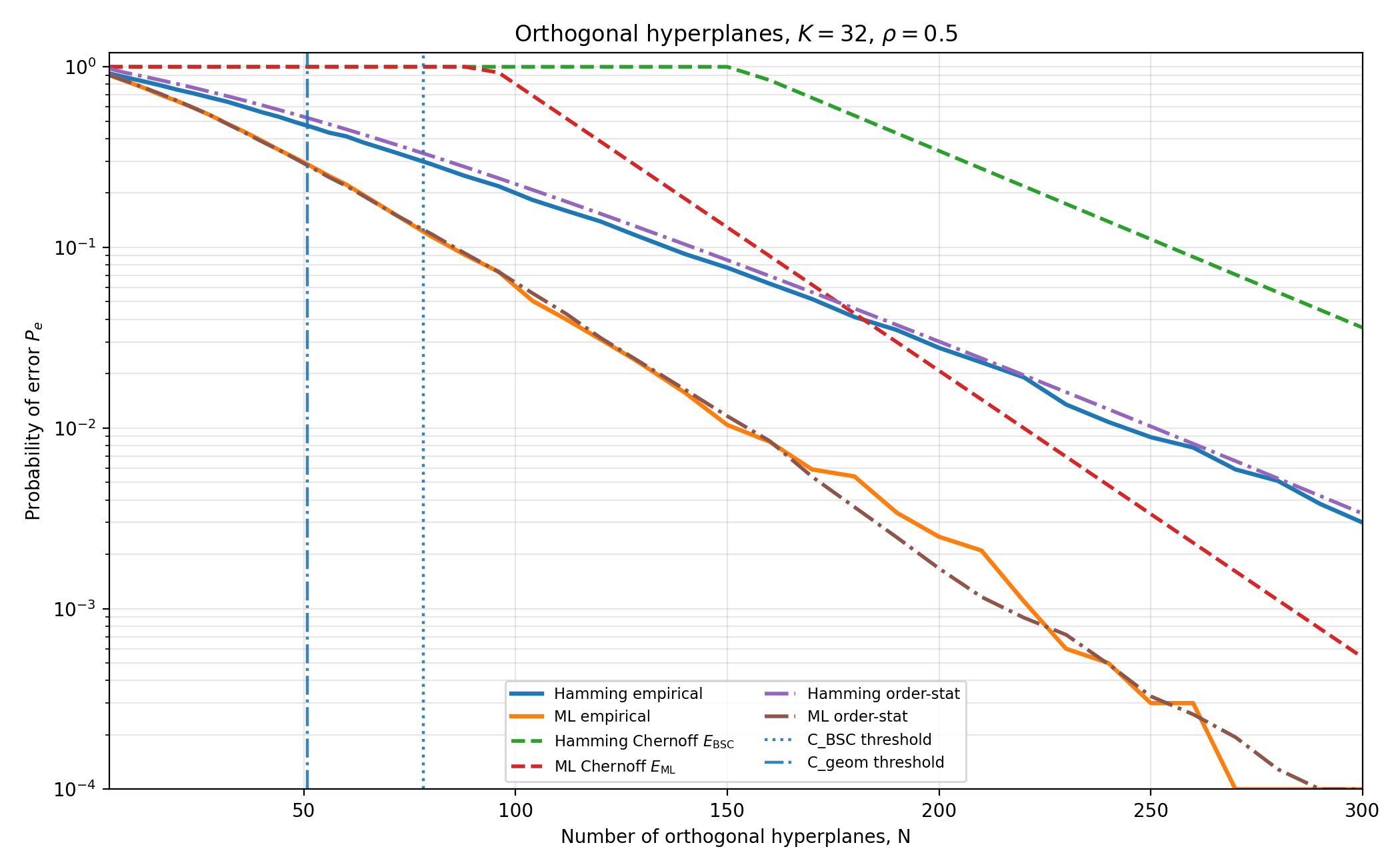}
    ~
        \includegraphics[width=0.48\linewidth]{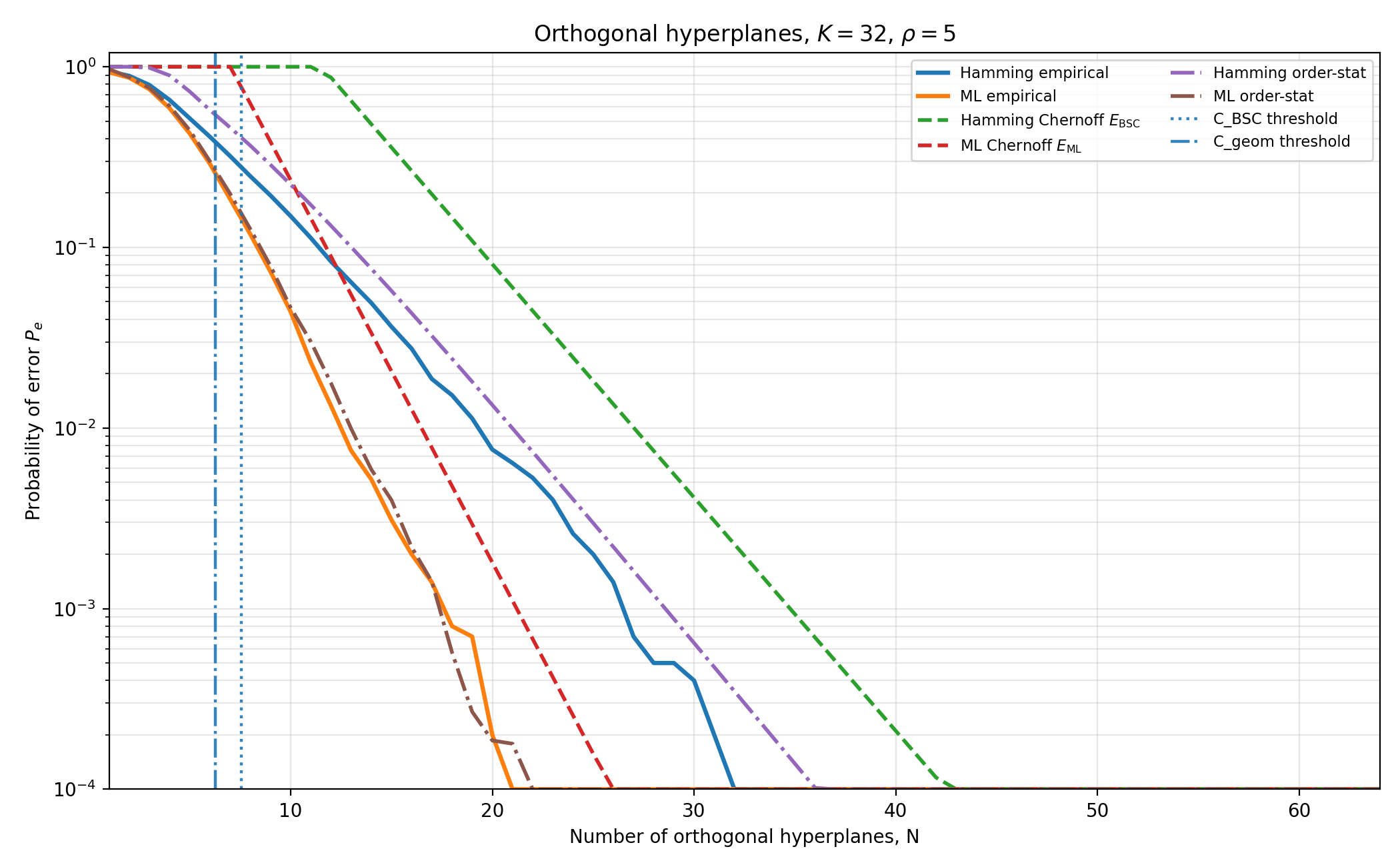}
    ~
        \includegraphics[width=0.48\linewidth]{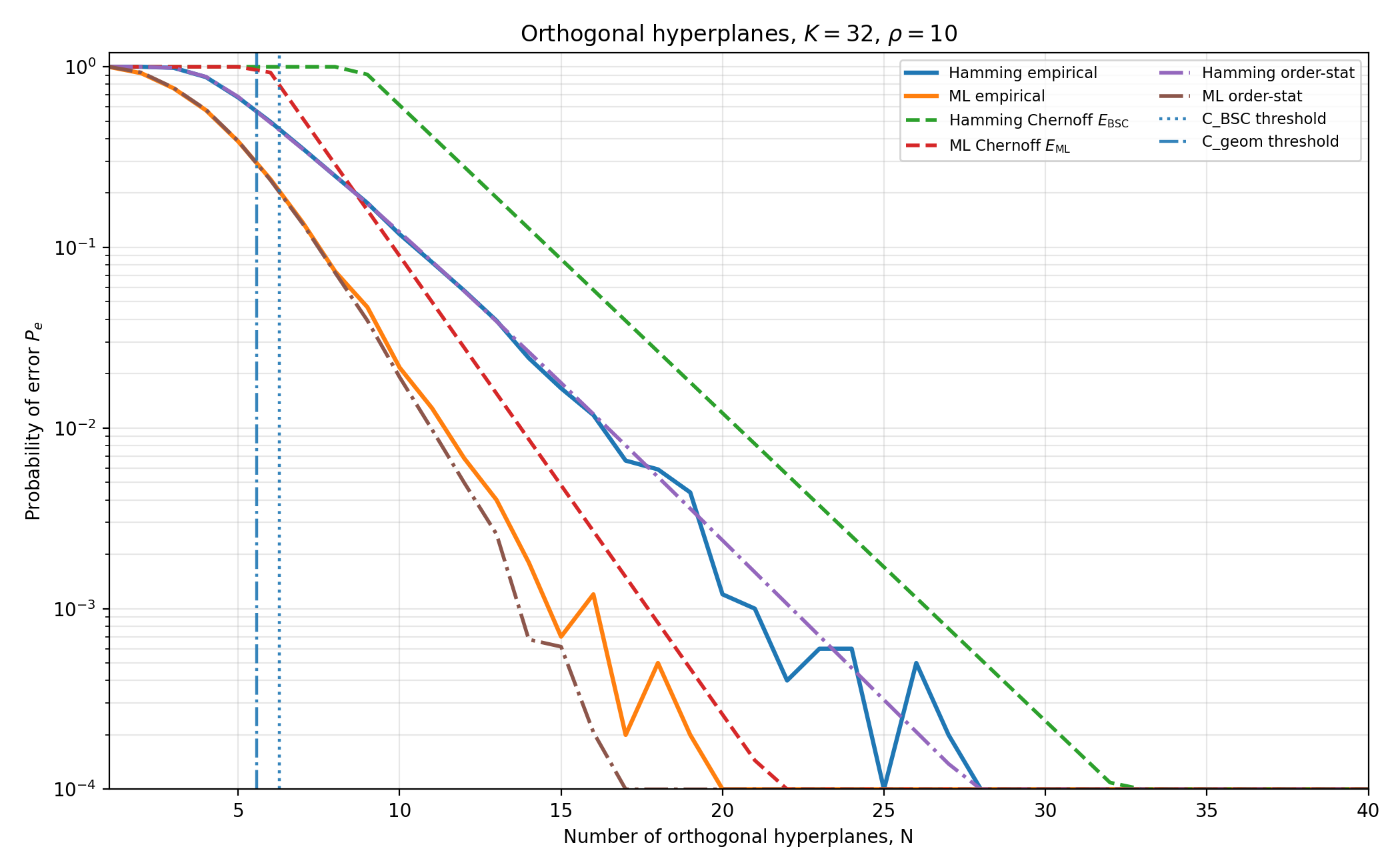}
    \caption{Probability of error as a function of the number of orthogonal hyperplanes. Top Left: $\rho=0.5$,  Top Right: $\rho=5$, Bottom: $\rho = 10$.}
    \label{fig:orth_app}
\end{figure}

\subsection{Random hyperplanes with noise}

\begin{figure}[h!]
    \centering
    \includegraphics[width=0.48\linewidth]{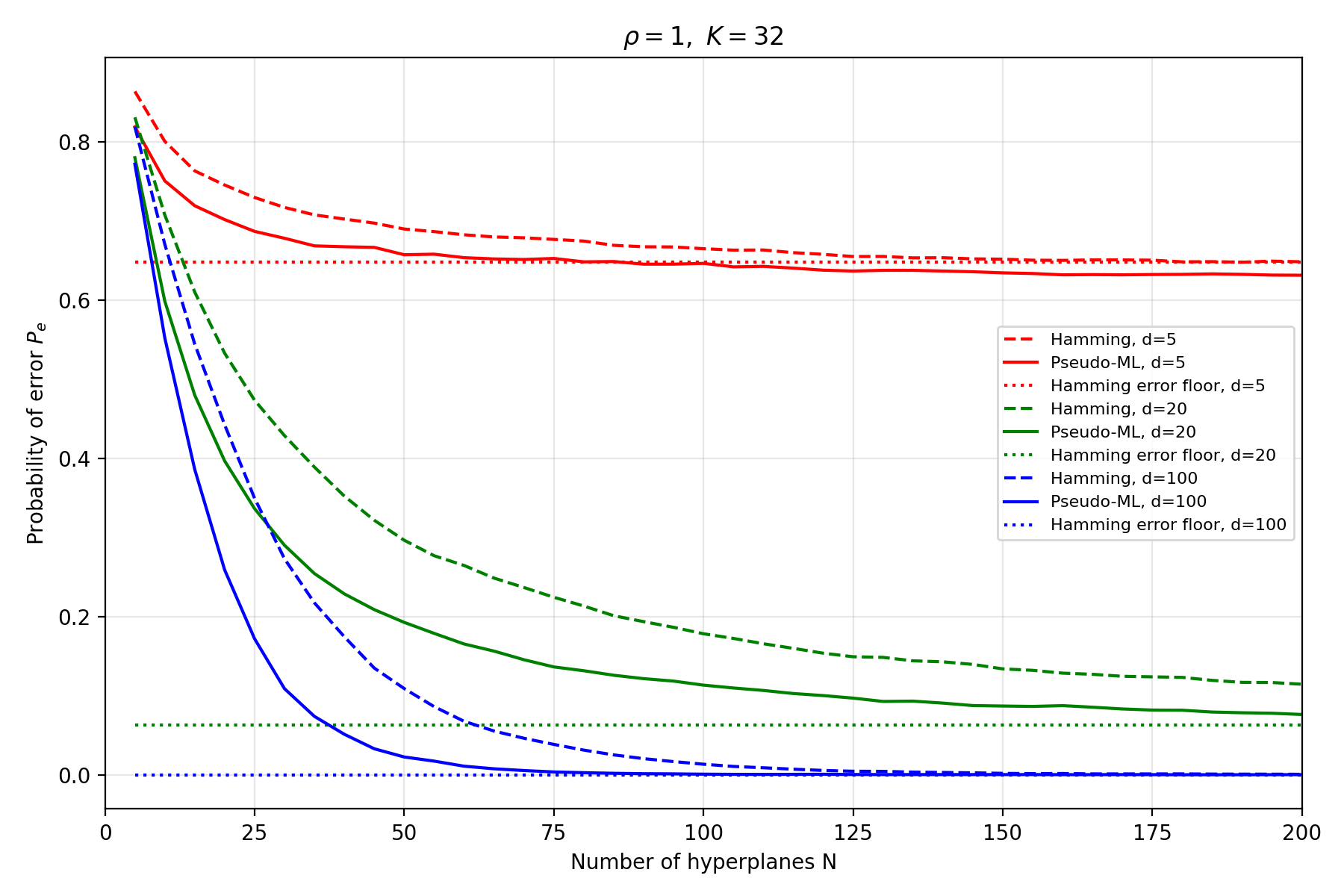}
    ~
        \includegraphics[width=0.48\linewidth]{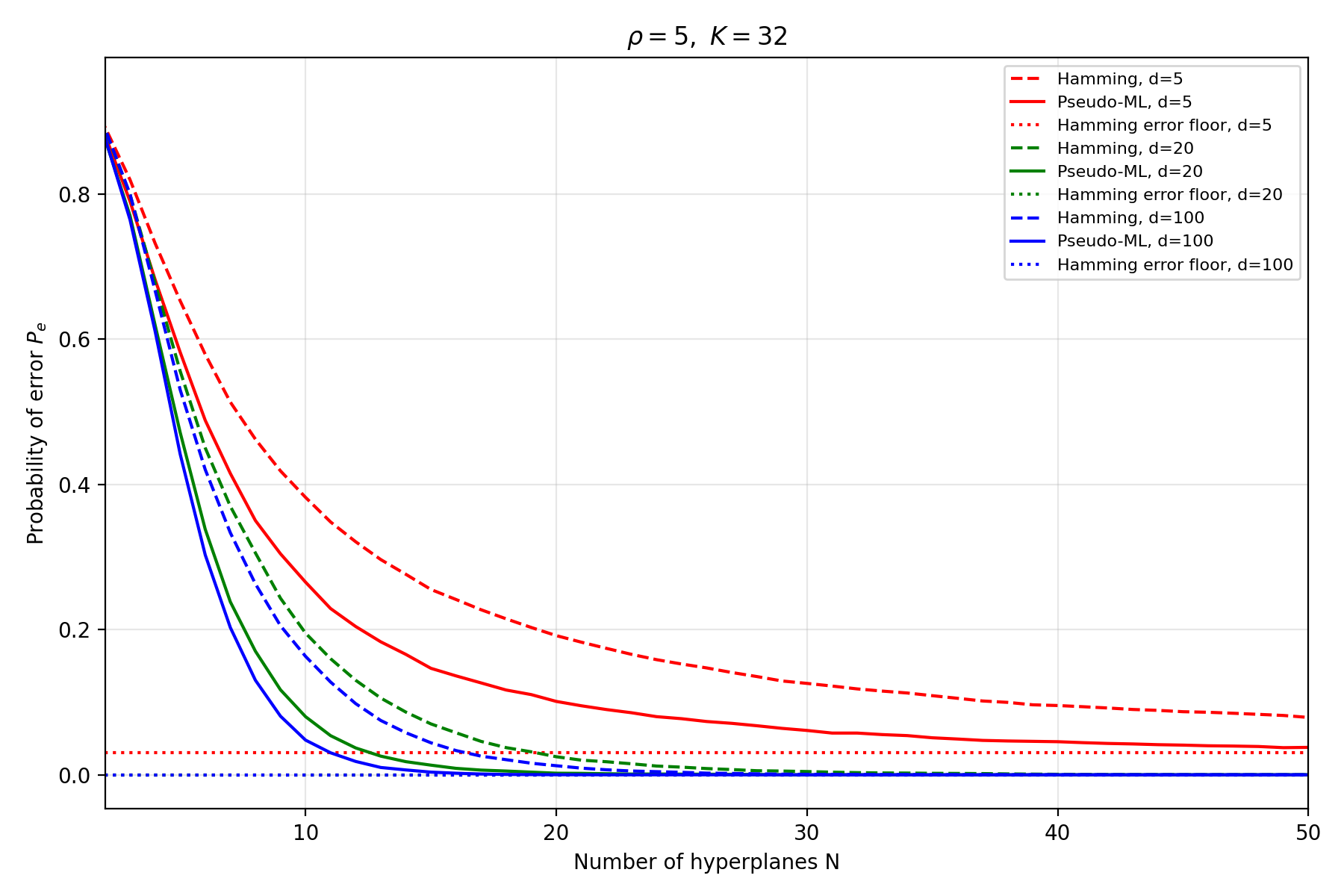}
    \caption{Probability of error as a function of the number of random  hyperplanes.  Left: $\rho=1$,  Right: $\rho=5$.}
    \label{fig:iid_app}
\end{figure}

\end{document}